\DeclareTextSymbolDefault{\textquotedbl}{T1}
\providecommand{\tabularnewline}{\\}
\newtheorem{thm}{Theorem}
\newtheorem{assum}{Assumption}
\newtheorem{lemma}{Lemma}
\newtheorem{definition}{Definition}
\newtheorem{corollary}{Corollary}
\newcommand{\RR}{\mathbb{R}}
\newcommand{\calL}{\mathcal{L}}
\newcommand{\eL}{\widehat{\mathcal{L}}_n}
\newcommand{\eR}{\widehat{\mathcal{R}}_n}
\newtheorem{remark}{Remark}
\def\eqref#1{equation~\ref{#1}}
\def\1{\bm{1}}
\DeclareMathAlphabet{\mathsfit}{\encodingdefault}{\sfdefault}{m}{sl}
\SetMathAlphabet{\mathsfit}{bold}{\encodingdefault}{\sfdefault}{bx}{n}
\titlespacing*{\section}{0pt}{0pt}{-4pt}
\titlespacing*{\subsection}{0pt}{0pt}{-4pt}
\titlespacing*{\subsubsection}{0pt}{0pt}{-4pt}
\icmltitlerunning{DessiLBI: Exploring Structural Sparsity of Deep Networks via Differential Inclusion Paths}
\begin{document}
\abovedisplayskip=3pt
\belowdisplayskip=4pt
\abovedisplayshortskip=3pt
\belowdisplayshortskip=4pt

\twocolumn[
  \icmltitle{DessiLBI: Exploring Structural Sparsity of Deep Networks \\ via Differential Inclusion Paths}
  
  % Non-Convex Structural  Splitting Linearized Bregman Iteration (NCS$^3$plitLBI)   via Differential Inclusion Paths

 % NCS$^2$plit LBI for Deep Learning: Structural Sparsity via Differential Inclusion Paths}

% 

% It is OKAY to include author information, even for blind
% submissions: the style file will automatically remove it for you
% unless you've provided the [accepted] option to the icml2019
% package.

% List of affiliations: The first argument should be a (short)
% identifier you will use later to specify author affiliations
% Academic affiliations should list Department, University, City, Region, Country
% Industry affiliations should list Company, City, Region, Country

% You can specify symbols, otherwise they are numbered in order.
% Ideally, you should not use this facility. Affiliations will be numbered
% in order of appearance and this is the preferred way.
%\icmlsetsymbol{equal}{*}

\begin{icmlauthorlist}
\icmlauthor{Yanwei Fu}{fd}
\icmlauthor{Chen Liu}{fd}
\icmlauthor{Donghao Li}{fd,hk}
\icmlauthor{Xinwei Sun}{msra}
\icmlauthor{Jinshan Zeng}{hk,jx}
\icmlauthor{Yuan Yao}{hk}
\end{icmlauthorlist}

\icmlaffiliation{fd}{School of Data Science, and MOE Frontiers Center for Brain Science, Shanghai Key Lab of Intelligent Information Processing Fudan University. yanweifu@fudan.edu.cn}
\icmlaffiliation{hk}{Hong Kong University of Science and Technology}
\icmlaffiliation{msra}{Microsoft Research-Asia}
\icmlaffiliation{jx}{Jiangxi Normal University,}

%\icmlcorrespondingauthor{Cieua Vvvvv}{c.vvvvv@googol.com}
\icmlcorrespondingauthor{Yuan Yao}{yuany@ust.hk}

% You may provide any keywords that you
% find helpful for describing your paper; these are used to populate
% the "keywords" metadata in the PDF but will not be shown in the document
\icmlkeywords{Deep Network Training, Linearized Bregman Iteration, Structural Sparsity, Differential Inclusion Paths}

\vskip 0.3in
]

% this must go after the closing bracket ] following \twocolumn[ ...

% This command actually creates the footnote in the first column
% listing the affiliations and the copyright notice.
% The command takes one argument, which is text to display at the start of the footnote.
% The \icmlEqualContribution command is standard text for equal contribution.
% Remove it (just {}) if you do not need this facility.

\printAffiliationsAndNotice{}  % leave blank if no need to mention equal contribution
%\printAffiliationsAndNotice{\icmlEqualContribution} % otherwise use the standard text.

\begin{abstract}
	Over-parameterization is ubiquitous nowadays in training neural networks
	to benefit both optimization in seeking global optima and generalization
	in reducing prediction error. However, compressive networks are desired
	in many real world applications and direct training of small networks
	may be trapped in local optima. In this paper, instead of pruning
	or distilling over-parameterized models to compressive ones, we
	propose a new approach based on \emph{differential
		inclusions of inverse scale spaces}. Specifically, it generates a family of models
	from simple to complex ones that \textcolor{black}{couples a pair of parameters to simultaneously train over-parameterized deep models and structural sparsity} on weights of fully connected and convolutional layers. Such a differential inclusion scheme has a simple discretization, proposed as 
	\textbf{De}ep \textbf{s}tructurally \textbf{s}pl\textbf{i}tting \textbf{L}inearized \textbf{B}regman \textbf{I}teration (DessiLBI), whose
	global convergence analysis in deep learning is established that from any initializations, algorithmic
	iterations converge to a critical point of empirical risks. 
	Experimental evidence shows that DessiLBI achieve comparable and even better performance than 
the competitive optimizers in exploring the structural sparsity of several widely used backbones on the benchmark datasets. Remarkably, with \emph{early stopping},  DessiLBI unveils \textcolor{black}{``\emph{winning tickets}"} in early epochs: the effective sparse structure with comparable test accuracy to fully trained over-parameterized models. 
\end{abstract}
\vspace{-0.6cm}
\section{Introduction}

The expressive power of deep neural networks comes from the millions of parameters, which are optimized by Stochastic
Gradient Descent (SGD) \citep{bottou-2010} and variants
like Adam \citep{kingma2014adam}. Remarkably, model over-parameterization
helps both optimization and generalization. For optimization, over-parameterization
may simplify the landscape of empirical risks toward locating global
optima efficiently by gradient descent method \citep{Mei_pnas18,Mei_colt19,VenBanBru18,Allen-Zhu18,Du18}.
On the other hand, over-parameterization does not necessarily result
in a bad generalization or overfitting \citep{zhang16rethinking},
especially when some weight-size dependent complexities are controlled
\citep{Bartlett97,bartlett17spectral,sasha18size-independent,ZHY18,srebro19_iclr}.

However, compressive networks are desired in many real world applications,
\emph{e.g.} robotics, self-driving cars, and augmented reality. Despite
that $\ell_{1}$ regularization has been applied to deep learning 
to enforce the sparsity on weights toward compact, memory efficient
networks, it sacrifices some prediction performance \citep{l1_memory}.
This is because that the weights learned in neural networks are highly
correlated, and $\ell_{1}$ regularization on such weights violates
the incoherence or irrepresentable conditions needed for sparse model
selection \citep{DonHuo01,Tropp04,ZhaYu06}, leading to spurious selections
with poor generalization. On the other hand, $\ell_{2}$ regularization
is often utilized for correlated weights as some low-pass filtering,
sometimes in the form of weight decay \citep{LosHut19} or early stopping
\citep{YaoRosCap07,yuting17-nips}. Furthermore, group sparsity regularization
\citep{yuan2006model} has also been applied to neural networks, such
as finding optimal number of neuron groups \citep{group_spars_network}
and exerting good data locality with structured sparsity \citep{l12_norm,yoon2017combined}.

Yet, without the aid of over-parameterization, directly training a
compressive model architecture may meet the obstacle of being trapped
in local optima in contemporary experience. Alternatively, researchers
in practice typically start from training a big model using common
task datasets like ImageNet, and then prune or distill such big models
to small ones without sacrificing too much of the performance
 \citep{jaderberg2014speeding,han2015learning,li2016pruning,abbasi2017structural,arora2018stronger}.
% \citep{jaderberg2014speeding,han2015learning,zhu2016trained,zhou2017incremental,zhang2016accelerating,li2016pruning,abbasi2017structural,soft_filtering,arora2018stronger}.
In particular, a recent study \citep{lottery-iclr19} created the \emph{lottery
	ticket hypothesis} based on empirical observations: ``dense, randomly-initialized,
feed-forward networks contain subnetworks (winning tickets) that --
when trained in isolation -- reach test accuracy comparable to the
original network in a similar number of iterations\textquotedbl .
How to effectively reduce an over-parameterized model thus becomes
the key to compressive deep learning. Yet, \cite{rethinking_iclr} raised a question, \emph{is it necessary to fully train a dense, over-parameterized model before finding important structural sparsity}? 

%\textcolor{red}{{} Such a backward pruning approach is in a sharp contrast to traditional sparse learning approach where }\textcolor{red}{\emph{one starts from simple yet interpretable
%models, delving into complex models progressively}}\textcolor{red}{.
%Examples of such a parsimonious learning include boosting as functional
%gradient descent or coordinate descent method, regularization paths
%associated with $\ell_{2}$ (Tikhonov regularization or Ridge regression)
%and/or $\ell_{1}$ (Lasso) penalties, etc. None of these has been
%systematically studied in deep learning. }

This paper provides a novel answer by exploiting a dynamic
approach to deep learning with structural sparsity. We are able to establish a family
of neural networks, from simple to complex, by following regularization
paths as solutions of \emph{differential inclusions of inverse scale
	spaces}. Our key idea is to design some dynamics that simultaneously
exploit over-parameterized models and structural sparsity. To achieve
this goal, the original network parameters are lifted to a coupled
pair, with one \textit{weight set }$W$ of parameters following the
standard gradient descend to explore the over-parameterized model
space, while the other set of parameters $\Gamma$ learning structure sparsity
in an \emph{inverse scale space}. The large-scale important parameters
are learned at faster speed than small unimportant ones. The two sets of parameters are coupled in an $\ell_{2}$
regularization. This dynamics on highly non-convex (e.g. deep  models) setting enjoys a simple discretization, which is proposed as 
\textbf{De}ep \textbf{s}tructurally \textbf{s}pl\textbf{i}tting \textbf{L}inearized \textbf{B}regman \textbf{I}teration (DessiLBI)  with provable global convergence
guarantee  in this paper. Here, DessiLBI is a natural extension of SGD with structural sparsity exploration: DessiLBI reduces to the standard gradient descent method when the coupling regularization is weak, while reduces to a sparse mirror descent when the coupling is strong. 

Critically, DessiLBI enjoys a nice property that effective subnetworks can be rapidly learned via structural sparsity parameter $\Gamma$ by the iterative regularization path without fully training a dense network first. Particularly, support set of structural sparsity parameter $\Gamma$ learned in the early stage of this inverse scale space discloses important sparse subnetworks. Such architectures can be fine-tuned or retrained to achieve comparable test accuracy as the dense, over-parameterized networks. As a result, structural sparsity parameter $\Gamma$ may enable us to rapidly find ``winning tickets" in early training epochs for the ``lottery'' of identifying successful subnetworks that bear comparable test accuracy to the dense ones, confirmed empirically by experiments.

\vspace{-0.1cm}
\textbf{Contributions}. (1) DessiLBI is, for the first time, applied  
to explore the  structural sparsity of over-parameterized deep network via differential inclusion paths. 
 DessiLBI can be interpreted as the discretization of the dynamic approach of 
 differential inclusion paths in the inverse scale space. 
 (2) Global convergence of {DessiLBI} in such a nonconvex
optimization is established based on the Kurdyka-{\L}ojasiewicz framework, that the whole iterative sequence converges to a critical point of the empirical loss function from arbitrary initializations. (3) Stochastic variants of {DessiLBI}  demonstrate the comparable and even better performance than other 
	training algorithms on ResNet-18 in large scale training such as ImageNet-2012, among other datasets, together with additional structural sparsity in successful models for interpretability. (4) Structural sparsity parameters in {DessiLBI}  provide important information about subnetwork architecture with comparable or even better accuracies than dense models before and after retraining -- \emph{ {DessiLBI} with early stopping} can provide fast ``\emph{winning tickets}" without fully training dense, over-parameterized models.  

%(2) SLBI equipped its trivial form -- SGD the power to explore
%over-parameterized models in proximity of sparse network architectures,
%using dynamics known as inverse scale space in applied mathematics.
%Theoretically, the statistical model selection consistency with convex
%loss, i.e., LBI, has been established, while we give the first rigorous
%global convergence with DNN's nonconvex loss and the whole iterative
%sequences converge to a critical point from arbitrary initialization
%based on Kurdyka-Lojasiewicz inequality, different from the existing
%literature in the nonconvex society. (3) A progressive application
%of SLBI with sparse architecture enable us learning a sparse network
%without fully learning the dense network. 

%\commyy{(shouldn't come here) More interestingly, we notice
%that our Split LBI are also in favor of good interpretability by exploring
%the inverse scale space. For example, Figure~\ref{mnist_visualization}
%and Figure~\ref{fig:imagenet training} further demonstrates that
%the sparse convolutional filters trained on MNIST and ImageNet respectively.
%Particularly, Figure~\ref{fig:imagenet training} focuses more on
%non-semantic textures than shapes and colors, in support of alternative
%studies \cite{abbasi2017structural,TubingenICLR19}. How to enhance
%the semantic shape invariance learning, is arguably a key to improve
%the robustness of convolutional neural networks.}

\vspace{-0.1cm}
\section{Preliminaries and Related Work}

\textbf{Mirror Descent Algorithm (MDA)} firstly proposed by \cite{NemYu83} to solve \textcolor{black}{constrained convex optimization $L^{\star}:=\min_{W\in K} \mathcal{L}(W)$} ($K$ is convex and compact), \textcolor{black}{can be understood as} a generalized projected gradient descent \cite{beck2003mirror} with respect to Bregman distance $B_{\Omega}(u,v) := \Omega(u) - \Omega(v) - \langle \nabla \Omega(v), u-v\rangle$ induced by a convex and differentiable function $\Omega(\cdot)$,
\begin{subequations}
\label{eq:mda} 
	\begin{align}
	Z_{k+1} & = Z_k - \alpha \nabla \mathcal{L}(W_k) \label{eq:mda-a}\\
	W_{k+1} & = \nabla \Omega^{\star}(Z_{k+1}) \label{eq:mda-b},
	\end{align}
\end{subequations}
where the conjugate function of $\Omega(\cdot)$ is 
$\Omega^{\star}(Z) := \sup_W \langle W,Z \rangle - \Omega(W)$.
Equation~(\ref{eq:mda}) optimizes 
 $W_{k+1} = \arg\min_z \langle z, \alpha\mathcal{L}(W_k) \rangle + B_{\Omega}(z,W_k)$ 
 \cite{nemirovski2012tutorial} in two steps:  Eq (\ref{eq:mda-a}) implements the gradient descent on $Z$ that is an element in dual space $Z_k=\nabla \Omega(W_k)$; and 
 Eq (\ref{eq:mda-b}) projects it back to the primal space. As step size $\alpha \to 0$, 
 MDA has the following limit dynamics as ordinary differential equation (ODE) \cite{NemYu83}:
\begin{subequations}
\label{eq:mda-ode} 
	\begin{align}
	\dot{Z}_t & = \alpha \nabla \mathcal{L}(W_t) \label{eq:mda-ode-a}\\
	W_{t} & = \nabla \Omega^{\star}(Z_{t}) \label{eq:mda-ode-b},
	\end{align}
\end{subequations} 
Convergence analysis with rates have been well studied for convex loss, that has been extended to stochastic version \cite{ghadimi2012optimal, nedic2014stochastic} and Nesterov acceleration scheme \cite{su2016differential,krichene2015accelerated}. For highly non-convex loss met in deep learning, \cite{azizan2019sto} established the convergence to global optima for \emph{overparameterized} networks, provided that (i) the initial point is close enough to the manifold of global optima; (ii) the $\Omega(\cdot)$ is strongly convex and differentiable. For non-differentiable $\Omega$ such as the Elastic Net penalty in compressed sensing and high dimensional statistics ($\Omega(W) = \Vert W \Vert_1 + \frac{1}{2\kappa} \Vert W \Vert_F^2$), Eq. (\ref{eq:mda}) is studied as the Linearized Bregman Iteration (LBI) in applied mathematics \cite{yin2008bregman,osher2016diff} that follows a discretized solution path of differential inclusions, to be discussed below. Such solution paths play a role of sparse regularization path where early stopped solutions are often better than the convergent ones when noise is present. In this paper, we investigate a varied form of LBI for the highly non-convex loss in deep learning models, exploiting the sparse paths, and establishing its convergence to a KKT point for \emph{general} networks from \emph{arbitrary initializations}.  
%in the context of convex programming, commonly under smoothness condition on $\Omega(\cdot)$. For example, it has been proved in that \cite{beck2003mirror} $\mathcal{L}(W_k) - \mathcal{L}^{\star} \leq \mathcal{O}(k^{-1/2})$ and extended to stochastic version \cite{ghadimi2012optimal, nedic2014stochastic}. Further, the \cite{krichene2015accelerated} accelerate to $\mathcal{O}(k^{-2})$ by proposing a second-order (Nesterov acceleration) differential equation motivated by the spirit of \cite{su2016differential}. For non-smooth $\Omega(\cdot)$, the \cite{yin2008bregman,cai2009convergence} gave convergence analysis for Linearlized Bregman Iteration ($\Omega(W) = \Vert W \Vert_1 + \frac{1}{2\kappa} \Vert W \Vert_F^2$) under convex setting. For highly non-convex loss function such as deep neural network considered here, recently \cite{azizan2019sto} proved convergence for overparameterized nonlinear models, with requirements that (i) the initial point is close enough to the manifold of global optimal; (ii) the $\Omega(\cdot)$ is strongly convex differentiable. In this paper, we shall give the global convergence analysis for MDA with non-smooth $\Omega(\cdot)$, i.e., our differential inclusion DessiLBI starting from any finite initialization point, which has been all long remained open as far as we have concerned. Apart from this, we will explore the inverse scale space property of differential paths, under highly non-convex deep learning models.   
\vspace{-0.2cm}

%Historically, the \emph{Linearized Bregman Iteration (LBI)} was firstly proposed in applied mathematics as iterative regularization paths for image reconstruction and compressed sensing \citep{osher2005iterative,yin2008bregman}, later applied to logistic regression \citep{shi2013linearized}. The convergence analysis was given for convex problems \citep{yin2008bregman,cai2009convergence}, yet remaining open for non-convex problems met in deep learning. \cite{osher2016diff} established statistical model selection consistency for high dimensional linear regression under the same irrepresentable condition as Lasso, later extended to generalized linear models \citep{huang18_aistats}. To relax such conditions, SplitLBI was proposed by \cite{huang16_nips} to learn structural sparsity in linear models under weaker conditions than generalized Lasso, that was successfully applied in medical image analysis \citep{Sun17_miccai} and computer vision \citep{sun18_icml}. 

\textbf{Linearized Bregman Iteration~(LBI)}, was proposed in \cite{osher2005iterative, yin2008bregman} that firstly studies Eq.~(\ref{eq:mda}) when $\Omega(W)$ involves $\ell_1$ or total variation non-differentiable penalties met in compressed sensing and image denoising. 
Beyond convergence for convex loss \citep{yin2008bregman,cai2009convergence}, Osher et al. (\citeyear{osher2016diff}) and Huang et al. (\citeyear{huang18_aistats}) particularly showed that LBI is a discretization of differential inclusion dynamics whose solutions generate iterative sparse regularization paths, and established the statistical model selection consistency for high-dimensional generalized linear models. Moreover, Huang et al. (\citeyear{huang16_nips,huang18_acha}) further improved this by proposing SplitLBI, incorporating into LBI a variable splitting strategy such that the restricted Hessian with respect to augmented variable ($\Gamma$ in Eq.~\ref{eq:slbi-iss}) is orthogonal. This can alleviate the multicollinearity problem when the features are highly correlated; and thus can relax the irrepresentable condition,\textit{ i.e.}, the necessary condition for Lasso to have model selection consistency \cite{Tropp04,ZhaYu06}. 
% Such a favorable property  was successfully applied in medical image analysis \citep{Sun17_miccai} and computer vision \citep{sun18_icml}. 
However, existing work on SplitLBI is restricted to convex problems in generalized linear modes. It remains unknown whether the algorithm can exploit the structural sparsity in highly non-convex deep networks. To fill in this gap, in this paper, we propose the deep Structural Splitting LBI that simultaneously explores the overparameterized networks and the structural sparsity of the weights of fully connected and convolutional layers in such networks, which enables us to generate an iterative solution path of deep models whose important sparse architectures are unveiled in early stopping. 
\vspace{-0.2cm}
 \textbf{Alternating Direction Method of Multipliers (ADMM)} which also adopted variable splitting strategy, breaks original complex loss into smaller pieces with each one can be easily solved iteratively~\cite{wahlberg2012admm, boyd2011distributed}. Equipped with the variable splitting term, \cite{he20121, wang2013online} and \cite{zeng2019convergence} established the convergence result of ADMM in convex, stochastic and non-convex setting, respectively. \cite{wang2014bregman} studied convergence analysis with respect to Bregman distance. Recently, \cite{franca2018admm} derived the limit ODE dynamics of ADMM for convergent analysis. However, one should distinguish the LBI dynamics from ADMM that LBI should be viewed as a discretization of differential inclusion of inverse scale space that generalizes a sparse regularization solution path from simple to complex models where early stopping helps find important sparse models; in contrast, the ADMM, as an optimization algorithm for a given objective function, focuses on convergent property of the iterations. %Variable splitting plays different roles in ADMM and SplitLBI (or DessiLBI) -- in ADMM it aims to decompose complex original loss into simple ones, with variable splitting term lifting parameter space and hence alleviate the parameter correlation problem

\vspace{-0.2cm}
\section{Methodology}
\begin{figure*}
	\centering{}\includegraphics[scale=0.34]{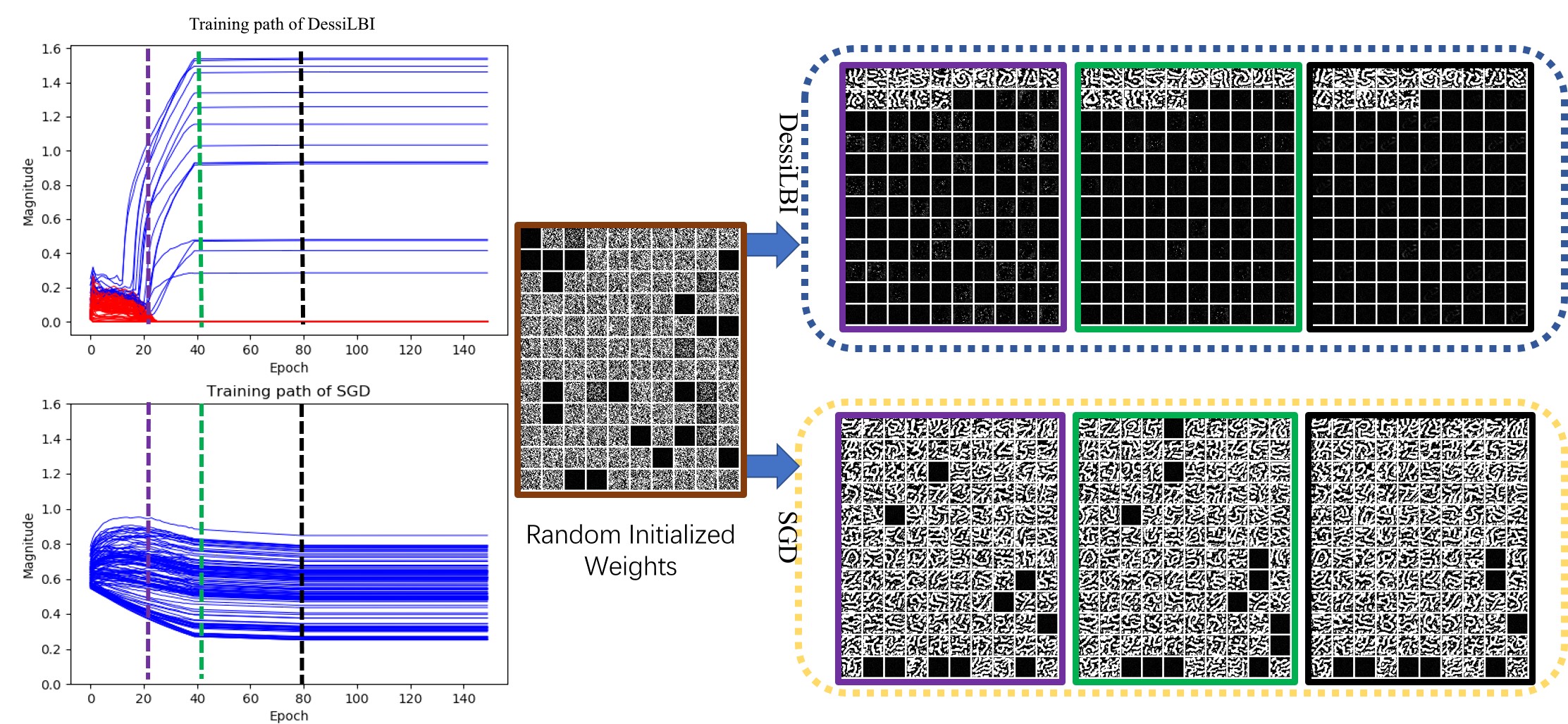}
\vspace{-0.15in}	
	\caption{{\small{}{}Visualization of solution path and filter patterns in
			the third convolution al layer (i.e., conv.c5) of LetNet-5, trained
			on MNIST. The left figure shows the magnitude changes for each filter
			of the models trained by DessiLBI  and SGD, where $x$-axis and $y$-axis
			indicate the training epochs, and filter magnitudes ($\ell_{2}$-norm),
			respectively. The DessiLBI  path of filters selected in the support
			of $\Gamma$ are drawn in blue color, while the red color curves represent
			the filters that are not important and outside the support of $\Gamma$.
			We visualize the corresponding learned filters by \citet{erhan2009visualizing}
			at 20 (blue), 40 (green), and 80 (black) epochs, which are shown in
			the right figure with the corresponding color bounding boxes, }\emph{\small{}{}i.e.}{\small{}{},
			blue, green, and black, respectively. It shows that our DessiLBI  enjoys
			a sparse selection of filters without sacrificing accuracy (see Table~\ref{table:supervised_imagenet}).}}
			\vspace{-0.15in}	
	\label{mnist_visualization} 
\end{figure*}

\label{sc:method}

Supervised learning learns  $\Phi_{W}:\mathbf{\mathcal{X}}\to\mathbf{\mathcal{Y}}$,
from input  $\mathcal{X}$ to output space $\mathcal{Y}$, with
a parameter $W$ such as weights in neural networks, by minimizing
certain loss functions on training samples $\eL(W)=\frac{1}{n}\sum_{i=1}^{n}\ell(y_{i},\Phi_{W}(x_{i}))$.
For example, a neural network of $l$-layer is defined as $\Phi_{W}(x)=\sigma_{l}\left(W^{l}\sigma_{l-1}\left(W^{l-1}\cdots\sigma_{1}\left(W^{1}x\right)\right)\right)$,
where $W=\{W^{i}\}_{i=1}^{l}$, $\sigma_{i}$ is the nonlinear activation
function of the $i$-th layer.

\textbf{Differential Inclusion of Inverse Scale Space.} Consider the
following dynamics, 
\begin{subequations}\label{eq:slbi-iss} 
	\begin{align}
	\frac{\dot{W_{t}}}{\kappa} & =-\nabla_{W}\bar{\calL}\left(W_{t},\Gamma_{t}\right)\label{eq:slbi-iss-show-a}\\
	\dot{V_{t}} & =-\nabla_{\Gamma}\bar{\mathcal{L}}\left(W_{t},\Gamma_{t}\right)\label{eq:slbi-iss-show-b}\\
	V_{t} & \in \partial\bar{\Omega}(\Gamma_{t}) \label{eq:slbi-iss-show-c}
	\end{align}
\end{subequations} 
where $V$ is a sub-gradient of $\bar{\Omega}(\Gamma):=\Omega_{\lambda}(\Gamma)+\frac{1}{2\kappa}\|\Gamma\|^{2}$
for some sparsity-enforced, often non-differentiable regularization $\Omega_\lambda(\Gamma)=\lambda \Omega_1(\Gamma)$ ($\lambda\in\RR_{+}$) such as
Lasso or group Lasso penalties for $\Omega_1(\Gamma)$, $\kappa>0$ is a damping parameter such
that the solution path is continuous, and the augmented loss function
is 
\begin{equation}
\bar{\calL}\left(W,\Gamma\right)=\eL\left(W\right)+\frac{1}{2\nu}\|W-\Gamma\|_F^2,\label{eq:sparse_loss}
\end{equation}
with $\nu>0$ controlling the gap admitted between $W$ and $\Gamma$. Compared to the original loss function $\eL\left(W\right)$, our loss $\bar{\calL}\left(W,\Gamma\right)$ additionally uses variable splitting strategy  by lifting the original neural network parameter $W$ to $(W,\Gamma)$ with $\Gamma$ modeling the structural sparsity of $W$. For simplicity, we assumed $\bar{\calL}$ is differentiable with respect to $W$ here, otherwise the gradient in Eq. (\ref{eq:slbi-iss-show-a}) is understood as subgradient and the equation becomes an inclusion.

Differential inclusion system (Eq.~\ref{eq:slbi-iss}) is a coupling of gradient descent on $W$ with non-convex loss and mirror descent (LBI) of $\Gamma$ (Eq.~\ref{eq:mda-ode}) with non-differentiable sparse penalty. It may explore dense over-parameterized models $W_t$ in the proximity of structural parameter $\Gamma_t$ with gradient descent, while $\Gamma_t$ records important sparse model structures. 

%Dynamics Eq.~\ref{eq:slbi-iss}, 
 %by exploring structural sparsity in the more generalized non-convex scenario. Typically,
%{Equation~\ref{eq:slbi-iss-show-c} is equivalent to Eq.~\ref{eq:mda-ode-b} \textcolor{black}{when $\Omega(\Gamma) = \bar{\Omega}_{\lambda}(\Gamma)$ is closed and convex, in which case such a 

%a gradient descent flow of $W_{t}$ in the proximity of $\Gamma_{t}$ and a mirror descent flow \citep{NemYu83} of $\Gamma_{t}$ associated with a sparsity enforcement penalty $\bar{\Omega}$. In mirror descent flow, gradient descent goes on the dual space consisting of sub-gradients $V_t$, driving the flow in sparse primal space of $\Gamma_t$. For a large enough $\nu$, it reduces to the gradient descent method for $W_{t}$. 
%\textcolor{red}{The Eq.~\ref{eq:slbi-iss} reduces to the Split Linearized Bregman \emph{Inverse Scale Space} (Split LBISS) when $\Omega_{\lambda}(\Gamma) = \Vert \Gamma \Vert_1$ and $\mathcal{L}$ denoting squared loss, in which the inverse scale space property was found to hold \cite{huang18_acha} that the inverse flow of $\Gamma_t$ recovers the sparse signal in high dimensional linear model. }
Specifically, the solution path of $\Gamma_{t}$ exhibits the following property in the separation of scales: starting at the zero, important parameters of large scale will be learned fast, popping up to be nonzeros early, while unimportant parameters of small scale will be learned slowly, appearing to be nonzeros late. In fact, taking $\Omega_\lambda(\Gamma)=\|\Gamma\|_{1}$ and $\kappa\to\infty$ for simplicity, $V_{t}$ as the subgradient of $\bar{\Omega}_t$, undergoes a gradient descent
flow before reaching the $\ell_{\infty}$-unit box, which implies that $\Gamma_{t}=0$ in this stage. The earlier a component in $V_{t}$ reaches the $\ell_{\infty}$-unit box, the earlier a corresponding component in $\Gamma_{t}$ becomes nonzero and rapidly evolves toward a critical point of $\bar{\calL}$ under gradient flow. On the other hand, the $W_{t}$ follows the gradient descent with a standard $\ell_{2}$-regularization. Therefore, $W_{t}$ closely follows dynamics of $\Gamma_{t}$ whose important parameters are selected.

%Such a property is called as the \emph{inverse scale space} in applied mathematics \citep{BGOX06}. 
%which recently was shown to achieve statistical model selection consistency in high dimensional linear regression \citep{osher2016diff} and general linear models \citep{huang18_aistats}, with a reduction of bias as $\kappa$ increases. 

%the popular stochastic gradient method, regularization could be imposed either implicitly such as via early stopping, weight decay that is equivalent to $\ell_2$-regularization,    in favor of certain model sparsity as the following penalized empirical risk

Compared with directly enforcing a penalty function such as $\ell_{1}$ or $\ell_{2}$ regularization 
\begin{align}
\min_{W}\eR(W):=\eL\left(W\right)+ & \Omega_\lambda\left(W\right),\ \ \ \lambda\in\RR_{+}.\label{Eq:min-ERM}
\vspace{-0.15in}
\end{align}
dynamics Eq.~\ref{eq:slbi-iss} can relax the irrepresentable conditions for model selection by Lasso \cite{huang16_nips}, which can be violated for highly correlated weight parameters. The weight  $W$, instead of directly being imposed with
$\ell_{1}$-sparsity, adopts $\ell_{2}$-regularization in the proximity of the sparse path of $\Gamma$ that admits simultaneously exploring highly correlated parameters in over-parameterized models and sparse regularization.

The key insight lies in that differential inclusion of Eq.~\ref{eq:slbi-iss-show-c} drives the important features in $\Gamma_t$ that earlier reaches the $\ell_{\infty}$-unit box to be selected earlier. Hence, the importance of features is related to the ``time scale" of dynamic hitting time to the $\ell_{\infty}$ unit box, and such a time scale is inversely proportional to lasso regularization parameter $\lambda = 1/t$ \cite{osher2016diff}. Such a differential inclusion is firstly studied in \cite{BGOX06} with Total-Variation sparsity for image reconstruction, where important features in early dynamics are coarse-grained shapes with fine details appeared later. This is in contrast to wavelet scale space that coarse-grained features appear in large scale spaces, thus named ``inverse scale space''. In this paper, we shall see that Eq.~\ref{eq:slbi-iss} inherits such an inverse scale space property empirically even for the highly nonconvex neural network training. Figure~\ref{mnist_visualization} shows a LeNet trained on MNIST by the discretized dynamics, where important sparse filters are selected in early epochs while the popular SGD returns dense filters.

\textbf{Deep Structural Splitting Linearized Bregman Iteration.} 
Eq. \ref{eq:slbi-iss} admits
an extremely simple discrete approximation, using Euler forward
discretization of dynamics and called DessiLBI in the sequel: 
\begin{subequations} 
	\begin{align}
	& W_{k+1}=W_{k}-\kappa\alpha_{k}\cdot\nabla_{W}\bar{\mathcal{L}}\left(W_{k},\Gamma_{k}\right),\label{Eq:SLBI-iterate1}\\
	& V_{k+1}=V_{k}-\alpha_{k}\cdot\nabla_{\Gamma}\bar{\mathcal{L}}\left(W_{k},\Gamma_{k}\right),\label{Eq:SLBI-iterate2}\\
	& \Gamma_{k+1}=\kappa\cdot\mathrm{Prox}_{\Omega_\lambda}\left(V_{k+1}\right),\label{Eq:SLBI-iterate3}
	\end{align}
\end{subequations} where $V_{0}=\Gamma_{0}=0$, $W_{0}$ can be small
random numbers such as Gaussian initialization. For
some complex networks, it can be initialized as common setting. The
proximal map in Eq. (\ref{Eq:SLBI-iterate3}) that controls the sparsity
of $\Gamma$,
\begin{align}
\mathrm{Prox}_{\Omega_\lambda}(V)=\arg\min_{\Gamma}\ \left\{ \frac{1}{2}\|\Gamma-V\|_{2}^{2}+\Omega_\lambda\left(\Gamma\right)\right\} ,\label{Eq:prox-operator}
\end{align}
%\commyy{We might need to consider $\mathrm{Prox}_{\lambda \Omega}$ or have $\Omega=\lambda \Omega(V)$ as we used different $\lambda$ for different sparsity levels later.}
Such an iterative procedure %is an implementation of DessiLBI in optimzing non-convex deep network. It 
returns a sequence of sparse networks from simple to complex ones whose global
convergence condition to be shown below,
 while solving Eq. (\ref{Eq:min-ERM})
at various levels of $\lambda$ might not be tractable, \textcolor{black}{especially} for over-parameterized networks.

Our DessiLBI explores structural sparsity in fully connected and convolutional layers, \textcolor{black}{which can be unified in framework of group lasso penalty}, $\Omega_1(\Gamma)=\sum_{g}\Vert \Gamma^{g}\Vert_{2}$,
where $\Vert \Gamma^{g}\Vert_{2}=\sqrt{\sum_{i=1}^{\mid \Gamma^{g}\mid}\left(\Gamma_{i}^{g}\right)^{2}}$ and 
$\left|\Gamma^{g}\right|$ is the number of weights in $\Gamma^{g}$. Thus Eq.~(\ref{Eq:SLBI-iterate3}) has a closed form solution $\Gamma^{g}=\kappa\cdot\max\left(0,1-1/\Vert V^{g}\Vert_{2}\right)V^{g}$.  Typically,

%\textcolor{black}{We treat the above three types of structural sparsity in different ways.}

(1) For a convolutional layer, $\Gamma^{g}=\Gamma^{g}(c_{in},c_{out},\mathtt{size})$
denote the convolutional filters where $\mathtt{size}$ denotes the
kernel size and $c_{in}$ and $c_{out}$ denote the numbers of input
channels and output channels, respectively. When we regard each group as each convolutional filter, $g=c_{out}$; otherwise for weight sparsity, $g$ can be every element in the filter that reduces to the Lasso. \newline
(2) For a fully connected layer, $\Gamma=\Gamma(c_{in},c_{out})$
where $c_{in}$ and $c_{out}$ denote the numbers of inputs and outputs
of the fully connected layer. Each group $g$ corresponds to each
element $(i,j)$, and the group Lasso penalty degenerates to the Lasso
penalty.

In addtion, we can  take the group of incoming weights  $\Gamma^{g}=\Gamma^{g}(c_{in}, g)$ denoting the incoming weights of the $g$-th neuron of fc layers. This will be explored in future work.

\section{Global Convergence of DessiLBI }
\label{sc:theory}
\vspace{-0.02in}
%\commyy{The key difference between this section and Zeng's BCD paper lies in the verification of KL properties involved in LBI rather than BCD.}

We present a theorem that guarantees the \emph{global convergence}
of DessiLBI, \emph{i.e.} from any intialization, the DessiLBI  sequence
converges to a critical point of $\bar{\calL}$. Our treatment extends the block coordinate descent (BCD) studied in \cite{Zeng2019}, with a crucial difference being the mirror descent involved in DessiLBI. Instead of the splitting loss in BCD, a new Lyapunov function is developed here to meet the Kurdyka-{\L }ojasiewicz property \cite{Lojasiewicz-KL1963}. \cite{xin2018rvsm} studied convergence of variable splitting method for single hidden layer networks with Gaussian inputs.

Let $P:=(W,\Gamma)$. Following \cite{huang18_aistats}, the DessiLBI  algorithm in
Eq. (\ref{Eq:SLBI-iterate1}-\ref{Eq:SLBI-iterate3}) can be rewritten
as the following standard Linearized Bregman Iteration,
\begin{equation}
 P_{k+1}=\arg\min_{P}\left\{ \langle P-P_{k},\alpha\nabla\bar{\calL}(P_{k})\rangle+B_{\Psi}^{p_{k}}(P,P_{k})\right\} \label{Eq:SLBI-reformulation}
 \vspace{-0.1in}
\end{equation}
where 
\begin{align}
 \vspace{-0.1in}
\Psi(P) & =\Omega_\lambda(\Gamma)+\frac{1}{2\kappa}\|P\|_{2}^{2} \nonumber \\
& =\Omega_\lambda(\Gamma)+\frac{1}{2\kappa}\|W\|_{2}^{2}+\frac{1}{2\kappa}\|\Gamma\|_{2}^{2},\label{Eq:Phi-tilde}
\end{align}
$p_{k}\in\partial\Psi(P_{k})$, and $B_{\Psi}^{q}$ %$\tilde{G}_{k}\in\partial\tilde{\Phi}(P_{k})$, and $B_{\tilde{\Phi}}^{\tilde{G}}$
is the Bregman divergence associated with convex function $\Psi$,
defined by 
\begin{align}
B_{\Psi}^{q}(P,Q) & :=\Psi(P)-\Psi(Q)-\langle q,P-Q\rangle. \label{Eq:Bregman-divergence}
\end{align}
for some $q\in\partial\Psi(Q)$. Without loss of generality, consider $\lambda=1$ in the sequel. One can establish the global convergence of DessiLBI  under the following assumptions.

\begin{assum} 
\vspace{-0.02in}
\label{Assumption} Suppose that: (a) $\eL(W)=\frac{1}{n}\sum_{i=1}^{n}\ell(y_{i},\Phi_{W}(x_{i}))$ is
	continuous differentiable and $\nabla\eL$ is Lipschitz continuous
	with a positive constant $Lip$; (b)$\eL(W)$ has bounded level sets;
	(c) $\eL(W)$ is lower bounded (without loss of generality, we assume
	that the lower bound is $0$); (d) $\Omega$ is a proper lower semi-continuous
	convex function and has locally bounded subgradients, that is, for
	every compact set ${\cal S}\subset\mathbb{R}^{n}$, there exists a
	constant $C>0$ such that for all $\Gamma\in{\cal S}$ and all $g\in\partial\Omega(\Gamma)$,
	there holds $\|g\|\leq C$; and (e) the Lyapunov function 
	\begin{align}
	F(P,\tilde{g}):=\alpha\bar{\calL}(W,\Gamma)+B_{\Omega}^{\tilde{g}}(\Gamma,\tilde{\Gamma}),%\ \mathrm{where}\ \tilde{\Gamma}\in\partial\Omega^{*}(G)
	\label{Eq:Lyapunov-fun}
	\end{align}
	is a Kurdyka-{\L }ojasiewicz function on any bounded set, where
	$B_{\Omega}^{\tilde{g}}(\Gamma,\tilde{\Gamma}):=\Omega(\Gamma)-\Omega(\tilde{\Gamma})-\langle\tilde{g},\Gamma-\tilde{\Gamma}\rangle$,
	$\tilde{\Gamma}\in\partial\Omega^{*}(\tilde{g})$, and $\Omega^{*}$
	is the conjugate of $\Omega$ defined as 
	\begin{align*}
	\Omega^{*}(g):=\sup_{U\in\mathbb{R}^{n}}\{\langle U,g\rangle-\Omega(U)\}.
	\end{align*}
\vspace{-0.05in}
\end{assum}

\begin{remark} 
	\vspace{-0.05in}
Assumption \ref{Assumption} (a)-(c) are regular in
	the analysis of nonconvex algorithm (see, \cite{Attouch2013} for
	instance), while Assumption \ref{Assumption} (d) is also mild including
	all Lipschitz continuous convex function over a compact set. Some
	typical examples satisfying Assumption \ref{Assumption}(d) are the
	$\ell_{1}$ norm, group $\ell_{1}$ norm, and every continuously differentiable
	penalties. By Eq. (\ref{Eq:Lyapunov-fun}) and the definition of conjugate,
	the Lyapunov function $F$ can be rewritten as follows, 
	\begin{align}
	F(W,\Gamma,g)=\alpha\bar{\calL}(W,\Gamma)+\Omega(\Gamma)+\Omega^{*}(g)-\langle\Gamma,g\rangle.\label{Eq:Lyapunov-fun-conjugate}
	\vspace{-0.1in}
	\end{align}
	\vspace{-0.05in}
\end{remark}
Now we are ready to present the main theorem. 
%\begin{thm}{[}Global Convergence of SplitLBI{]} \label{Thm:conv-SLBI} Suppose that Assumption
%\ref{Assumption} holds. Let $(W_{k},\Gamma_{k},g_{k})$ be the sequence
%generated by SplitLBI (Eq. \ref{Eq:SLBI-reform-iter1}-\ref{Eq:SLBI-reform-iter3})
%with a finite initialization. If 
%\begin{align*}
%0<\alpha_{k}=\alpha<\frac{2}{\kappa(Lip+\nu^{-1})},
%\end{align*}
%then $(W_{k},\Gamma_{k},g_{k})$ converges to a critical point of
%$F$. Moreover, $\{(W_{k},\Gamma_{k})\}$ converges to a stationary
%point of $\bar{\mathcal{L}}$ defined in Eq. \ref{eq:sparse_loss},
%and $\{W^{k}\}$ converges to a stationary point of $\eL(W)$. 
%\end{thm}
\begin{thm}{[}Global Convergence of DessiLBI{]} \label{Thm:conv-SLBI}
 Suppose that Assumption
	\ref{Assumption} holds. Let $(W_{k},\Gamma_{k})$ be the sequence
	generated by DessiLBI  (Eq. (\ref{Eq:SLBI-iterate1}-\ref{Eq:SLBI-iterate3}))
	with a finite initialization. If 
	\begin{align*}
	0<\alpha_{k}=\alpha<\frac{2}{\kappa(Lip+\nu^{-1})},
	\vspace{-0.1in}
	\end{align*}
	then $(W_{k},\Gamma_{k})$ converges to a critical point of $\bar{\mathcal{L}}$ defined in Eq. (\ref{eq:sparse_loss}),
	and $\{W^{k}\}$ converges to a critical point of $\eL(W)$. 
\end{thm}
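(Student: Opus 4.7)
The plan is to apply the Kurdyka-{\L}ojasiewicz (KL) convergence framework of \cite{Attouch2013} to the Lyapunov function $F$ defined in Eq.~\eqref{Eq:Lyapunov-fun}. Three ingredients are needed: (i) a sufficient-decrease inequality of the form $F_{k+1}-F_k \leq -c(\|W_{k+1}-W_k\|^2+\|\Gamma_{k+1}-\Gamma_k\|^2)$; (ii) a relative-error bound $\mathrm{dist}(0,\partial F_{k+1}) \leq C(\|W_{k+1}-W_k\|+\|\Gamma_{k+1}-\Gamma_k\|)$; and (iii) the KL property itself, which Assumption~\ref{Assumption}(e) grants. Because the $\Gamma$-update is a mirror descent (the dual $V$ is advanced and $\Gamma$ is then its prox-image), $\bar\calL$ alone is not monotone---this is precisely what the extra Bregman term $B_\Omega^{\tilde g}(\Gamma,\tilde\Gamma)$ in $F$ is designed to absorb. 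A preparatory step is to attach to each iterate $k$ a dual companion $g_k\in\partial\Omega(\Gamma_k)$, made explicit by the optimality of the prox in Eq.~\eqref{Eq:SLBI-iterate3}.

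\textbf{Sufficient decrease.} I split $F_{k+1}-F_k$ into a $W$-move (with $\Gamma$ held at $\Gamma_k$) and a $\Gamma$-move (with $W$ held at $W_{k+1}$). For the $W$-move, $\bar\calL(\cdot,\Gamma_k)$ has Lipschitz gradient of constant $Lip+\nu^{-1}$ (inherited from $\nabla\eL$ and the coupling $\tfrac{1}{2\nu}\|W-\Gamma\|^2$), so the descent lemma together with $W_{k+1}-W_k=-\kappa\alpha\nabla_W\bar\calL(W_k,\Gamma_k)$ gives
\begin{align*}
\alpha\bigl[\bar\calL(W_{k+1},\Gamma_k)-\bar\calL(W_k,\Gamma_k)\bigr] \leq -\tfrac{1}{\kappa}\Bigl(1-\tfrac{\kappa\alpha(Lip+\nu^{-1})}{2}\Bigr)\|W_{k+1}-W_k\|^2,
\end{align*}
which is strictly negative exactly when $\alpha<2/(\kappa(Lip+\nu^{-1}))$---the condition stated in the theorem. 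For the $\Gamma$-move, I combine three tools: the three-point Bregman identity for $B_\Omega$, the mirror-descent relation $V_{k+1}-V_k=-\alpha\nabla_\Gamma\bar\calL(W_k,\Gamma_k)$ read from Eq.~\eqref{Eq:SLBI-iterate2}, and the $\nu^{-1}$-Lipschitz smoothness of $\bar\calL$ in $\Gamma$; these together bound the corresponding increment of $F$ by $-c_2\|\Gamma_{k+1}-\Gamma_k\|^2$ for some $c_2>0$. Summing with the $W$-move yields the required sufficient-decrease inequality.

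\textbf{Relative error, KL application, and limit.} For the relative-error bound I exhibit $q_{k+1}\in\partial F(W_{k+1},\Gamma_{k+1},g_{k+1})$ by stacking $\alpha\nabla\bar\calL(W_{k+1},\Gamma_{k+1})$ with matching components in $\partial\Omega(\Gamma_{k+1})$ and $\partial\Omega^*(g_{k+1})$; rewriting $\nabla\bar\calL(W_k,\Gamma_k)$ via the update equations and invoking Lipschitz continuity of $\nabla\bar\calL$ yields $\|q_{k+1}\|\leq C(\|W_{k+1}-W_k\|+\|\Gamma_{k+1}-\Gamma_k\|)$. Boundedness of $\{(W_k,\Gamma_k)\}$ follows from Assumption~\ref{Assumption}(b)--(c), since monotone decrease of $F_k$ confines $\bar\calL$ to a sublevel set, and Assumption~\ref{Assumption}(d) controls the dual sequence $g_k$. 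The KL theorem of \cite{Attouch2013} then delivers $\sum_k(\|W_{k+1}-W_k\|+\|\Gamma_{k+1}-\Gamma_k\|)<\infty$, so the entire sequence converges to a single limit $(W_\ast,\Gamma_\ast)$. Passing to the limit in Eqs.~\eqref{Eq:SLBI-iterate1}--\eqref{Eq:SLBI-iterate3} gives $\nabla_W\bar\calL(W_\ast,\Gamma_\ast)=0=\nabla_\Gamma\bar\calL(W_\ast,\Gamma_\ast)$, whence $W_\ast=\Gamma_\ast$ and $\nabla\eL(W_\ast)=0$. I expect the principal obstacle to be the $\Gamma$-move in the sufficient-decrease step: $\bar\calL$ is non-monotone under mirror descent and the Bregman increments are not signed, so the cross terms must be assembled carefully to extract a strictly negative quadratic in $\|\Gamma_{k+1}-\Gamma_k\|$ precisely within the stated step-size bound.
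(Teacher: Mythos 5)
Your overall architecture is exactly the paper's: the same Lyapunov function $F$ from Eq.~(\ref{Eq:Lyapunov-fun}), the same three ingredients (sufficient decrease, relative error, continuity) feeding the Kurdyka-{\L}ojasiewicz machinery of Attouch et al., the same augmentation of the iterate by a dual companion $g_k\in\partial\Omega(\Gamma_k)$, and the same endgame of reading off $\nabla\bar{\calL}=0$, $W_\ast=\Gamma_\ast$, $\nabla\eL(W_\ast)=0$ at a critical point of $F$. The relative-error and boundedness steps you sketch coincide with the paper's Lemmas in Sections~\ref{sc:sufficient-descent}--\ref{sc:relative-error}.

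The one genuine divergence is in the sufficient-decrease step, and it is where your route runs into trouble. You split the update into a $W$-move followed by a $\Gamma$-move ``with $W$ held at $W_{k+1}$,'' i.e.\ a Gauss--Seidel decomposition. But DessiLBI is a Jacobi-style update: Eq.~(\ref{Eq:SLBI-iterate2}) advances $V$ using $\nabla_{\Gamma}\bar{\calL}(W_k,\Gamma_k)$, the gradient at the \emph{old} $W_k$. The paper exploits this by recasting the whole iteration as a single prox step on $P=(W,\Gamma)$, Eq.~(\ref{Eq:SLBI-reformulation}); its optimality condition gives the joint identity
\begin{align*}
-\langle\alpha\nabla\bar{\calL}(P_k),P_{k+1}-P_k\rangle=\kappa^{-1}\|P_{k+1}-P_k\|_2^2+\langle g_{k+1}-g_k,\Gamma_{k+1}-\Gamma_k\rangle,
\end{align*}
and one application of the descent lemma to $\bar{\calL}$ jointly in $P$ yields $\rho=\kappa^{-1}-\alpha(Lip+\nu^{-1})/2$ with no cross terms, so the threshold $\alpha<2/(\kappa(Lip+\nu^{-1}))$ falls out directly. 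Your block split instead forces you to compare $\nabla_{\Gamma}\bar{\calL}(W_{k+1},\Gamma_k)$ with the gradient the algorithm actually used, producing an error $\nu^{-1}(W_k-W_{k+1})$ and hence a cross term $\alpha\nu^{-1}\langle W_k-W_{k+1},\Gamma_{k+1}-\Gamma_k\rangle$; absorbing it by Young's inequality inflates the $W$-coefficient to $\alpha(Lip+2\nu^{-1})/2$ and leaves you with an admissible step size strictly smaller than the one in the theorem statement. So, as written, your plan proves a weaker theorem. You correctly sense that the $\Gamma$-move is the delicate point, but the difficulty is not the unsigned Bregman increments (the paper's identity $D(\Gamma_{k+1},\Gamma_k)-B_{\Omega}^{g_k}(\Gamma_{k+1},\Gamma_k)=B_{\Omega}^{g_{k+1}}(\Gamma_k,\Gamma_{k+1})\ge 0$ disposes of those); it is the block split itself. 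Replace the alternating decomposition with the joint prox reformulation and the rest of your argument goes through as the paper's does.
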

Applying to the neural networks, typical examples are summarized in
the following corollary.
\begin{corollary} \label{Corollary:DL} Let $\{W_{k},{\Gamma}_{k},g_{k}\}$
	be a sequence generated by DessiLBI (\ref{Eq:SLBI-reform-iter1}-\ref{Eq:SLBI-reform-iter3})
	for neural network training %\eqref{Eq:dnn-org} (in this case, $\eL(W)$ is replaced by $\calR_n\left( \Phi(\bX;\calW), \bY\right)$),
	where (a) $\ell$ is any smooth definable loss function, such as the
	square loss $(t^{2})$, exponential loss $(e^{t})$, logistic loss
	$\log(1+e^{-t})$, and cross-entropy loss; (b) $\sigma_{i}$ is any
	smooth definable activation, such as linear activation $(t)$, sigmoid
	$(\frac{1}{1+e^{-t}})$, hyperbolic tangent $(\frac{e^{t}-e^{-t}}{e^{t}+e^{-t}})$,
	and softplus ($\frac{1}{c}\log(1+e^{ct})$ for some $c>0$) as a smooth
	approximation of ReLU; (c) $\Omega$ is the group Lasso. %\begin{enumerate}
	%\item[(1)]
	%$\ell$ is any smooth definable loss function, such as the square loss $(t^2)$, exponential loss $(e^t)$, logistic loss $\log(1+e^{-t})$, and cross-entropy loss.
	%\item[(2)] $\sigma_i$ is any smooth definable activation, such as linear activation $(t)$, sigmoid $(\frac{1}{1+e^{-t}})$, hyperbolic tangent $(\frac{e^t - e^{-t}}{e^t + e^{-t}})$, and softplus ($\frac{1}{c}\log(1+e^{ct})$ for some $c>0$) as a smooth approximation of ReLU.
	%\end{enumerate}
	%Moreover, suppose that the empirical risk $\eL({W})$ %$\calR_n\left( \Phi(\bX;\calW), \bY\right)$  has lower bounded level set and its gradient is Lipschitz continuous with a constant $Lip>0$.
	Then the sequence $\{W_{k}\}$ converges to a stationary point of
	$\eL(W)$ under the conditions of Theorem \ref{Thm:conv-SLBI}. \end{corollary}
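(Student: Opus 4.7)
The plan is to verify that Assumption~\ref{Assumption}(a)--(e) holds for the specialized setting of the corollary, so that Theorem~\ref{Thm:conv-SLBI} applies directly and yields the claim. Conditions (a) and (c) are handled by smoothness: every listed activation $\sigma_i$ and every listed loss $\ell$ is $C^\infty$ with locally Lipschitz derivatives, so the chain rule gives that $\eL(W)$ is $C^1$ with $\nabla \eL$ Lipschitz on bounded sets, and non-negativity of square/exponential/logistic/cross-entropy losses yields lower boundedness. For (b) I would argue not from coercivity of $\eL$ itself (which can fail, e.g., with bounded sigmoid outputs) but from the descent of the Lyapunov function $F$ along the DessiLBI iterates that is produced in the proof of Theorem~\ref{Thm:conv-SLBI}: this confines $(W_k,\Gamma_k)$ to a bounded region, on which the local Lipschitz constant of $\nabla \eL$ plays the role of the global constant $Lip$ in Assumption~\ref{Assumption}(a). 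Condition (d) is immediate for the group Lasso $\Omega(\Gamma)=\sum_g\|\Gamma^g\|_2$: it is proper, convex, continuous, and each block of its subgradient has $\ell_2$-norm at most $1$, so $\partial\Omega$ is uniformly bounded on any compact set.

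The main step, and the main obstacle, is Assumption~\ref{Assumption}(e): the Kurdyka--\L ojasiewicz property of
\[
F(W,\Gamma,g)=\alpha\bar{\calL}(W,\Gamma)+\Omega(\Gamma)+\Omega^*(g)-\langle\Gamma,g\rangle
\]
on bounded sets. I would establish this by showing $F$ is definable in the o-minimal structure $\mathbb{R}_{\mathrm{an},\exp}$ and invoking the standard result of Bolte--Daniilidis--Lewis that definable functions satisfy the KL inequality on bounded sets. Each listed activation ($t$, $\tfrac{1}{1+e^{-t}}$, $\tanh$, softplus) and each listed loss ($t^2$, $e^t$, $\log(1+e^{-t})$, and softmax cross-entropy) is either semi-algebraic or built from restricted analytic and exponential functions, hence definable in $\mathbb{R}_{\mathrm{an},\exp}$. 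Since definability is preserved under finite sums, products, and compositions, the network map $\Phi_W$ and therefore $\eL(W)$ are definable; adding the semi-algebraic coupling $\tfrac{1}{2\nu}\|W-\Gamma\|^2$ and the group Lasso preserves definability of $\bar{\calL}+\Omega$. Finally, the conjugate $\Omega^*$ is the indicator function of the Cartesian product of dual $\ell_2$-unit balls, which is semi-algebraic, and the inner product term is polynomial; so $F$ is definable.

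Once these five items are checked, Theorem~\ref{Thm:conv-SLBI} applies with $\alpha$ in the prescribed range, which gives that $(W_k,\Gamma_k)$ converges to a critical point of $\bar{\calL}$ and that $\{W_k\}$ converges to a critical point of $\eL$ as asserted. The verifications of (a), (c), (d) are routine; the mildly delicate point in (b) is that boundedness of the iterates, rather than of all level sets of $\eL$, is what the KL machinery actually consumes, and this is already obtained from the monotone decrease of $F$ in the proof of Theorem~\ref{Thm:conv-SLBI}. The genuine technical work is organizing the o-minimal definability argument across the compositional depth of deep networks and confirming definability of $\Omega^*$, after which the corollary follows from Theorem~\ref{Thm:conv-SLBI} without further calculation.
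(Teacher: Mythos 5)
Your core argument is the same as the paper's: the only condition that genuinely needs work for the listed losses, activations, and the group Lasso is Assumption~\ref{Assumption}(e), and you establish it exactly as the paper does, by showing the Lyapunov function $F$ is definable in an o-minimal structure containing the exponential function and the semialgebraic sets (each $\ell$ and $\sigma_i$ is definable, $\eL$ is definable as a finite sum of compositions of definable maps, the coupling term $\frac{1}{2\nu}\|W-\Gamma\|_F^2$ and $\langle\Gamma,g\rangle$ are semialgebraic, and $\Omega$, $\Omega^*$ are definable), and then invoking the KL property of definable functions. The paper handles (a)--(d) by citing Zeng et al.\ and otherwise treats them as part of ``the conditions of Theorem~\ref{Thm:conv-SLBI}''; your explicit checks of (a), (c), (d) are consistent with that. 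Incidentally, your description of $\Omega^*$ as the indicator of the product of dual $\ell_2$-unit balls is the correct conjugate; the paper's phrasing (``the maximum of group $\ell_2$-norms'') is the dual norm rather than the conjugate, though both are semialgebraic, so the definability conclusion is unaffected.

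The one step in your write-up that would not survive scrutiny is the proposed replacement for Assumption~\ref{Assumption}(b). You argue that boundedness of $(W_k,\Gamma_k)$ can be extracted from the monotone decrease of $F$ along the iterates rather than from bounded level sets of $\eL$. But decrease of $F$ only confines the iterates to the sublevel set $\{F\leq F(Q_0)\}$, and that set is bounded only if $\bar{\calL}$ is coercive in $(W,\Gamma)$; the coupling term $\frac{1}{2\nu}\|W-\Gamma\|_F^2$ controls $W-\Gamma$ but not $W$ itself, so for a loss without bounded level sets (your own sigmoid-output example) the sublevel set of $F$ can be unbounded and the iterates can in principle escape to infinity. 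The paper's boundedness lemma (Lemma~\ref{Lemma:convergence-funcvalue}(ii)) derives boundedness of $W_k$ directly from Assumption~\ref{Assumption}(b), which remains a hypothesis of the corollary through the phrase ``under the conditions of Theorem~\ref{Thm:conv-SLBI}''. So you should either keep (b) as an assumption, as the paper does, or supply a genuine coercivity argument (e.g., an explicit $\ell_2$ term in the empirical risk); the descent of $F$ alone does not provide it.
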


%Please see Appendix \ref{sc:proof} for Proofs of Theorem \ref{Thm:conv-SLBI} and Corollary \ref{Corollary:DL}.

\section{Experiments}

\label{sec:exp}

%\section{Stochastic Approximations of SplitLBI}
%
%\label{sc:boost}

This section introduces some stochastic variants of {DessiLBI}, followed by four set of experiments revealing the insights of  {DessiLBI} exploring structural sparsity of deep networks.

%and implementations, followed by four groups of experiments demonstrating the utilities of weight parameter $W_t$ and structural sparsity parameter $\Gamma_t$ in prediction, interpretability, and capturing effective sparse subnetworks.

%Note that SplitLBI computes the structural sparsity parameter $\Gamma_k$ by Eq.(\ref{Eq:SLBI-iterate3}) for all the experiments.

\textbf{Batch {DessiLBI}}. To train networks on large datasets,
stochastic approximation of the gradients in DessiLBI  over the mini-batch
$(\mathbf{X},\mathbf{Y})_{{\mathrm{batch_{t}}}}$ is adopted to update
the parameter $W$, 
\begin{equation}
\widetilde{\nabla}_{W}^{t}=\nabla_{W}\eL\left(W\right)\mid{}_{(\mathbf{X},\mathbf{Y})_{\mathrm{batch}_{t}}}.\label{eq:sgd}
\end{equation}
\textbf{{DessiLBI}  with momentum (Mom)}. Inspired by the variants of SGD,
the momentum term can be also incorporated to the standard DessiLBI
that leads to the following updates of $W$ by replacing Eq (\ref{Eq:SLBI-iterate1})
with, 
\begin{subequations} 
	\begin{eqnarray}
	v_{t+1} & = & \tau v_{t}+\widetilde{\nabla}_{W}\bar{\mathcal{L}}\left(W_{t},\Gamma_{t}\right)\\
	W_{t+1} & = & W_{t}-\kappa\alpha v_{t+1}
	\end{eqnarray}
	\vspace{-0.1in}
\end{subequations} where $\tau$ is the momentum factor, empirically
setting as 0.9. 

%One immediate application of such stochastic
%algorithms of DessiLBI  is to ``boost networks\textquotedbl , \emph{i.e.}
%growing a network from the null to a complex one by sequentially applying
%our algorithm on subnets with increasing complexities.

\textbf{DessiLBI  with momentum and weight decay (Mom-Wd). } The update formulation is ($\beta=1e^{-4}$)
\begin{eqnarray}
v_{t+1} & = & \tau v_{t}+\widetilde{\nabla}_{W}\bar{\mathcal{L}}\left(W_{t},\Gamma_{t}\right)\\
W_{t+1} & = & W_{t}-\kappa\alpha v_{t+1}-\beta W_{t}
\vspace{-0.1cm}
\end{eqnarray}

%\section{Experiments}
%
%\label{sec:exp}

\noindent \textbf{Implementation.}  Experiments are conducted over various backbones, \textit{e.g.}, LeNet, AlexNet, VGG, 
and ResNet. For MNIST and Cifar-10,
the default hyper-parameters of DessiLBI  are $\kappa=1$, $\nu=10$ and $\alpha_{k}$
is set as $0.1$, decreased by 1/10 every 30 epochs. In ImageNet-2012,
the DessiLBI  utilizes $\kappa=1$, $\nu=1000$, and $\alpha_{k}$
is initially set as 0.1, decays 1/10 every 30 epochs. We set $\lambda=1$
in Eq. (\ref{Eq:prox-operator}) by default, unless otherwise specified.
On MNIST and Cifar-10,  we have batch size as 128; and for all methods,
the batch size of ImageNet 2012 is 256. The standard data augmentation
implemented in pytorch is applied to Cifar-10 and ImageNet-2012,
as \cite{he2016deep}. The weights of all models are initialized as
\cite{he2015delving}. In the experiments, we define \emph{sparsity} as percentage of non-zero parameters, \textit{i.e.}, the number of non-zero weights dividing the total number
of weights in consideration.  Runnable codes can be downloaded\footnote{https://github.com/corwinliu9669/dS2LBI}.

\subsection{Image Classification \label{subsec:SplitLBI-in-training}}

%In DessiLBI, the weight parameter $W_t$ explores over-parameterized models that can achieve the comparable performance as the optimzers on the image classification of large-scale dataset.

\textbf{Settings}. We compare different variants of SGD and Adam
in the experiments. By default, the learning rate of competitors is
set as $0.1$ for SGD and its variant and $0.001$ for Adam and its
variants, and gradually decreased by 1/10 every 30 epochs.
(1) Naive SGD: the standard SGD with batch input. (2) SGD with
$\mathit{l}_{1}$ penalty (Lasso). The $\mathit{l}_{1}$ norm is applied to
penalize the weights of SGD by encouraging the sparsity of learned
model, with the regularization parameter of the $\mathit{l}_{1}$ penalty term being set as 
$1e^{-3}$ %\textcolor{black}{This variant is a direct extension of using Lasso-like loss function to optimize deep model, though there is no theoretical guarantee about the convergence of this variant.}
(3) SGD with momentum (Mom): we utilize momentum 0.9 in SGD. (4) SGD
with momentum and weight decay (Mom-Wd): we set the momentum 0.9 and
the standard $\mathit{l}_{2}$ weight decay with the coefficient weight
$1e^{-4}$. (5) SGD with Nesterov (Nesterov): the SGD uses nesterov
momentum 0.9. (6) Naive Adam: it refers to standard  Adam\footnote{In the Appendix of Tab. \ref{table:supervised_imagenet_mnist_cifar}, we further give  more results for Adabound, Adagrad,  Amsgrad, and  Radam, which, we found, are difficulty trained on ImageNet-2012 in practice.}.

The results of image classification are shown in Tab.~\ref{table:supervised_imagenet}.
 Our DessiLBI  variants may achieve comparable or even better performance than SGD variants in 100 epochs, indicating the efficacy in learning dense, over-parameterized models. %For example, the result of SplitLBI (Mom) is 1.57/0.88 higher than that of SGD (Mom). 

\begin{table}
\begin{centering}
 
\par\end{centering}
\begin{centering}
\begin{tabular}{c}
\begin{tabular}{c|c|cc}
\hline 
\multicolumn{2}{c|}{{\small{}{}Dataset }} & \multicolumn{2}{c}{{\small{}{}ImageNet-2012}}\tabularnewline
\hline 
{\small{}{}Models }  & {\small{}{}Variants }  & {\small{}{}AlexNet }  & {\small{}{}ResNet-18}\tabularnewline
\hline 
\multirow{5}{*}{\emph{\small{}{}SGD}{\small{}{} }} & {\small{}{}Naive}  & {\small{}{}{}--/-- }  & {\small{}{}{}60.76/79.18 }\tabularnewline
 & {\small{}{}$\mathit{l}_{1}$}  & {\small{}{}46.49/65.45} & {\small{}{}51.49/72.45}\tabularnewline
 & {\small{}{}Mom }  & {\small{}{}55.14/78.09 }  & {\small{}{}66.98/86.97 }\tabularnewline
 & {\small{}{}Mom-W}\emph{\small{}{}d$^{\star}$}{\small{}{} }  & {\small{}{}56.55/79.09 }  & {\small{}{}69.76/89.18}\tabularnewline
 & {\small{}{}Nesterov }  & {\small{}{}-/- }  & {\small{}{}70.19/89.30}\tabularnewline
\hline 
\multirow{1}{*}{\emph{\small{}{}Adam}{\small{}{} }} & {\small{}{}Naive}  & {\small{}{}--/-- }  & {\small{}{}59.66/83.28}\tabularnewline
\hline 
\multirow{3}{*}{\emph{\small{}{}}\emph{\small{}DessiLBI}}{\small{}{} } & {\small{}{}Naive}  & {\small{}{}55.06/77.69 }  & {\small{}{}65.26/86.57 }\tabularnewline
 & {\small{}{}Mom }  & {\small{}{}56.23/78.48 }  & {\small{}{}68.55/87.85}\tabularnewline
 & {\small{}{}Mom-Wd }  & \textbf{\small{}{}57.09/79.86 }{\small{} } & \textbf{\small{}{}{}{}70.55/89.56}\tabularnewline
\hline 
\end{tabular}\tabularnewline
\end{tabular}
\par\end{centering}
\begin{centering}
\par\end{centering}
\vspace{-0.1in}
{\small{}\caption{\label{table:supervised_imagenet} Top-1/Top-5 accuracy(\%) on ImageNet-2012. $^{\star}$: results from the
official pytorch website. We use the official pytorch codes to run
the competitors. More results on MNIST/Cifar-10, please refer Tab. \ref{table:supervised_imagenet_mnist_cifar} in supplementary.}
} 
\vspace{-0.1in}
\end{table}

%\begin{table}
%\begin{centering}
%{\small{}{}}%
%\begin{tabular}{c}
%{\small{}{}{}\hspace{-0.3in}}%
%\begin{tabular}{c|c|c|c|c|c|c|c|c}
%\hline 
% & \multicolumn{4}{c|}{{\small{}{}SGD}} & {\small{}{}Adam}  & \multicolumn{3}{c}{SplitLBI}\tabularnewline
%\hline 
%{\small{}{}Network}  & {\small{}{}Naive$^{\sharp}$ }  & {\small{}{}Mom$^{\sharp}$}  & {\small{}{}Mom-W}\emph{\small{}{}d$^{\star}$}{\small{} } & {\small{}{}Nesterov$^{\sharp}$ }  & {\small{}{}Naive$^{\sharp}$ }  & {\small{}{}Naive$^{\sharp}$} & {\small{}{}Mom}  & {\small{}{}Mom-Wd}\tabularnewline
%\hline 
%\hline 
%{\small{}{}ResNet-18}  & {\small{}{}60.76/79.18 }  & {\small{}{}66.98/86.97}  & {\small{}{}69.76/89.18}  & {\small{}{}70.19/89.30}  & {\small{}{}59.66/83.28}  & 65.26/86.57 & {\small{}{}68.55/87.85}  & \textbf{\small{}{}70.55/89.56}\tabularnewline
%\hline 
%{\small{}{}AlexNet}  & {\small{}{}50.91/74.96 }  & {\small{}{}55.14/78.09 }  & {\small{}{}56.55/79.09}  & {\small{}{}--/-- }  & {\small{}{}--/-- }  & 55.06/77.69 & {\small{}{}56.23/78.48}  & \textbf{\small{}{}57.09/79.86}\tabularnewline
%\hline 
%\end{tabular}\tabularnewline
%\end{tabular}
%\par\end{centering}
%\begin{centering} 
%\par\end{centering}
%\centering{}{\small{}{}\caption{\label{table:supervised_imagenet} Top-1/Top-5 accuracy(\%) on ImageNet-2012
%dataset. $^{\star}$: results from the official pytorch website. $^{\sharp}$:
%results from our running of the official pytorch versions. All models
%are trained by 100 epochs. }
%} 
%\end{table}

\subsection{Learning Sparse Filters for Interpretation} 

\begin{figure}[htb]
	\centering%
	\begin{tabular}{c}
		\hspace{-0.1in}\includegraphics[width=3in]{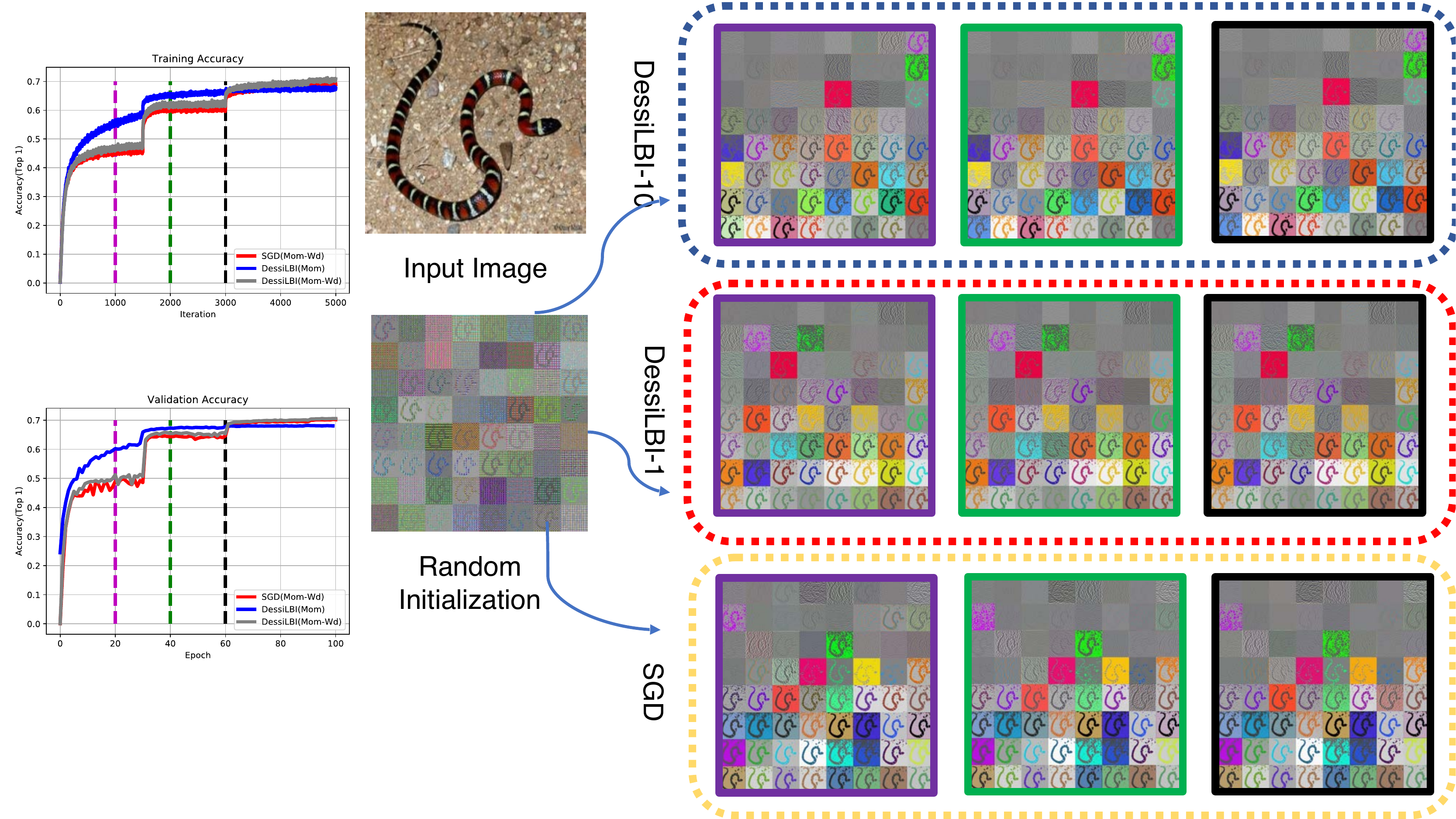}\tabularnewline
	\end{tabular}%\caption*{(c)Validation Accuracy}
	\vspace{-0.15in}
	\caption{\label{fig:imagenet training} Visualization of the first convolutional layer filters of ResNet-18 trained on ImageNet-2012. Given the input
		image and initial weights visualized in the middle, filter response gradients
		at 20 (purple), 40 (green), and 60 (black) epochs are visualized by \cite{springenberg2014striving}. The ``DessiLBI-10'' (``DessiLBI-1'') in the right figure refers to DessiLBI  with $\kappa = 10$ and $\kappa = 1$, respectively. Please refer to Fig. \ref{fig:imagenet-vis} in the Appendix for larger size figure.}
		\vspace{-0.2in}
\end{figure}

\begin{figure*}[t!]
	\centering{}%
	\begin{tabular}{c}
	%	\hspace{-0.2in}
		\includegraphics[width=6.5in]{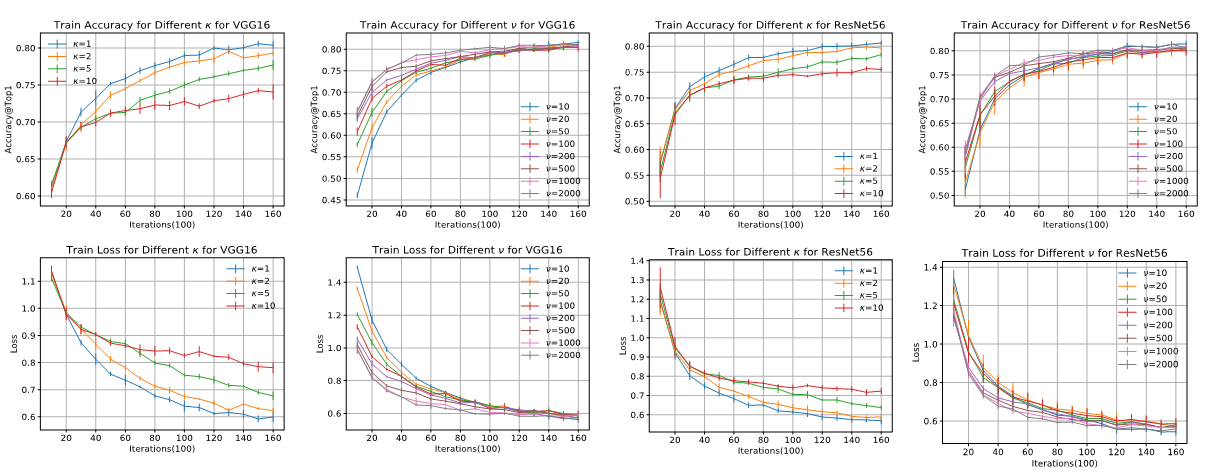} \tabularnewline
	\end{tabular}
		\vspace{-0.15in}
	\caption{Training loss and accuracy curves at different $\kappa$ and $\nu$\label{figure:train}. The X-axis and Y-axis indicate the training epochs, and loss/accuracy. The results are repeated for 5 rounds, by keeping the exactly same initialization for each model. In each round, we use the same initialization for every hyperparameter. For all models, we train for 160 epochs with initial learning rate (lr) of 0.1 and drop by 0.1 at epoch 80 and 120. 
	}
	\vspace{-0.15in}
\end{figure*}

\begin{figure*}[t!]
	\centering{}%
	\begin{tabular}{c}
%		\hspace{-0.2in}
		\includegraphics[width=6.5in]{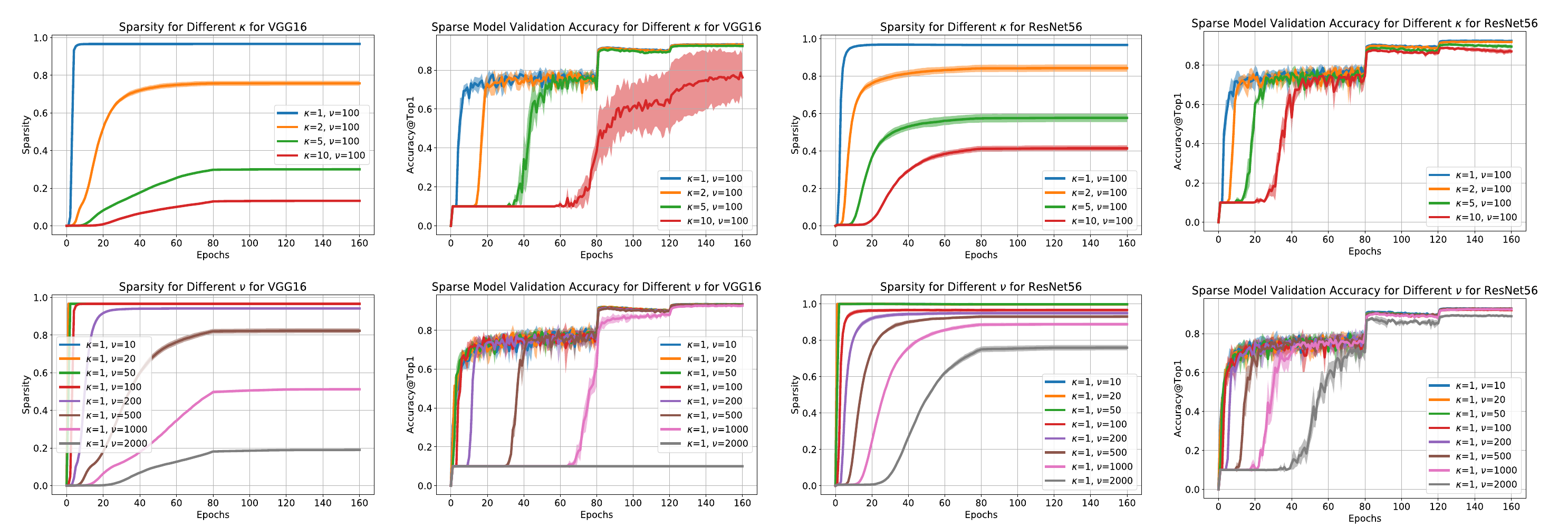} \tabularnewline
	\end{tabular}
		\vspace{-0.14in}
	\caption{Sparsity and validation accuracy by different $\kappa$ and $\nu$\label{figure:spa} show that moderate sparse models may achieve comparable test accuracies to dense models without fine-tuning. Sparsity is obtained as the percentage of nonzeros in $\Gamma_t$ and sparse model at epoch $t$ is obtained by projection of $W_t$ onto the support set of $\Gamma_t$, i.e. pruning the weights corresponding to zeros in $\Gamma_t$. The best accuracies achieved are recorded in comparison with full networks in Tab. \ref{table:ablation_k} and \ref{table:ablation_n} of Appendix for different $\kappa$ and $\nu$, respectively. 
		X-axis and Y-axis indicate the training epochs, and sparsity/accuracy.
		The results are repeated for 5 times. Shaded area indicates the variance;
		and in each round, we keep the exactly same initialization for each
		model. In each round, we use the same initialization for every hyperparameter.  For all the model, we train for 160 epochs with initial learning rate (lr) of 0.1 and decrease by 0.1 at  epoch 80 and 120. }
\vspace{-0.1in}
\end{figure*}

In DessiLBI, the structural sparsity parameter $\Gamma_t$ explores important sub-network architectures that contributes significantly to the loss or error reduction in early training stages. Through the $\ell_2$-coupling, structural sparsity parameter $\Gamma_t$ may guide the weight parameter to explore those sparse models in favour of improved interpretability.  Figure~\ref{mnist_visualization} visualizes some sparse filters learned by DessiLBI  of LeNet-5 trained on MNIST (with $\kappa=10$ and weight decay every $40$ epochs), in comparison with dense filters learned by SGD. The activation pattern of such sparse filters favours high order global correlations between pixels of input images. To further reveal the insights of learned
patterns of DessiLBI, we visualize the first convolutional layer of
ResNet-18 on ImageNet-2012 along the training path of our DessiLBI
as in Fig. \ref{fig:imagenet training}. The left figure compares
the training and validation accuracy of DessiLBI  and SGD. The right
figure compares visualizations of the filters learned by DessiLBI
and SGD. 

\begin{figure*}
	\begin{centering}
		\begin{tabular}{cccc}
			\hspace{-0.15in}\includegraphics[width=0.25\textwidth]{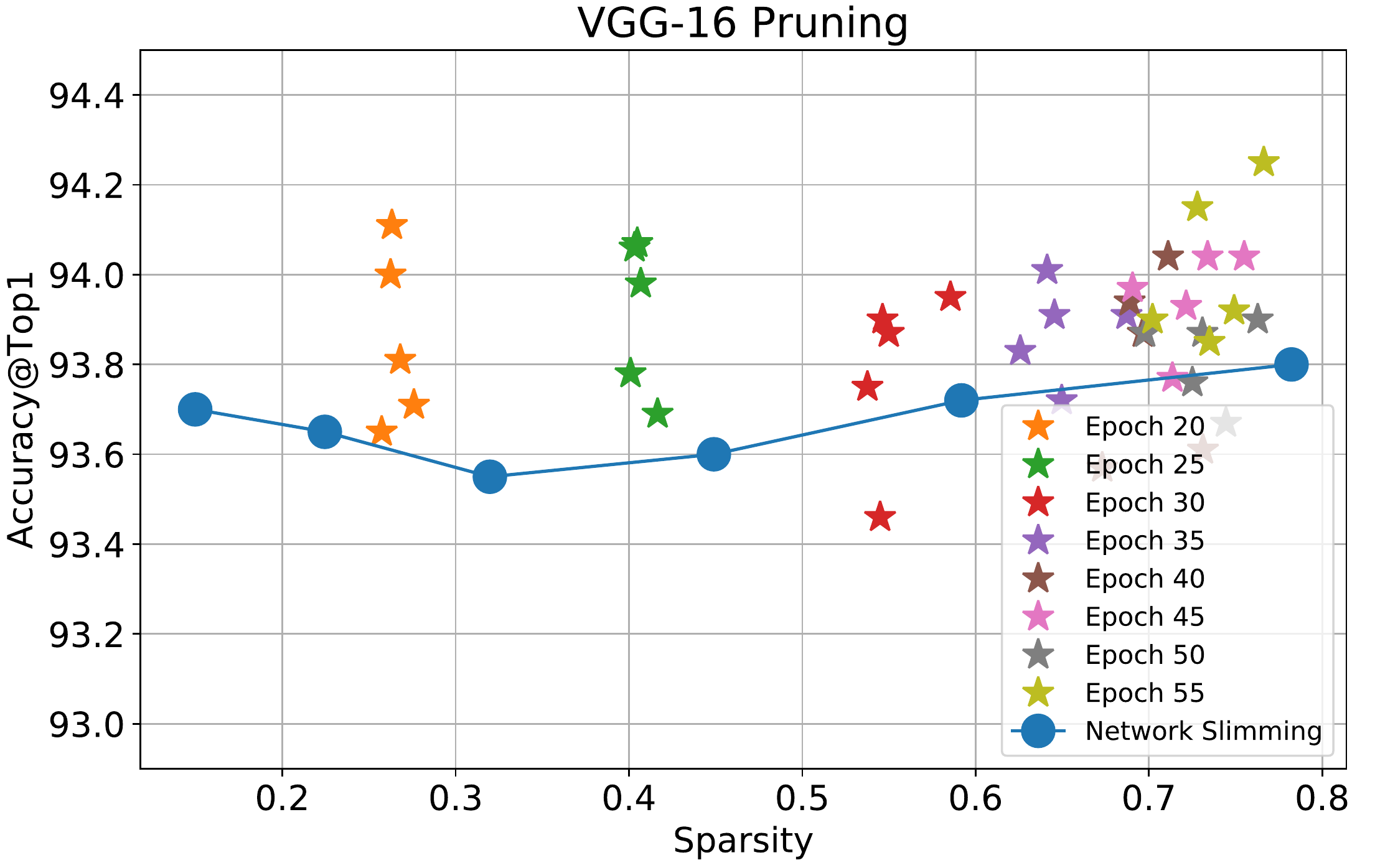}
			\hspace{-0.15in} & \includegraphics[width=0.25\textwidth]{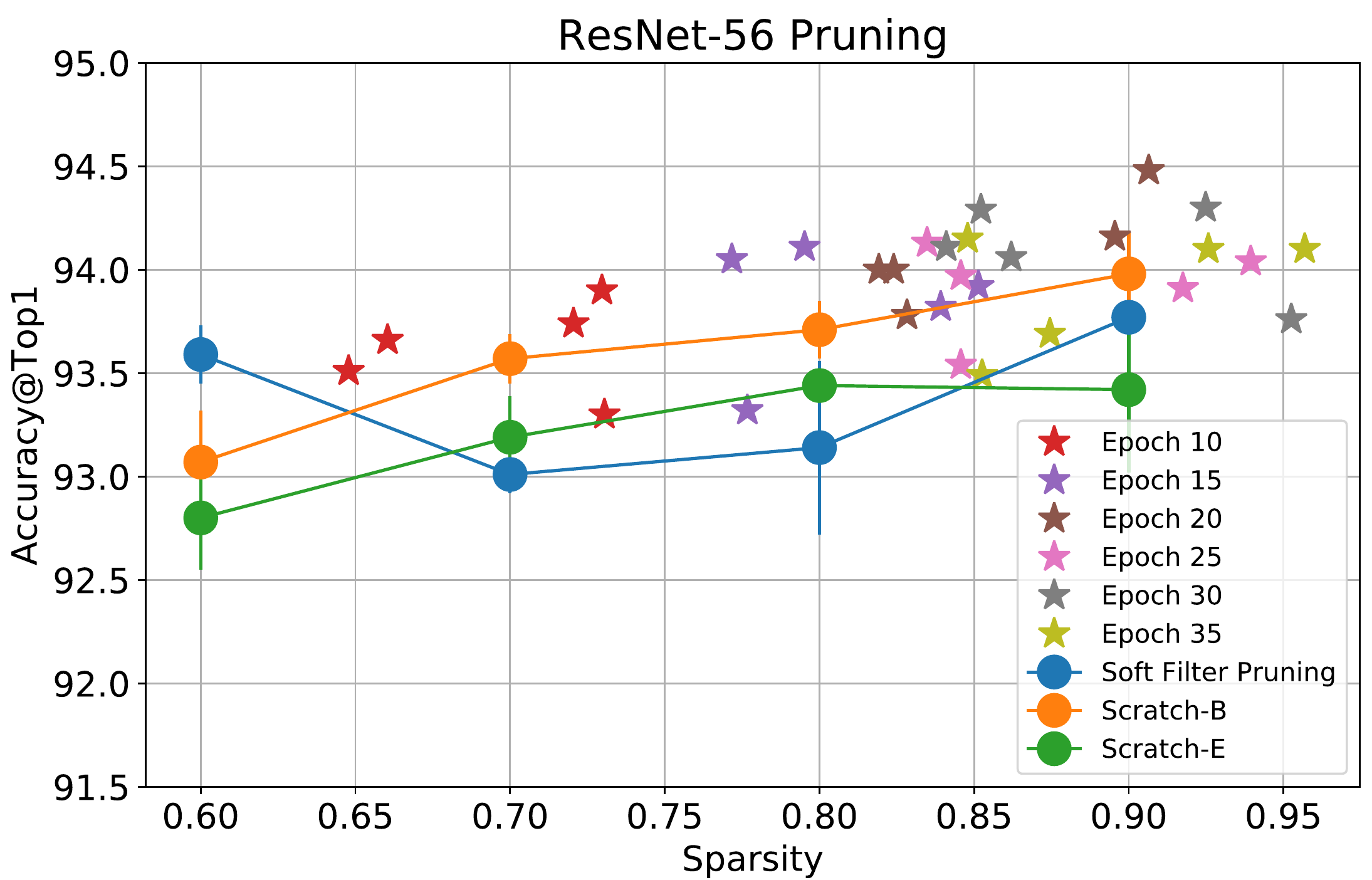}
			\hspace{-0.15in} &  \includegraphics[width=0.25\textwidth]{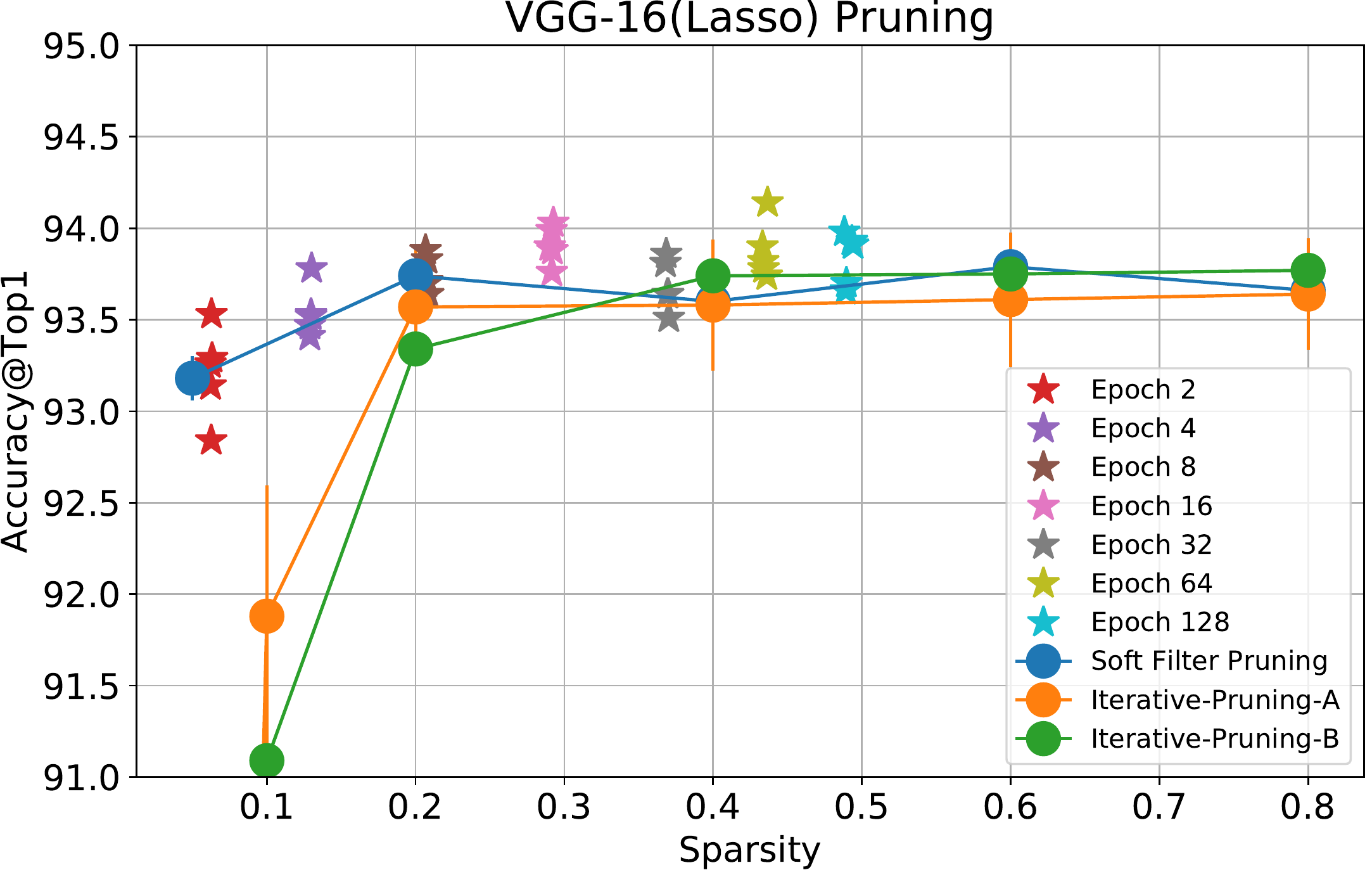}
			\hspace{-0.15in} & \includegraphics[width=0.25\textwidth]{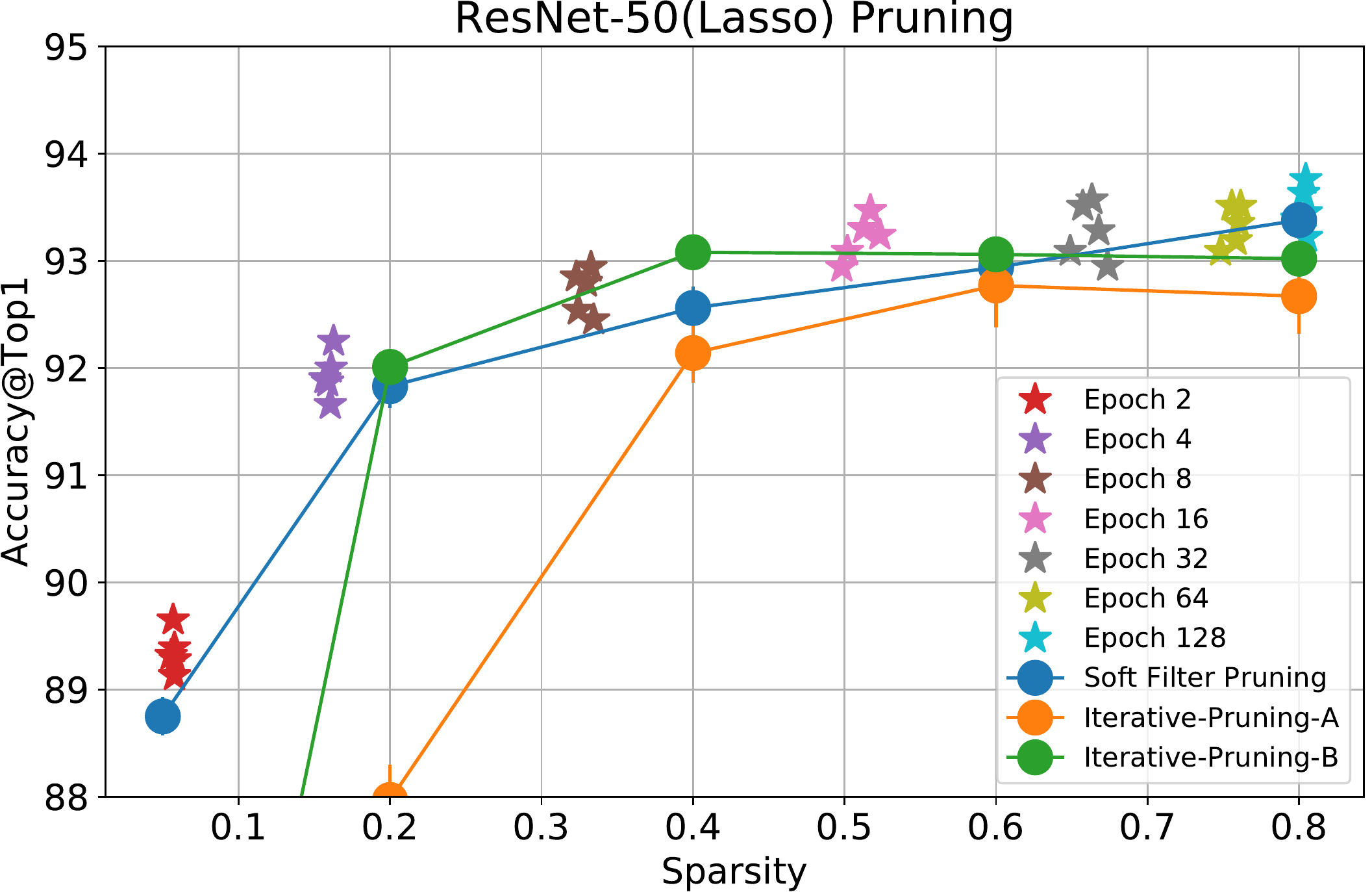}\tabularnewline
			(a) VGG-16  & (b) ResNet-56 & (c) VGG-16 (Lasso) & (d) ResNet-50 (Lasso) \tabularnewline
			%\multicolumn{2}{c}{(a) Finding SplitLBI(Lottery)} & \multicolumn{2}{c}{(b) Finding SplitLBI (Lottery) by Lasso}\tabularnewline
		\end{tabular}
		\par\end{centering}
	\begin{centering}
		\vspace{-0.15in}
		\caption{DessiLBI  with early stopping finds sparse subnets whose test accuracies (stars) after 
			retrain are comparable or even better than the baselines (Network Slimming (reproduced by the released codes from \cite{rethinking_iclr} )
			, Soft-Filter Pruning (Tab. 10), 
			Scratch-B (Tab. 10), Scratch-E (Tab. 10), and ``Rethinking-Lottery'' (Tab. 9a) as reported in \cite{rethinking_iclr}, Iterative-Pruning-A \cite{han2015learning} and Iterative-Pruning-B \cite{zhu2017prune} (reproduced based on our own implementation)).
			Sparse filters of VGG-16 and ResNet-56 are show in (a) and (b), while sparse weights of VGG-16 and 
			ResNet-50 are shown in (c) and (d). 
			\label{figure:lotteryticketacc} }
		\par\end{centering}
		\vspace{-0.1in}
\end{figure*}

{\bf Visualization.} To be specific, denote
the weights of an $l$-layer network as $\{W^{1},W^{2},\cdots,W^{l}\}$.
For the $i-$th layer weights $W^{i}$, denote the $j-$th channel
$W_{j}^{i}$. Then we compute the gradient of the sum of the feature
map computed from each filter $W_{j}^{i}$ with respect to the input
image (here a snake image). We further conduct the min-max normalization
to the gradient image, and generate the final visualization map. The
right figure compares the visualized gradient images of first convolutional
layer of 64 filters with $7\times7$ receptive fields. We visualize
the models parameters at 20 (purple), 40 (green), and 60 (black) epochs,
respectively, which corresponds to the bounding boxes in the right
figure annotated by the corresponding colors, \emph{i.e.}, purple,
green, and black. We order the gradient images produced from 64 filters
by the descending order of the magnitude ($\ell_{2}$-norm) of filters,
\textit{i.e.}, images are ordered from the upper left to the bottom
right. For comparison, we also provide the visualized gradient from
random initialized weights. 

{\bf DessiLBI learns sparse filters for improved interpreation}.
 Filters learned by ImageNet prefer to non-semantic texture rather than shape and color. The filters of high norms
mostly focus on the texture and shape information, while color information
is with the filters of small magnitudes. This phenomenon is in
accordance with observation of \cite{abbasi2017structural} that filters
mainly of color information can be pruned for saving computational
cost. Moreover, among the filters of high magnitudes, most of them
capture non-semantic textures while few pursue shapes. This shows
that the first convolutional layer of ResNet-18 trained on ImageNet
learned non-semantic textures rather than shape to do image classification
tasks, in accordance with recent studies \citep{TubingenICLR19}. How to enhance
the semantic shape invariance learning, is arguably a key to improve
the robustness of convolutional neural networks.
\vspace{-0.16cm}

\subsection{Training Curves and Structural Sparsity at $(\kappa,\nu)$}

On Cifar-10, we use VGG-16 and ResNet-56 to show the influence of hyperparameters ($\kappa$ and $\nu$) on: (i) training curves (loss and accuracies); and (ii)  structural sparsity learned by $\Gamma_t$.

% via exploring test accuracies of sparse models obtained by projecting $W_t$ onto the support set of $\Gamma_t$ (mask), 

\textbf{Implementation.} We use DessiLBI  with momentum and weight decay,
due to the good results in Sec.~\ref{subsec:SplitLBI-in-training}.
Specifically, we have these experiments, repeated for 5 times: (1) we fix $\nu=100$
and vary $\kappa=1,2,5,10$, where training curves of $W_t$ are shown in Fig.~\ref{figure:train}, sparsity of $\Gamma_t$ and validation accuracies of sparse models are shown in top row of Fig.~\ref{figure:spa}. Note that we keep $\kappa\cdot\alpha_{k}=0.1$
in Eq (\ref{eq:slbi-iss-show-a}), to make comparable learning rate
of each variant, and also consistent with SGD. Thus  
$\alpha_{k}$ will be adjusted by different $\kappa$. (2) we fix $\kappa=1$,
and change $\nu=10,20,50,100,200,500,1000,2000$
as in Fig.~\ref{figure:train} and the second row of Fig.~\ref{figure:spa} ($\alpha_{k}=0.1$)\footnote{Figure.~\ref{figure:ablation} in Appendix shows  validation accuracies of full models learned by $W_t$.}. 

\textbf{Influence of $\kappa$ and $\nu$ on training curves}. Training loss $(\eL)$ and accuracies in Fig.~\ref{figure:train} converge at different speeds when $\kappa$ and $\nu$ changes. In particular, larger $\kappa$ cause slower convergence, agreeing with the convergence rate in inverse proportion to $\kappa$ suggested in Lemma~\ref{Lemma:relative-error}. Increasing $\nu$ however leads to faster convergence in early epochs, with the advantage vanishing eventually.

\textbf{DessiLBI finds good sparse structure.} Sparse subnetworks achieve comparable performance to dense models without fine-tuning or retraining.  In Fig.~\ref{figure:spa}, the sparsity of $\Gamma$ grows as $\kappa$ and $\nu$ increase. While large $\kappa$ may cause a small number of important parameters growing rapidly, large $\nu$ will decouple $W_t$ and $\Gamma_t$ such that the growth of $W_t$ does not affect $\Gamma_t$ that may over-sparsify and deteriorate model accuracies. Thus a moderate choice of $\kappa$ and $\nu$ is preferred in practice. In Fig.~\ref{figure:spa}, Tab. \ref{table:ablation_k} and \ref{table:ablation_n} in Appendix, one can see that moderate sparse models may achieve comparable predictive power to dense models, even without fine-tuning or retraining. This shows that structural sparsity parameter $\Gamma_t$ can indeed capture important weight parameter $W_t$ through their coupling.

%We draw several conclusions from the experiments above. 
%The structural sparse set $\Gamma$ is not always
%enlarged. Actually, at very earl epochs, e.g. epoch 20 for $\kappa=1$,
%the selected weights of $\Gamma$ is not changed. We also find that the larger $\nu$, the sparser
%model produced by our SplitLBI. However, a very large $\nu$ will
%cause the SplitLBI too conservative to select weights. As results,
%the model learned by $\nu=2000$ is too sparse to achieve a decent
%performance as shown in Fig. \ref{figure:spa}. As expected, our results are comparable or even better than those
%of SGD. 

\subsection{Effective Subnetworks by Early Stopping}

With early stopping, $\Gamma_t$ in early epochs may learn effective subnetworks (\emph{i.e.} ``winning tickets" \citep{lottery-iclr19}) that after retraining achieve comparable or even better performance than existing pruning strategies by SGD. %Hence, important network architecture can be fast learned by SplitLBI with \emph{early stopping}.

%Next we are going to explore that even \emph{early stopped} structural sparsity parameter $\Gamma_t$ is already informative to finding winning tickets of sparse subnets. 

%As discussed above, the structural sparsity set \textcolor{black}{$\Gamma$
%can make a good subset, which can have good performance. Thus, we
%investigate whether these sparse subnets can be taken as a SGD(Lottery)
%as \cite{lottery-iclr19}. In particular, as shown in Fig. \ref{fig:finetunemnist}(a),
%we found that }\textcolor{black}{\emph{the}}\emph{ sparse subnets
%on training path may achieve a significant portion of performance},
%since generally the 
%Note that }LeNet-3 is a very less over-parameterized model; and thus
%results of our rewinded models are not comparable to SGD(Mom-Wd) and
%SplitLBI(Mom-Wd), but still significant better than that of the lottery
%method \textcolor{black}{\cite{lottery-iclr19}}. This still implies
%that our SplitLBI may find better sparse SplitLBI(Lottery) structure
%than the lottery method. \textcolor{black}{{} }Furthermore, we show
%that \emph{our SplitLBI can find a good SplitLBI(Lottery) without
%fully training the dense over-parameterized networks} by exploring
%the $\Gamma$ set at the early training epochs of SplitLBI.

\textbf{Settings.} On Cifar-10, we adopt one-shot pruning strategy with the backbones of VGG--16,
ResNet-50, and ResNet-56
as \cite{lottery-iclr19}, which firstly trains a dense over-parameterized
model by SGD for $T=160$ epochs and find the sparse structure by pruning weights or filters \citep{rethinking_iclr}, then secondly retrains the
structure from the scratch with $T$ epochs from the same initialization as the first step. For DessiLBI, instead of pruning weights/filters from dense models,
we directly utilize  structural sparsity $\Gamma_t$ at different training epochs to define the subnet architecture, followed by retrain-from-scratch\footnote{Preliminary results  of fine-tuning is  in Appendix Sec. \ref{sec:sparse}. }. 
In particular, we set
$\lambda=0.1,$ and $0.05$ for VGG-16, and ResNet-56 respectively,
since ResNet-56 has less parameters than VGG-16. We further introduce
another variant of our {DessiLBI}  by using Lasso rather than group lasso penalty
for $\Gamma_t$ to sparsify the weights of convolutional filters\footnote{{DessiLBI}  uses momentum and weight decay with hyperparameters shown in Tab. \ref{table:lotterysetting} in Appendix. }, denoting as VGG-16 (Lasso) and ResNet-50 (Lasso), individually. The results are reported over five rounds, as in Fig. \ref{figure:lotteryticketacc}
 Note that in different runs of DessiLBI, the sparsity of $\Gamma_t$ slightly varies.

{\bf Sparse subnets found by early stopping of DessiLBI is effective.} It  achieves remarkably good accuracy after retrain from scratch.
In Fig.\ref{figure:lotteryticketacc} (a-b), sparse filters discovered by $\Gamma_t$ at different epochs are compared against the methods of Network
Slimming \citep{Liu_2017_ICCV}, Soft Filter Pruning \citep{soft_filtering},
Scratch-B, and Scratch-E, whose results are reported from \cite{rethinking_iclr}.
At similar sparsity levels, DessiLBI  can achieve comparable or even better accuracy than competitors, even with sparse architecture learned from very early epochs (e.g. $t=20$ or $10$). Moreover in Fig.\ref{figure:lotteryticketacc} (c-d), we can draw the same
conclusion for the sparse weights of VGG-16 (Lasso) and ResNet-50 (Lasso), against the results reported in \cite{rethinking_iclr}, Iterative-Pruning-A \cite{han2015learning} and Iterative-Pruning-B  \cite{zhu2017prune} (reproduced based on our own implementation)) . These results shows that  structural sparsity  $\Gamma_t$ found by early stopping of {DessiLBI}  already discloses important subnetwork that may achieve remarkably good accuracy after retraining from scratch. Therefore, it is not necessary to fully train a dense model to find a successful sparse subnet architecture with comparable performance to the dense ones, \textit{i.e}., one can early stop DessiLBI  properly where the structural parameter $\Gamma_t$ unveils ``\emph{winning tickets}" \citep{lottery-iclr19}. 

%\textbf{Results}. We give the results of pruning networks of VGG-16
%and ResNet-56 by our SplitLBI in Fig \ref{figure:lotteryticketacc}.
%Along the training path, we employ as the structural sparsity SplitLBI(Lottery),
%the $\Gamma$ set at epoch 20, 25, 30, 35, 40, 45, 50, and 55, respectively,
%which is further utilized to retrain from scratch. The experiments
%are repeated for five times; and we compute the sparsity and accuracy
%of each model. Our methods are compared against the methods of Network
%Simming \cite{Liu_2017_ICCV}, Soft Filter Pruning \cite{soft_filtering},
%Scratch-B, and Scratch-E, whose results are reported from \cite{rethinking_iclr}.
%Note that different from competitors, our SplitLBI can not directly
%control the sparsity of learned model. Thus the model of each round
%at each epoch has slightly different sparsity. In general, our SplitLBI
%can achieve much better accuracy with much better sparsity, even with
%the $\Gamma$ set learned from very early epochs. This indicates that
%interestingly our SplitLBI could find a good structural sparsity SplitLBI(Lottery)
%-- $\Gamma$ set without fully training a dense network. Furthermore,
%in Fig. \ref{figure:lotteryticketfilteracc}, we can draw the same
%conclusion from the models of VGG-16 (Lasso), and ResNet-50 (Lasso),
%which is directly compared against the results reported in \cite{rethinking_iclr} (Table 9a, denoted by ``Rethinking-Lottery''). Our $\Gamma$ set stopped at very early
%epochs, can still outperform the fully trained dense networks or pruning
%algorithms.
%
%

\section{Conclusion}

This paper presents a novel algorithm -- DessiLBI in exploring structural sparsity of deep network. It is derived from  differential inclusions of inverse scale space, with a proven global convergence to KKT points from arbitrary initializations. \textcolor{black}{Extensive experiments reveal the effectiveness of our algorithm in training over-parameterized models and exploring effective sparse architecture of deep models.}

%	In this paper, a parsimonious deep learning method is proposed based on differential inclusions of inverse scale spaces. Implemented by a variable splitting scheme, such a dynamics system can exploit over-parameterized models and structural sparsity simultaneously. Besides, its simple discretization, i.e., the DessiLBI, has a proven global convergence and hence can be employed to train deep networks. We have experimentally shown that it can achieve better performance and interpretability than SGD on many datasets including ImageNet-2012. What's more, equipped with early stopping, such a structural sparsity can unveil the "winning tickets" -- the architecture of sub-networks which after re-training can achieve comparable and even better accuracy than original dense networks. 

%Furthermore, the proposed method can be applied to find the lottery of an overparameterized network.

%\commyy{This sec can be omitted if beyond the page limit.}

\section*{Acknowledgement}

The authors would like to thank helpful discussions with Yizhou Wang, Bao Wang, Stanley Osher, Jack Xin, and Wotao Yin. This work was supported in part by NSFC Projects (61977038), Science and Technology Commission of Shanghai Municipality Projects (19511120700, 19ZR1471800), and Shanghai Research and Innovation Functional Program (17DZ2260900). Dr. Zeng is supported by the Two Thousand Talents Plan of Jiangxi Province.
The research of Yuan Yao was supported in part by Hong Kong Research Grant Council (HKRGC) grant 16303817, ITF UIM/390, as well as awards from Tencent AI Lab, Si Family Foundation, and Microsoft Research-Asia.

\newpage
\bibliographystyle{icml2020}
\bibliography{iclr2019_conference,reference,YY_EndNote}
%\bibliography{nips2019}

\newpage\onecolumn

\appendix
%dummy comment inserted by tex2lyx to ensure that this paragraph is not empty%dummy comment inserted by tex2lyx to ensure that this paragraph is not empty

%\input{appendix_icml_v2_yanwei.tex} 

\section*{Appendix to \emph{DessiLBI for deep learning: structural sparsity via differential inclusion paths}}

%\section{Appendix}

\section{Proof of Theorem \ref{Thm:conv-SLBI}}

\label{sc:proof}

First of all, we reformulate Eq.~(\ref{Eq:SLBI-reformulation}) into
an equivalent form. Without loss of generality, consider $\Omega=\Omega_1$ in the sequel. 

Denote $R(P):=\Omega(\Gamma)$, then Eq. (\ref{Eq:SLBI-reformulation})
can be rewritten as, DessiLBI
\begin{subequations} 
\begin{align}
 & P_{k+1}=\mathrm{Prox}_{\kappa R}(P_{k}+\kappa(p_{k}-\alpha\nabla\bar{\calL}(P_{k}))), \label{Eq:SLBI-reform2-iter1}\\
 & p_{k+1}=p_{k}-\kappa^{-1}(P_{k+1}-P_{k}+\kappa\alpha\nabla\bar{\mathcal{L}}(P_{k})),\label{Eq:SLBI-reform2-iter2}
\end{align}
\end{subequations} where $p_{k}=[0,g_{k}]^{T}\in\partial R(P_{k})$
and $g_{k}\in\partial\Omega(\Gamma_{k})$. Thus DessiLBI is equivalent
to the following iterations, 
\begin{subequations} 
\begin{align}
 & W_{k+1}=W_{k}-\kappa\alpha\nabla_{W}\bar{\mathcal{L}}(W_{k},\Gamma_{k}),\label{Eq:SLBI-reform-iter1}\\
 & \Gamma_{k+1}=\mathrm{Prox}_{\kappa\Omega}(\Gamma_{k}+\kappa(g_{k}-\alpha\nabla_{\Gamma}\bar{\mathcal{L}}(W_{k},\Gamma_{k}))),\label{Eq:SLBI-reform-iter2}\\
 & g_{k+1}=g_{k}-\kappa^{-1}(\Gamma_{k+1}-\Gamma_{k}+\kappa\alpha\cdot\nabla_{\Gamma}\bar{\mathcal{L}}(W_{k},\Gamma_{k})).\label{Eq:SLBI-reform-iter3}
\end{align}
\end{subequations} 

Exploiting the equivalent reformulation (\ref{Eq:SLBI-reform-iter1}-\ref{Eq:SLBI-reform-iter3}),
one can establish the global convergence of $(W_{k},\Gamma_{k},g_{k})$ based on the Kurdyka-{\L }ojasiewicz framework. In this section, the following extended version
of Theorem \ref{Thm:conv-SLBI} is actually proved. 
 
\begin{thm}{[}Global Convergence of DessiLBI{]} \label{Thm:conv-SLBI+} Suppose that Assumption
\ref{Assumption} holds. Let $(W_{k},\Gamma_{k},g_{k})$ be the sequence
generated by DessiLBI (Eq. (\ref{Eq:SLBI-reform-iter1}-\ref{Eq:SLBI-reform-iter3}))
with a finite initialization. If 
\begin{align*}
0<\alpha_{k}=\alpha<\frac{2}{\kappa(Lip+\nu^{-1})},
\end{align*}
then $(W_{k},\Gamma_{k},g_{k})$ converges to a critical point of
$F$. Moreover, $\{(W_{k},\Gamma_{k})\}$ converges to a stationary
point of $\bar{\mathcal{L}}$ defined in Eq. \ref{eq:sparse_loss},
and $\{W^{k}\}$ converges to a stationary point of $\eL(W)$. 
\end{thm}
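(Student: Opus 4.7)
The plan is to follow the standard Kurdyka--{\L}ojasiewicz convergence recipe (cf.\ \cite{Attouch2013}) adapted to the mirror-descent splitting present in DessiLBI. Three ingredients suffice on a bounded iterate sequence: (i) sufficient decrease of the Lyapunov function $F$ defined in Eq.~\ref{Eq:Lyapunov-fun}; (ii) a subgradient of $F$ at $(P_{k+1},\tilde{g}_{k+1})$ whose norm is bounded by a constant multiple of $\|P_{k+1}-P_k\|$; and (iii) the KL property of $F$ on bounded sublevel sets, which is directly furnished by Assumption~\ref{Assumption}(e). Boundedness of $\{W_k\}$ comes from monotonicity of $F$ together with Assumption~\ref{Assumption}(b,c), boundedness of $\{\Gamma_k\}$ from the proximal formula in Eq.~\ref{Eq:SLBI-reform-iter2}, and boundedness of $\{g_k\}$ from Assumption~\ref{Assumption}(d) applied on the resulting compact set.

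For the sufficient descent I would treat the two blocks separately. The $W$-update in Eq.~\ref{Eq:SLBI-reform-iter1} is pure gradient descent on $\bar{\calL}(\cdot,\Gamma_k)$ with step $\kappa\alpha$, and since $\nabla_W\bar{\calL}$ is Lipschitz with constant $Lip+\nu^{-1}$, the classical descent lemma yields
$$\bar{\calL}(W_{k+1},\Gamma_k)-\bar{\calL}(W_k,\Gamma_k)\leq -\left(\tfrac{1}{\kappa\alpha}-\tfrac{Lip+\nu^{-1}}{2}\right)\|W_{k+1}-W_k\|^2,$$
strictly negative under the stated step-size condition. For the $\Gamma$-update, combine the three-point Bregman identity $B_\Omega^{g_k}(\Gamma_{k+1},\Gamma_k)+B_\Omega^{g_{k+1}}(\Gamma_k,\Gamma_{k+1})=\langle g_{k+1}-g_k,\Gamma_{k+1}-\Gamma_k\rangle$ with the update rule $g_{k+1}-g_k=-\kappa^{-1}(\Gamma_{k+1}-\Gamma_k)-\alpha\nabla_\Gamma\bar{\calL}(W_k,\Gamma_k)$ from Eq.~\ref{Eq:SLBI-reform-iter3}; a second descent-lemma application in $\Gamma$ then trades the change in the Bregman term against $\alpha\bigl(\bar{\calL}(W_{k+1},\Gamma_{k+1})-\bar{\calL}(W_{k+1},\Gamma_k)\bigr)$ plus a negative multiple of $\|\Gamma_{k+1}-\Gamma_k\|^2$. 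Taking $\tilde g_k=g_k$ in $F$ and summing the two block estimates produces the required
$$F(P_k,\tilde g_k)-F(P_{k+1},\tilde g_{k+1})\geq c\bigl(\|W_{k+1}-W_k\|^2+\|\Gamma_{k+1}-\Gamma_k\|^2\bigr)$$
for some $c>0$.

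The subgradient bound is then read off the first-order optimality conditions of the prox step and the explicit formula for $g_{k+1}$: up to $\nabla\bar{\calL}(P_{k+1})-\nabla\bar{\calL}(P_k)$ (controlled by Lipschitzness) and a term proportional to $\Gamma_{k+1}-\Gamma_k$, an explicit element of $\partial F(P_{k+1},\tilde g_{k+1})$ can be exhibited whose norm is $O(\|P_{k+1}-P_k\|)$. With descent, subgradient bound and boundedness in hand, the standard KL convergence machinery (as in \cite{Attouch2013}) delivers $\sum_k\|P_{k+1}-P_k\|<\infty$, so $\{(P_k,\tilde g_k)\}$ is Cauchy and converges to some $(P_*,\tilde g_*)$ which must be a critical point of $F$. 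Passing to the limit in Eqs.~\ref{Eq:SLBI-reform-iter1}-\ref{Eq:SLBI-reform-iter3} yields $\nabla\bar{\calL}(W_*,\Gamma_*)=0$, and therefore $\nabla\eL(W_*)=0$, giving the claimed convergence of $\{W_k\}$ to a critical point of $\eL$.

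The main obstacle I anticipate is not the KL bookkeeping itself but crafting the correct Lyapunov function: $\bar{\calL}$ alone does \emph{not} decrease along the mirror step because the dual variable $V$ has no direct primal counterpart, so one must augment it with the Bregman divergence $B_\Omega^{\tilde g}(\Gamma,\tilde\Gamma)$ to absorb the residual terms. Closely tied to this is the need to verify, via the three-point Bregman identity, that the mirror step interacts cleanly with the $\ell_2$ coupling $\frac{1}{2\nu}\|W-\Gamma\|_F^2$ inside $\bar{\calL}$, so that the $W$- and $\Gamma$-descents combine constructively rather than destructively under the step-size budget $\alpha<2/(\kappa(Lip+\nu^{-1}))$; picking the constant in front of $B_\Omega^{\tilde g}$ consistent with this budget is the delicate point.
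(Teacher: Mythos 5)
Your proposal follows the same skeleton as the paper's proof: the Attouch et al.\ KL recipe with the Lyapunov function $F$ of Eq.~(\ref{Eq:Lyapunov-fun}), sufficient descent, a relative-error bound on an explicit element of $\partial F(Q_{k+1})$, boundedness from Assumption~\ref{Assumption}(b)--(d), and the three-point Bregman identity to absorb the dual variable --- and your closing argument (limit point is critical for $F$, hence $\bar g\in\partial\Omega(\bar\Gamma)$, $\bar\Gamma=\bar W$, $\nabla\eL(\bar W)=0$) matches the paper's. The one genuine difference is in how you obtain sufficient descent: you apply the descent lemma blockwise, first to the $W$-step and then to the $\Gamma$-step against $\bar{\calL}(W_{k+1},\cdot)$. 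The paper instead treats the whole update as a single linearized Bregman step on $P=(W,\Gamma)$ (Eq.~\ref{Eq:SLBI-reform2-iter1}), writes the joint optimality condition $\kappa(\alpha\nabla\bar{\calL}(P_k)+p_{k+1}-p_k)+P_{k+1}-P_k=0$, and applies the descent lemma once to $\bar{\calL}$ as a function of $P$ with Lipschitz constant $Lip+\nu^{-1}$. This matters because the algorithm is a simultaneous (Jacobi-type) update: the $\Gamma$-step uses $\nabla_\Gamma\bar{\calL}(W_k,\Gamma_k)$, not $\nabla_\Gamma\bar{\calL}(W_{k+1},\Gamma_k)$, so your blockwise route generates a cross term $\nu^{-1}\langle W_{k+1}-W_k,\Gamma_{k+1}-\Gamma_k\rangle$ that must be absorbed by Young's inequality and would degrade the admissible step size below the stated $\alpha<2/(\kappa(Lip+\nu^{-1}))$. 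You correctly flag this interaction as the delicate point, but the clean resolution is the paper's: do the descent lemma jointly in $P$, after which the constant $\rho=\kappa^{-1}-\alpha(Lip+\nu^{-1})/2$ falls out with no cross term and the rest of your argument goes through verbatim.
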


\subsection{Kurdyka-{\L }ojasiewicz Property \label{subsec:Kurdyka-property}}

%First of all, we present our main assumptions, which involve the definitions of \textit{real analytic} and \textit{semialgebraic} functions.
%, which are frequently used in this paper.

To introduce the definition of the Kurdyka-{\L }ojasiewicz (KL)
property, we need some notions and notations from variational analysis,
which can be found in \cite{Rockafellar1998}.

The notion of subdifferential plays a central role in the following
definitions. For each ${\bf x}\in\mathrm{dom}(h):=\{{\bf x}\in\mathbb{R}^{p}:h({\bf x})<+\infty\}$,
the \textit{Fr\'{e}chet subdifferential} of $h$ at ${\bf x}$, written
$\widehat{\partial}h({\bf x)}$, is the set of vectors ${\bf v}\in\mathbb{R}^{p}$
which satisfy
\[
\lim\inf_{{\bf y}\neq{\bf x},{\bf y}\rightarrow{\bf x}}\ \frac{h({\bf y})-h({\bf x})-\langle{\bf v},{\bf y}-{\bf x}\rangle}{\|{\bf x}-{\bf y}\|}\geq0.
\]
When ${\bf x}\notin\mathrm{dom}(h),$ we set $\widehat{\partial}h({\bf x})=\varnothing.$
The \emph{limiting-subdifferential} (or simply \emph{subdifferential})
of $h$ introduced in \cite{Mordukhovich-2006}, written $\partial h({\bf x})$
at ${\bf x}\in\mathrm{dom}(h)$, is defined by
\begin{align}
\partial h({\bf x}):=\{{\bf v}\in\mathbb{R}^{p}:\exists{\bf x}^{k}\to{\bf x},\;h({\bf x}^{k})\to h({\bf x}),\;{\bf v}^{k}\in\widehat{\partial}h({\bf x}^{k})\to{\bf v}\}.\label{Def:limiting-subdifferential}
\end{align}
A necessary (but not sufficient) condition for ${\bf x}\in\mathbb{R}^{p}$
to be a minimizer of $h$ is $\mathbf{0}\in\partial h({\bf x})$.
A point that satisfies this inclusion is called \textit{limiting-critical}
or simply \textit{critical}. The distance between a point ${\bf x}$
to a subset ${\cal S}$ of $\mathbb{R}^{p}$, written $\mathrm{dist}({\bf x},{\cal S})$,
is defined by $\mathrm{dist}({\bf x},{\cal S})=\inf\{\|{\bf x}-{\bf s}\|:{\bf s}\in{\cal S}\}$,
where $\|\cdot\|$ represents the Euclidean norm.

Let $h:\mathbb{R}^{p}\to\mathbb{R}\cup\{+\infty\}$ be an extended-real-valued
function (respectively, $h:\mathbb{R}^{p}\rightrightarrows\mathbb{R}^{q}$
be a point-to-set mapping), its \textit{graph} is defined by
\begin{align*}
 & \mathrm{Graph}(h):=\{({\bf x},y)\in\mathbb{R}^{p}\times\mathbb{R}:y=h({\bf x})\},\\
(\text{resp.}\; & \mathrm{Graph}(h):=\{({\bf x},{\bf y})\in\mathbb{R}^{p}\times\mathbb{R}^{q}:{\bf y}\in h({\bf x})\}),
\end{align*}
and its domain by $\mathrm{dom}(h):=\{{\bf x}\in\mathbb{R}^{p}:h({\bf x})<+\infty\}$
(resp. $\mathrm{dom}(h):=\{{\bf x}\in\mathbb{R}^{p}:h({\bf x})\neq\varnothing\}$).
When $h$ is a proper function, i.e., when $\mathrm{dom}(h)\neq\varnothing,$
the set of its global minimizers (possibly empty) is denoted by
\[
\arg\min h:=\{{\bf x}\in\mathbb{R}^{p}:h({\bf x})=\inf h\}.
\]

The KL property \citep{Lojasiewicz-KL1963,Lojasiewicz-KL1993,Kurdyka-KL1998,Bolte-KL2007a,Bolte-KL2007}
plays a central role in the convergence analysis of nonconvex algorithms
\citep{Attouch2013,Wang-ADMM2018}. The following definition is adopted
from \cite{Bolte-KL2007}.

\begin{definition}{[}Kurdyka-{\L }ojasiewicz property{]} \label{def:KL-function}
A function $h$ is said to have the Kurdyka-{\L }ojasiewicz (KL)
property at $\bar{u}\in\mathrm{dom}(\partial h):=\{v\in\mathbb{R}^{n}|\partial h(v)\neq\emptyset\}$,
if there exists a constant $\eta\in(0,\infty)$, a neighborhood ${\cal N}$
of $\bar{u}$ and a function $\phi:[0,\eta)\rightarrow\mathbb{R}_{+}$,
which is a concave function that is continuous at $0$ and satisfies
$\phi(0)=0$, $\phi\in{\cal C}^{1}((0,\eta))$, i.e., $\phi$ is continuous
differentiable on $(0,\eta)$, and $\phi'(s)>0$ for all $s\in(0,\eta)$,
such that for all $u\in{\cal N}\cap\{u\in\mathbb{R}^{n}|h(\bar{u})<h(u)<h(\bar{u})+\eta\}$,
the following inequality holds
\begin{align}
\phi'(h(u)-h(\bar{u}))\cdot\mathrm{dist}(0,\partial h(u))\geq1.\label{Eq:def-KL-function}
\end{align}
If $h$ satisfies the KL property at each point of $\mathrm{dom}(\partial h)$,
$h$ is called a KL function. \end{definition}

KL functions include real analytic functions, semialgebraic functions,
tame functions defined in some o-minimal structures \citep{Kurdyka-KL1998,Bolte-KL2007},
%(see the latter \ref{Def:o-minimal}), continuous subanalytic functions
continuous subanalytic functions \citep{Bolte-KL2007a} and locally
strongly convex functions. In the following, we provide some important
examples that satisfy the Kurdyka-{\L }ojasiewicz property.

\begin{definition}{[}Real analytic{]} \label{Def:real-analytic}
A function $h$ with domain an open set $U\subset\mathbb{R}$ and
range the set of either all real or complex numbers, is said to be
\textbf{real analytic} at $u$ if the function $h$ may be represented
by a convergent power series on some interval of positive radius centered
at $u$: $h(x)=\sum_{j=0}^{\infty}\alpha_{j}(x-u)^{j},$ for some
$\{\alpha_{j}\}\subset\RR$. The function is said to be \textbf{real
analytic} on $V\subset U$ if it is real analytic at each $u\in V$
\citep[Definition 1.1.5]{Krantz2002-real-analytic}. The real analytic
function $f$ over $\mathbb{R}^{p}$ for some positive integer $p>1$
can be defined similarly.

According to \cite{Krantz2002-real-analytic}, typical real analytic
functions include polynomials, exponential functions, and the logarithm,
trigonometric and power functions on any open set of their domains.
One can verify whether a multivariable real function $h({\bf x)}$
on $\mathbb{R}^{p}$ is analytic by checking the analyticity of $g(t):=h({\bf x}+t{\bf y})$
for any ${\bf x},{\bf y}\in\mathbb{R}^{p}$. \end{definition}

\begin{definition}{[}Semialgebraic{]}\hfill{}\label{Def:semialgebraic}
\begin{enumerate}
\item[(a)] A set ${\cal D}\subset\mathbb{R}^{p}$ is called semialgebraic \citep{Bochnak-semialgebraic1998}
if it can be represented as
\[
{\cal D}=\bigcup_{i=1}^{s}\bigcap_{j=1}^{t}\left\lbrace {\bf x}\in\mathbb{R}^{p}:P_{ij}({\bf x})=0,Q_{ij}({\bf x})>0\right\rbrace ,
\]
where $P_{ij},Q_{ij}$ are real polynomial functions for $1\leq i\leq s,1\leq j\leq t.$
\item[(b)] A function $h:\mathbb{R}^{p}\rightarrow\mathbb{R}\cup\{+\infty\}$
(resp. a point-to-set mapping $h:\mathbb{R}^{p}\rightrightarrows\mathbb{R}^{q}$)
is called \textit{semialgebraic} if its graph $\mathrm{Graph}(h)$
is semialgebraic.
\end{enumerate}
\end{definition}

According to \citep{Lojasiewicz1965-semianalytic,Bochnak-semialgebraic1998}
and \citep[I.2.9, page 52]{Shiota1997-subanalytic}, the class of semialgebraic
sets are stable under the operation of finite union, finite intersection,
Cartesian product or complementation. Some typical examples include
\text{polynomial} functions, the indicator function of a semialgebraic
set, and the \text{Euclidean norm} \citep[page 26]{Bochnak-semialgebraic1998}.

\subsection{KL Property in Deep Learning and Proof of Corollary \ref{Corollary:DL}}

\label{sc:convergence-DL}

In the following, we consider the deep neural network training problem.
Consider a $l$-layer feedforward neural network including $l-1$
hidden layers of the neural network. Particularly, let $d_{i}$ be
the number of hidden units in the $i$-th hidden layer for $i=1,\ldots,l-1$.
Let $d_{0}$ and $d_{l}$ be the number of units of input and output
layers, respectively. Let $W^{i}\in\RR^{d_{i}\times d_{i-1}}$ be
the weight matrix between the $(i-1)$-th layer and the $i$-th layer
for any $i=1,\ldots l$\footnote{To simplify notations, we regard the input and output layers as the
$0$-th and the $l$-th layers, respectively, and absorb the bias
of each layer into $W^{i}$.}. %Let ${\cal Z}:= \{(\bx_j, \by_j)\}_{j=1}^n {\subset \RR^{d_0} \times \RR^{d_l}}$ be $n$ samples, where $\by_j$'s are the one-hot vectors of labels.
%Denote $\calW:=\{W^i\}_{i=1}^l$,
%Denote $\bX:= (\bx_1, \bx_2, \ldots, \bx_n) \in \RR^{d_0 \times n}$ and $\bY:= (\by_1, \by_2, \ldots, \by_n) \in \RR^{d_l \times n}$.
%With the help of these notations, the DNN training problem can be formulated as the following empirical risk minimization:
%\begin{equation}
%\label{Eq:dnn-org}
%\min_{\calW} \eR\left( \Phi(\bX;\calW), \bY\right),
%\end{equation}
%where $\eR \left( \Phi(\bX;\calW), \bY\right)~:=~\frac{1}{n} \sum_{j=1}^n \ell\left( \Phi(\bx_j;\calW),\by_j\right)$, $\ell: \RR^{d_N} \times \RR^{d_N} \rightarrow \RR_{+} \cup \{0\}$ is some loss function,
%$\Phi(\bx_j;\calW) = \sigma_l(W^l\sigma_{l-1}(W^{l-1}\cdots W^2\sigma_1(W^1\bx_j))$ is the neural network model with $l$ layers and weights $\calW$ and $\sigma_i$ is the activation function of the $i$-th layer (generally, $\sigma_l \equiv \mathrm{Id}$, i.e., the identity function) and $\eR$ is called empirical risk (also known as the training loss).

According to Theorem \ref{Thm:conv-SLBI+}, one major condition is
to verify the introduced Lyapunov function $F$ defined in (\ref{Eq:Lyapunov-fun})
satisfies the Kurdyka-{\L }ojasiewicz property. For this purpose,
we need an extension of semialgebraic set, called the \textit{o-minimal
structure} (see, for instance \cite{Coste1999-o-minimal}, \cite{vandenDries1986-o-minimal},
\cite{Kurdyka-KL1998}, \cite{Bolte-KL2007}). The following definition
is from \cite{Bolte-KL2007}.

\begin{definition}{[}o-minimal structure{]} \label{Def:o-minimal}
An o-minimal structure on $(\mathbb{R},+,\cdot)$ is a sequence of
boolean algebras ${\cal O}_{n}$ of ``definable'' subsets of $\mathbb{R}^{n}$,
such that for each $n\in\mathbb{N}$
\begin{enumerate}
\item[(i)] if $A$ belongs to ${\cal O}_{n}$, then $A\times\mathbb{R}$ and
$\mathbb{R}\times A$ belong to ${\cal O}_{n+1}$;
\item[(ii)] if $\Pi:\mathbb{R}^{n+1}\rightarrow\mathbb{R}^{n}$ is the canonical
projection onto $\mathbb{R}^{n}$, then for any $A$ in ${\cal O}_{n+1}$,
the set $\Pi(A)$ belongs to ${\cal O}_{n}$;
\item[(iii)] ${\cal O}_{n}$ contains the family of algebraic subsets of $\mathbb{R}^{n}$,
that is, every set of the form
\[
\{x\in\mathbb{R}^{n}:p(x)=0\},
\]
where $p:\mathbb{R}^{n}\rightarrow\mathbb{R}$ is a polynomial function.
\item[(iv)] the elements of ${\cal O}_{1}$ are exactly finite unions of intervals
and points.
\end{enumerate}
\end{definition}

Based on the definition of o-minimal structure, we can show the definition
of the \textit{definable function}.

\begin{definition}{[}Definable function{]} \label{Def:definable-function}
Given an o-minimal structure ${\cal O}$ (over $(\mathbb{R},+,\cdot)$),
a function $f:\mathbb{R}^{n}\rightarrow\mathbb{R}$ is said to be
\textit{definable} in ${\cal O}$ if its graph belongs to ${\cal O}_{n+1}$.
\end{definition}

According to \cite{vandenDries1996-GC,Bolte-KL2007}, there are some
important facts of the o-minimal structure, shown as follows.
\begin{enumerate}
\item[(i)] The collection of \textit{semialgebraic} sets is an o-minimal structure.
Recall the semialgebraic sets are Bollean combinations of sets of
the form
\[
\{x\in\mathbb{R}^{n}:p(x)=0,q_{1}(x)<0,\ldots,q_{m}(x)<0\},
\]
where $p$ and $q_{i}$'s are polynomial functions in $\mathbb{R}^{n}$.
\item[(ii)] There exists an o-minimal structure that contains the sets of the
form
\[
\{(x,t)\in[-1,1]^{n}\times\mathbb{R}:f(x)=t\}
\]
where $f$ is real-analytic around $[-1,1]^{n}$.
\item[(iii)] There exists an o-minimal structure that contains simultaneously
the graph of the exponential function $\mathbb{R}\ni x\mapsto\exp(x)$
and all semialgebraic sets.
\item[(iv)] The o-minimal structure is stable under the sum, composition, the
inf-convolution and several other classical operations of analysis.
\end{enumerate}
The Kurdyka-{\L }ojasiewicz property for the smooth definable function
and non-smooth definable function were established in \citep[Theorem 1]{Kurdyka-KL1998}
and \citep[Theorem 14]{Bolte-KL2007}, respectively. Now we are ready
to present the proof of Corollary \ref{Corollary:DL}.

%\begin{corollary} \label{Corollary:DL}
%Let $\{W_k,{\Gamma}_k,g_k\}$ be a sequence generated by SLBI \eqref{Eq:SLBI-reform-iter1}-\eqref{Eq:SLBI-reform-iter3} for the deep neural network training problem %\eqref{Eq:dnn-org} (in this case, $\eL(W)$ is replaced by $\calR_n\left( \Phi(\bX;\calW), \bY\right)$),
%where
%\begin{enumerate}
%\item[(1)]
%$\ell$ is any smooth definable loss function, such as the square loss $(t^2)$, exponential loss $(e^t)$, logistic loss $\log(1+e^{-t})$, and cross-entropy loss.
%\item[(2)] $\sigma_i$ is any smooth definable activation, such as linear activation $(t)$, sigmoid $(\frac{1}{1+e^{-t}})$, hyperbolic tangent $(\frac{e^t - e^{-t}}{e^t + e^{-t}})$, and softplus ($\frac{1}{c}\log(1+e^{ct})$ for some $c>0$) as a smooth approximation of ReLU.
%\end{enumerate}
%Moreover, suppose that the empirical risk $\eL({W})$ %$\calR_n\left( \Phi(\bX;\calW), \bY\right)$
%has lower bounded level set and its gradient is Lipschitz continuous with a constant $Lip>0$, and that $\Omega$ is the group Lasso penalty (i.e., $\Omega(\Gamma) = \sum_g\|\Gamma_g\|_2$).
%Then the sequence $\{W_k\}$ converges to a stationary point of the empirical risk if the step size $0<\alpha_k=\alpha < \frac{1}{\kappa (Lip +\nu^{-1})}$.
%\end{corollary}

\begin{proof}{[}Proof of Corollary \ref{Corollary:DL}{]} To justify
this corollary, we only need to verify the associated Lyapunov function
$F$ satisfies Kurdyka-{\L }ojasiewicz inequality. In this case
and by (\ref{Eq:Lyapunov-fun-conjugate}), $F$ can be rewritten as
follows
\begin{align*}
F({\cal W},\Gamma,{\cal G})=\alpha\left(\eL(W,\Gamma)+\frac{1}{2\nu}\|W-\Gamma\|^{2}\right)+\Omega(\Gamma)+\Omega^{*}(g)-\langle \Gamma,g\rangle.
\end{align*}
%\begin{align*}
%F({\cal W},\Gamma, {\cal G})= \alpha(\calR_n\left( \Phi(\bX;\calW), \bY\right) + \frac{1}{2\nu^{-1}}\|{\cal W}-\Gamma\|^2) + \|{\cal W}\|_1 + \|{\cal G}\|_{\infty} - \langle {\cal W}, {\cal G} \rangle,
%\end{align*}
Because $\ell$ and $\sigma_{i}$'s are definable by assumptions,
then $\eL(W,\Gamma)$ are definable as compositions of definable functions.
%$\calR_n\left( \Phi(\bX;\calW), \bY\right)$ are  definable due to the definable functions are stable with respect to their decompositions.
Moreover, according to \cite{Krantz2002-real-analytic}, $\|W-\Gamma\|^{2}$
and $\langle \Gamma,g\rangle$ are semi-algebraic and thus definable. Since
the group Lasso $\Omega(\Gamma)=\sum_{g}\|\Gamma\|_{2}$ is the composition
of $\ell_{2}$ and $\ell_{1}$ norms, and the conjugate of group Lasso
penalty is the maximum of group $\ell_{2}$-norm, \emph{i.e.} $\Omega^{*}(\Gamma)=\max_{g}\|\Gamma_{g}\|_{2}$,
where the $\ell_{2}$, $\ell_{1}$, and $\ell_{\infty}$ norms are
definable, hence the group Lasso and its conjugate are definable as
compositions of definable functions. %(by \cite[Proposition 1]{Zeng2019} and \cite[pp.1769]{Xu-Yin-BCD2013}).
Therefore, $F$ is definable and hence satisfies Kurdyka-{\L }ojasiewicz
inequality by \citep[Theorem 1]{Kurdyka-KL1998}.

The verifications of other cases listed in assumptions can
be found in the proof of \citep[Proposition 1]{Zeng2019}. This finishes
the proof of this corollary. \end{proof}

%{\color{red}

\subsection{Proof of Theorem \ref{Thm:conv-SLBI+}}

Our analysis is mainly motivated by a recent paper \citep{Benning2017},
as well as the influential work \citep{Attouch2013}. According to Lemma 2.6 in
\cite{Attouch2013}, there are mainly four ingredients
in the analysis, that is, the \textit{sufficient descent property},
\textit{relative error property}, \textit{continuity property} of
the generated sequence and the \textit{Kurdyka-{\L }ojasiewicz property}
of the function. More specifically, we first establish the \textit{sufficient
descent property} of the generated sequence via exploiting the Lyapunov
function $F$ (see, (\ref{Eq:Lyapunov-fun})) in Lemma \ref{Lemma:sufficient-descent}
in Section \ref{sc:sufficient-descent}, and then show the \textit{relative
error property} of the sequence in Lemma \ref{Lemma:relative-error}
in Section \ref{sc:relative-error}. The \textit{continuity property}
is guaranteed by the continuity of $\bar{\calL}(W,\Gamma)$ and the
relation $\lim_{k\rightarrow\infty}B_{\Omega}^{g_{k}}(\Gamma_{k+1},\Gamma_{k})=0$
established in Lemma \ref{Lemma:convergence-funcvalue}(i) in Section
\ref{sc:sufficient-descent}. Thus, together with the Kurdyka-{\L }ojasiewicz
assumption of $F$, we establish the global convergence of SLBI following
by \citep[Lemma 2.6]{Attouch2013}.

Let $(\bar{W},\bar{\Gamma},\bar{g})$ be a critical point of $F$,
then the following holds
\begin{align}
 & \partial_{W}F(\bar{W},\bar{\Gamma},\bar{g})=\alpha(\nabla\eL(\bar{W})+\nu^{-1}(\bar{W}-\bar{\Gamma}))=0,\nonumber \\
 & \partial_{\Gamma}F(\bar{W},\bar{\Gamma},\bar{g})=\alpha\nu^{-1}(\bar{\Gamma}-\bar{W})+\partial\Omega(\bar{\Gamma})-\bar{g}\ni0,\label{Eq:critpoint-F}\\
 & \partial_{g}F(\bar{W},\bar{\Gamma},\bar{g})=\bar{\Gamma}-\partial\Omega^{*}(\bar{g})\ni0.\nonumber
\end{align}
By the final inclusion and the convexity of $\Omega$, it implies
$\bar{g}\in\partial\Omega(\bar{\Gamma})$. Plugging this inclusion
into the second inclusion yields $\alpha\nu^{-1}(\bar{\Gamma}-\bar{W})=0$.
Together with the first equality imples
\[
\nabla\bar{\calL}(\bar{W},\bar{\Gamma})=0,\quad\nabla\eL(\bar{W})=0.
\]
This finishes the proof of this theorem.

\subsection{Sufficient Descent Property along Lyapunov Function}

\label{sc:sufficient-descent}

Let $P_{k}:=(W_{k},\Gamma_{k})$, and $Q_{k}:=(P_{k},g_{k-1}),k\in\mathbb{N}$.
In the following, we present the sufficient descent property of $Q_{k}$
along the Lyapunov function $F$.

\noindent \textbf{Lemma.} \label{Lemma:sufficient-descent} Suppose
that $\eL$ is continuously differentiable and $\nabla\eL$ is Lipschitz
continuous with a constant $Lip>0$. Let $\{Q_{k}\}$ be a sequence
generated by SLBI with a finite initialization. If $0<\alpha<\frac{2}{\kappa(Lip+\nu^{-1})}$,
then
\[
F(Q_{k+1})\leq F(Q_{k})-\rho\|Q_{k+1}-Q_{k}\|_{2}^{2},
\]
where $\rho:=\frac{1}{\kappa}-\frac{\alpha(Lip+\nu^{-1})}{2}$.

\begin{proof} By the optimality condition of (\ref{Eq:SLBI-reform2-iter1})
and also the inclusion $p_{k}=[0,g_{k}]^{T}\in\partial R(P_{k})$,
there holds
\begin{align*}
\kappa(\alpha\nabla\bar{\calL}(P_{k})+p_{k+1}-p_{k})+P_{k+1}-P_{k}=0,
\end{align*}
which implies
\begin{align}
-\langle\alpha\nabla\bar{\calL}(P_{k}),P_{k+1}-P_{k}\rangle=\kappa^{-1}\|P_{k+1}-P_{k}\|_{2}^{2}+D(\Gamma_{k+1},\Gamma_{k})\label{Eq:innerproduct-term}
\end{align}
where
\[
D(\Gamma_{k+1},\Gamma_{k}):=\langle g_{k+1}-g_{k},\Gamma_{k+1}-\Gamma_{k}\rangle.
\]
Noting that $\bar{\calL}(P)=\eL(W)+\frac{1}{2\nu}\|W-\Gamma\|_{2}^{2}$
and by the Lipschitz continuity of $\nabla\eL(W)$ with a constant
$Lip>0$ implies $\nabla\bar{\calL}$ is Lipschitz continuous with
a constant $Lip+\nu^{-1}$. This implies
\begin{align*}
\bar{\calL}(P_{k+1})\leq\bar{\calL}(P_{k})+\langle\nabla\bar{\calL}(P_{k}),P_{k+1}-P_{k}\rangle+\frac{Lip+\nu^{-1}}{2}\|P_{k+1}-P_{k}\|_{2}^{2}.
\end{align*}
Substituting the above inequality into (\ref{Eq:innerproduct-term})
yields
\begin{align}
\alpha\bar{\calL}(P_{k+1})+D(\Gamma_{k+1},\Gamma_{k})+\rho\|P_{k+1}-P_{k}\|_{2}^{2}\leq\alpha\bar{\calL}(P_{k}).\label{Eq:sufficientdescent-barL}
\end{align}
Adding some terms in both sides of the above inequality and after
some reformulations implies
\begin{align}
 & \alpha\bar{\calL}(P_{k+1})+B_{\Omega}^{g_{k}}(\Gamma_{k+1},\Gamma_{k})\\
 & \leq\alpha\bar{\calL}(P_{k})+B_{\Omega}^{g_{k-1}}(\Gamma_{k},\Gamma_{k-1})-\rho\|P_{k+1}-P_{k}\|_{2}^{2}-\left(D(\Gamma_{k+1},\Gamma_{k})+B_{\Omega}^{g_{k-1}}(\Gamma_{k},\Gamma_{k-1})-B_{\Omega}^{g_{k}}(\Gamma_{k+1},\Gamma_{k})\right)\nonumber \\
 & =\alpha\bar{\calL}(P_{k})+B_{\Omega}^{g_{k-1}}(\Gamma_{k},\Gamma_{k-1})-\rho\|P_{k+1}-P_{k}\|_{2}^{2}-B_{\Omega}^{g_{k+1}}(\Gamma_{k},\Gamma_{k-1})-B_{\Omega}^{g_{k-1}}(\Gamma_{k},\Gamma_{k-1}),\nonumber
\end{align}
where the final equality holds for $D(\Gamma_{k+1},\Gamma_{k})-B_{\Omega}^{g_{k}}(\Gamma_{k+1},\Gamma_{k})=B_{\Omega}^{g_{k+1}}(\Gamma_{k},\Gamma_{k-1}).$
That is,
\begin{align}
F(Q_{k+1}) & \leq F(Q_{k})-\rho\|P_{k+1}-P_{k}\|_{2}^{2}-B_{\Omega}^{g_{k+1}}(\Gamma_{k},\Gamma_{k-1})-B_{\Omega}^{g_{k-1}}(\Gamma_{k},\Gamma_{k-1})\label{Eq:sufficientdescent-Breg}\\
 & \leq F(Q_{k})-\rho\|P_{k+1}-P_{k}\|_{2}^{2},\label{Eq:sufficientdescent}
\end{align}
where the final inequality holds for $B_{\Omega}^{g_{k+1}}(\Gamma_{k},\Gamma_{k-1})\geq0$
and $B_{\Omega}^{g_{k-1}}(\Gamma_{k},\Gamma_{k-1})\geq0.$ Thus, we
finish the proof of this lemma. \end{proof}

Based on Lemma \ref{Lemma:sufficient-descent}, we directly obtain
the following lemma.

\begin{lemma} \label{Lemma:convergence-funcvalue} Suppose that assumptions
of Lemma \ref{Lemma:sufficient-descent} hold. Suppose further that
Assumption \ref{Assumption} (b)-(d) hold. Then
\begin{enumerate}
\item[(i)] both $\alpha\{\bar{\calL}(P_{k})\}$ and $\{F(Q_{k})\}$ converge
to the same finite value, and $\lim_{k\rightarrow\infty}B_{\Omega}^{g_{k}}(\Gamma_{k+1},\Gamma_{k})=0.$
\item[(ii)] the sequence $\{(W_{k},\Gamma_{k},g_{k})\}$ is bounded,
\item[(iii)] $\lim_{k\rightarrow\infty}\|P_{k+1}-P_{k}\|_{2}^{2}=0$ and $\lim_{k\rightarrow\infty}D(\Gamma_{k+1},\Gamma_{k})=0,$
\item[(iv)] $\frac{1}{K}\sum_{k=0}^{K}\|P_{k+1}-P_{k}\|_{2}^{2}\rightarrow0$
at a rate of ${\cal O}(1/K)$.
\end{enumerate}
\end{lemma}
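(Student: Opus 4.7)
The plan is to drive every assertion from the one-step descent inequality of Lemma~\ref{Lemma:sufficient-descent} combined with lower-boundedness of $F$, then convert telescoping sums into statements about the sequence. First I would note that Assumption~\ref{Assumption}(c) and the definition $\bar{\calL}(P)=\eL(W)+\frac{1}{2\nu}\|W-\Gamma\|_F^2$ give $\bar{\calL}\geq 0$, while the convexity of $\Omega$ (Assumption~\ref{Assumption}(d)) forces $B_\Omega^{g_{k-1}}(\Gamma_k,\Gamma_{k-1})\geq 0$, so $F(Q_k)\geq 0$. Combined with monotonicity from Lemma~\ref{Lemma:sufficient-descent}, this shows $\{F(Q_k)\}$ converges to some finite $F^\star\geq 0$.

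For (iv) and the first half of (iii), I would telescope the sharper descent inequality (\ref{Eq:sufficientdescent-Breg}) in the proof of Lemma~\ref{Lemma:sufficient-descent}: summing yields
\[
\sum_{k=0}^{\infty}\bigl(\rho\|P_{k+1}-P_k\|_2^2+B_\Omega^{g_{k+1}}(\Gamma_k,\Gamma_{k-1})+B_\Omega^{g_{k-1}}(\Gamma_k,\Gamma_{k-1})\bigr)\leq F(Q_0)-F^\star<\infty,
\]
which immediately gives $\|P_{k+1}-P_k\|_2^2\to 0$, the $\mathcal{O}(1/K)$ average rate in (iv) (just divide the partial sum by $K+1$), and summability (hence vanishing) of $B_\Omega^{g_{k-1}}(\Gamma_k,\Gamma_{k-1})$. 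Shifting indices gives $B_\Omega^{g_k}(\Gamma_{k+1},\Gamma_k)\to 0$, which is exactly the second assertion of (i). Then, since $F(Q_k)=\alpha\bar{\calL}(P_k)+B_\Omega^{g_{k-1}}(\Gamma_k,\Gamma_{k-1})$ and the Bregman term vanishes, $\alpha\bar{\calL}(P_k)\to F^\star$, completing (i).

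For (ii), I would use that $\alpha\bar{\calL}(P_k)\leq F(Q_k)\leq F(Q_0)$, so both $\eL(W_k)$ and $\frac{1}{2\nu}\|W_k-\Gamma_k\|_F^2$ are uniformly bounded. Assumption~\ref{Assumption}(b) (bounded level sets of $\eL$) then forces $\{W_k\}$ to be bounded, and the bound on $\|W_k-\Gamma_k\|$ transfers boundedness to $\{\Gamma_k\}$. To get $\{g_k\}$ bounded I need $g_k\in\partial\Omega(\Gamma_k)$; this follows from the proximal optimality condition in (\ref{Eq:SLBI-reform-iter2}), which after matching terms against (\ref{Eq:SLBI-reform-iter3}) shows exactly $g_{k+1}\in\partial\Omega(\Gamma_{k+1})$. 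Assumption~\ref{Assumption}(d) (locally bounded subgradients) applied to the compact closure of $\{\Gamma_k\}$ then yields $\|g_k\|\leq C$.

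For the remaining part of (iii), the main technical step, I would use the identity
\[
D(\Gamma_{k+1},\Gamma_k)=B_\Omega^{g_k}(\Gamma_{k+1},\Gamma_k)+B_\Omega^{g_{k+1}}(\Gamma_k,\Gamma_{k+1}),
\]
obtained by adding the two Bregman divergences. The first summand tends to zero by (i). For the second, I would expand
\[
B_\Omega^{g_{k+1}}(\Gamma_k,\Gamma_{k+1})=\Omega(\Gamma_k)-\Omega(\Gamma_{k+1})+\langle g_{k+1},\Gamma_{k+1}-\Gamma_k\rangle
\]
and bound each piece: the inner product is at most $\|g_{k+1}\|\cdot\|\Gamma_{k+1}-\Gamma_k\|\to 0$ using the $g_k$-boundedness from (ii) and $\|P_{k+1}-P_k\|\to 0$; the $\Omega$-difference is controlled by local Lipschitz continuity of the convex function $\Omega$ on the bounded set containing $\{\Gamma_k\}$ (a consequence of Assumption~\ref{Assumption}(d), since a finite convex function with uniformly bounded subgradients on a compact set is Lipschitz there). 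The main obstacle I anticipate is precisely this last step, verifying that $B_\Omega^{g_{k+1}}(\Gamma_k,\Gamma_{k+1})$ vanishes even though it does not directly appear in the telescoped summability; everything else reduces to bookkeeping on the descent inequality.
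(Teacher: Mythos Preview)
Your proposal is correct, and for items (i), (ii), (iv) and the first half of (iii) it is essentially identical to the paper's argument: telescope a descent inequality, use lower-boundedness to get convergence of $F(Q_k)$, read off vanishing of the Bregman term and hence of $\alpha\bar{\calL}(P_k)-F(Q_k)$, and use bounded level sets plus Assumption~\ref{Assumption}(d) for boundedness.

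The one genuine difference is how you handle $D(\Gamma_{k+1},\Gamma_k)\to 0$. You decompose $D$ via the three-point identity into two Bregman terms, get the first from (i), and then control $B_\Omega^{g_{k+1}}(\Gamma_k,\Gamma_{k+1})$ by hand using boundedness of $g_{k+1}$, $\|\Gamma_{k+1}-\Gamma_k\|\to 0$, and local Lipschitzness of $\Omega$ on a compact set. This works, but it is more effort than necessary. The paper instead telescopes the \emph{earlier} inequality (\ref{Eq:sufficientdescent-barL}),
\[
\alpha\bar{\calL}(P_{k+1})+D(\Gamma_{k+1},\Gamma_k)+\rho\|P_{k+1}-P_k\|_2^2\leq \alpha\bar{\calL}(P_k),
\]
which already contains $D(\Gamma_{k+1},\Gamma_k)$ explicitly; summing gives $\sum_k D(\Gamma_{k+1},\Gamma_k)<\infty$ in one line, with no need for the Lipschitz argument. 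So your route trades a single telescoping step for a separate analytic estimate; both are valid, but the paper's is shorter and avoids the step you flagged as the main obstacle.
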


\begin{proof} By (\ref{Eq:sufficientdescent-barL}), $\bar{\calL}(P_{k})$
is monotonically decreasing due to $D(\Gamma_{k+1},\Gamma_{k})\geq0$.
Similarly, by (\ref{Eq:sufficientdescent}), $F(Q^{k})$ is also monotonically
decreasing. By the lower boundedness assumption of $\eL(W)$, both
$\bar{\calL}(P)$ and $F(Q)$ are lower bounded by their definitions,
i.e., (\ref{eq:sparse_loss}) and (\ref{Eq:Lyapunov-fun}), respectively.
Therefore, both $\{\bar{\calL}(P_{k})\}$ and $\{F(Q_{k})\}$ converge,
and it is obvious that $\lim_{k\rightarrow\infty}F(Q_{k})\geq\lim_{k\rightarrow\infty}\alpha\bar{\calL}(P_{k})$.
By (\ref{Eq:sufficientdescent-Breg}),
\begin{align*}
B_{\Omega}^{g_{k-1}}(\Gamma_{k},\Gamma_{k-1})\leq F(Q_{k})-F(Q_{k+1}),\ k=1,\ldots.
\end{align*}
By the convergence of $F(Q_{k})$ and the nonegativeness of $B_{\Omega}^{g_{k-1}}(\Gamma_{k},\Gamma_{k-1})$,
there holds
\[
\lim_{k\rightarrow\infty}B_{\Omega}^{g_{k-1}}(\Gamma_{k},\Gamma_{k-1})=0.
\]
By the definition of $F(Q_{k})=\alpha\bar{\calL}(P_{k})+B_{\Omega}^{g_{k-1}}(\Gamma_{k},\Gamma_{k-1})$
and the above equality, it yields
\[
\lim_{k\rightarrow\infty}F(Q_{k})=\lim_{k\rightarrow\infty}\alpha\bar{\calL}(P_{k}).
\]

Since $\eL(W)$ has bounded level sets, then $W_{k}$ is bounded.
By the definition of $\bar{\calL}(W,\Gamma)$ and the finiteness of
$\bar{\calL}(W_{k},\Gamma_{k})$, $\Gamma_{k}$ is also bounded due
to $W_{k}$ is bounded. The boundedness of $g_{k}$ is due to $g_{k}\in\partial\Omega(\Gamma_{k})$,
condition (d), and the boundedness of $\Gamma_{k}$.

By (\ref{Eq:sufficientdescent}), summing up (\ref{Eq:sufficientdescent})
over $k=0,1,\ldots,K$ yields
\begin{align}
\sum_{k=0}^{K}\left(\rho\|P_{k+1}-P_{k}\|^{2}+D(\Gamma_{k+1},\Gamma_{k})\right)<\alpha\bar{\calL}(P_{0})<\infty.\label{Eq:summable}
\end{align}
Letting $K\rightarrow\infty$ and noting that both $\|P_{k+1}-P_{k}\|^{2}$
and $D(\Gamma_{k+1},\Gamma_{k})$ are nonnegative, thus
\[
\lim_{k\rightarrow\infty}\|P_{k+1}-P_{k}\|^{2}=0,\quad\lim_{k\rightarrow\infty}D(\Gamma_{k+1},\Gamma_{k})=0.
\]
Again by (\ref{Eq:summable}),
\begin{align*}
\frac{1}{K}\sum_{k=0}^{K}\left(\rho\|P_{k+1}-P_{k}\|^{2}+D(\Gamma_{k+1},\Gamma_{k})\right)<K^{-1}\alpha\bar{\calL}(P_{0}),
\end{align*}
which implies $\frac{1}{K}\sum_{k=0}^{K}\|P_{k+1}-P_{k}\|^{2}\rightarrow0$
at a rate of ${\cal O}(1/K)$. \end{proof}

\subsection{Relative Error Property}

\label{sc:relative-error}

In this subsection, we provide the bound of subgradient by the discrepancy
of two successive iterates. By the definition of $F$ (\ref{Eq:Lyapunov-fun}),
\begin{align}
H_{k+1}:=\left(\begin{array}{c}
\alpha\nabla_{W}\bar{\calL}(W_{k+1},\Gamma_{k+1})\\
\alpha\nabla_{\Gamma}\bar{\calL}(W_{k+1},\Gamma_{k+1})+g_{k+1}-g_{k}\\
\Gamma_{k}-\Gamma_{k+1}
\end{array}\right)\in\partial F(Q_{k+1}),\ k\in\mathbb{N}.\label{Eq:subgradient}
\end{align}

\noindent \textbf{Lemma. } \label{Lemma:relative-error} Under assumptions
of Lemma \ref{Lemma:convergence-funcvalue}, then
\[
\|H_{k+1}\|\leq\rho_{1}\|Q_{k+1}-Q_{k}\|,\ \mathrm{for}\ H_{k+1}\in\partial F(Q_{k+1}),\ k\in\mathbb{N},
\]
where $\rho_{1}:=2\kappa^{-1}+1+\alpha(Lip+2\nu^{-1})$. Moreover,
$\frac{1}{K}\sum_{k=1}^{K}\|H_{k}\|^{2}\rightarrow0$ at a rate of
${\cal O}(1/K)$.

\begin{proof} Note that
\begin{align}
\nabla_{W}\bar{\calL}(W_{k+1},\Gamma_{k+1}) & =(\nabla_{W}\bar{\calL}(W_{k+1},\Gamma_{k+1})-\nabla_{W}\bar{\calL}(W_{k+1},\Gamma_{k}))\label{Eq:nabla-W-decomp}\\
 & +(\nabla_{W}\bar{\calL}(W_{k+1},\Gamma_{k})-\nabla_{W}\bar{\calL}(W_{k},\Gamma_{k}))+\nabla_{W}\bar{\calL}(W_{k},\Gamma_{k}).\nonumber
\end{align}
By the definition of $\bar{\calL}$ (see (\ref{eq:sparse_loss})),
\begin{align*}
\|\nabla_{W}\bar{\calL}(W_{k+1},\Gamma_{k+1})-\nabla_{W}\bar{\calL}(W_{k+1},\Gamma_{k})\| & =\nu^{-1}\|\Gamma_{k}-\Gamma_{k+1}\|,\\
\|\nabla_{W}\bar{\calL}(W_{k+1},\Gamma_{k})-\nabla_{W}\bar{\calL}(W_{k},\Gamma_{k})\| & =\|(\nabla\eL(W_{k+1})-\nabla\eL(W_{k}))+\nu^{-1}(W_{k+1}-W_{k})\|\\
 & \leq(Lip+\nu^{-1})\|W_{k+1}-W_{k}\|,
\end{align*}
where the last inequality holds for the Lipschitz continuity of $\nabla\eL$
with a constant $Lip>0$, and by (\ref{Eq:SLBI-reform-iter1}),
\begin{align*}
\|\nabla_{W}\bar{\calL}(W_{k},\Gamma_{k})\|=(\kappa\alpha)^{-1}\|W_{k+1}-W_{k}\|.
\end{align*}
Substituting the above (in)equalities into (\ref{Eq:nabla-W-decomp})
yields
\begin{align*}
\|\nabla_{W}\bar{\calL}(W_{k+1},\Gamma_{k+1})\|\leq\left[(\kappa\alpha)^{-1}+Lip+\nu^{-1}\right]\cdot\|W_{k+1}-W_{k}\|+\nu^{-1}\|\Gamma_{k+1}-\Gamma_{k}\|
\end{align*}
Thus,
\begin{align}
\|\alpha\nabla_{W}\bar{\calL}(W_{k+1},\Gamma_{k+1})\|\leq\left[\kappa^{-1}+\alpha(Lip+\nu^{-1})\right]\cdot\|W_{k+1}-W_{k}\|+\alpha\nu^{-1}\|\Gamma_{k+1}-\Gamma_{k}\|.\label{Eq:subgradbound-part1}
\end{align}

By (\ref{Eq:SLBI-reform-iter3}), it yields
\begin{align*}
g_{k+1}-g_{k}=\kappa^{-1}(\Gamma_{k}-\Gamma_{k+1})-\alpha\nabla_{\Gamma}\bar{\calL}(W_{k},\Gamma_{k}).
\end{align*}
Noting that $\nabla_{\Gamma}\bar{\calL}(W_{k},\Gamma_{k})=\nu^{-1}(\Gamma_{k}-W_{k})$,
and after some simplifications yields
\begin{align}
\label{Eq:subgradbound-part2}
\|\alpha\nabla_{\Gamma}\bar{\calL}(W_{k+1},\Gamma_{k+1})+g_{k+1}-g_{k}\| & =\|(\kappa^{-1}-\alpha\nu^{-1})\cdot(\Gamma_{k}-\Gamma_{k+1})+\alpha\nu^{-1}(W_{k}-W_{k+1})\|\nonumber \\
 & \leq\alpha\nu^{-1}\|W_{k}-W_{k+1}\|+(\kappa^{-1}-\alpha\nu^{-1})\|\Gamma_{k}-\Gamma_{k+1}\|,
\end{align}
where the last inequality holds for the triangle inequality and $\kappa^{-1}>\alpha\nu^{-1}$
by the assumption.

By (\ref{Eq:subgradbound-part1}), (\ref{Eq:subgradbound-part2}),
and the definition of $H_{k+1}$ (\ref{Eq:subgradient}), there holds
\begin{align}
\|H_{k+1}\| & \leq\left[\kappa^{-1}+\alpha(Lip+2\nu^{-1})\right]\cdot\|W_{k+1}-W_{k}\|+(\kappa^{-1}+1)\|\Gamma_{k+1}-\Gamma_{k}\|\nonumber \\
 & \leq\left[2\kappa^{-1}+1+\alpha(Lip+2\nu^{-1})\right]\cdot\|P_{k+1}-P_{k}\|\label{Eq:subgradbound-Pk}\\
 & \leq\left[2\kappa^{-1}+1+\alpha(Lip+2\nu^{-1})\right]\cdot\|Q_{k+1}-Q_{k}\|.\nonumber
\end{align}

By (\ref{Eq:subgradbound-Pk}) and Lemma \ref{Lemma:convergence-funcvalue}(iv),
$\frac{1}{K}\sum_{k=1}^{K}\|H_{k}\|^{2}\rightarrow0$ at a rate of
${\cal O}(1/K)$.

This finishes the proof of this lemma. \end{proof}

\section{Supplementary Experiments}

\subsection{Ablation Study on Image Classification}

\begin{table*}
\begin{centering}
{\footnotesize{}{}}{\footnotesize\par}
\par\end{centering}
\begin{centering}
 
\par\end{centering}
\begin{centering}
{\footnotesize{}{}}%
\begin{tabular}{c}
\begin{tabular}{c|c|cccc}
\hline 
\multicolumn{2}{c|}{{\small{}{}Dataset }} & \emph{\small{}{}{}}{\small{}{}MNIST }  & \multicolumn{1}{c}{{\small{}{}Cifar-10}} & \multicolumn{2}{c}{{\small{}{}ImageNet-2012}}\tabularnewline
\hline 
{\small{}{}Models }  & {\small{}{}Variants }  & {\small{}{}{}LeNet }  & {\small{}{}ResNet-20 }  & {\small{}{}AlexNet }  & {\small{}{}ResNet-18}\tabularnewline
\hline 
\multirow{5}{*}{\emph{\small{}{}SGD}{\small{}{} }} & {\small{}{}Naive}  & {\small{}{}{}98.87 }  & {\small{}{}86.46 }  & {\small{}{}{}--/-- }  & {\small{}{}{}60.76/79.18 }\tabularnewline
 & {\small{}{}$\mathit{l}_{1}$ }  & {\small{}{}98.52 }  & {\small{}{}67.60 }  & {\small{}{}46.49/65.45} & {\small{}{}51.49/72.45}\tabularnewline
 & {\small{}{}Mom }  & {\small{}{}99.16 }  & {\small{}{}89.44 }  & {\small{}{}55.14/78.09 }  & {\small{}{}66.98/86.97 }\tabularnewline
 & {\small{}{}Mom-W}\emph{\small{}{}d$^{\star}$}{\small{}{} }  & {\small{}{}}\textbf{\small{}99.23}{\small{} }  & {\small{}{}}\textbf{\small{}90.31}{\small{} }  & {\small{}{}56.55/79.09 }  & {\small{}{}69.76/89.18}\tabularnewline
 & {\small{}{}Nesterov }  & {\small{}{}}\textbf{\small{}99.23}{\small{} }  & {\small{}{}90.18 }  & {\small{}{}-/- }  & {\small{}{}70.19/89.30}\tabularnewline
\hline 
\multirow{5}{*}{\emph{\small{}{}Adam}{\small{}{} }} & {\small{}{}Naive}  & {\small{}{}{}99.19 }  & {\small{}{}{}89.14 }  & {\small{}{}--/-- }  & {\small{}{}59.66/83.28}\tabularnewline
 & {\small{}{}Adabound }  & {\small{}{}99.15}  & {\small{}{}87.89 }  & {\small{}{}--/-- }  & {\small{}{}--/--}\tabularnewline
 & {\small{}{}Adagrad }  & {\small{}{}99.02}  & {\small{}{}88.17 }  & {\small{}{}--/-- }  & {\small{}{}--/--}\tabularnewline
 & {\small{}{}Amsgrad }  & {\small{}{}99.14}  & {\small{}{}88.68 }  & {\small{}{}--/-- }  & {\small{}{}--/--}\tabularnewline
 & {\small{}{}Radam }  & {\small{}{}99.08}  & {\small{}{}88.44 }  & {\small{}{}--/-- }  & {\small{}{}--/--}\tabularnewline
\hline 
\multirow{3}{*}{\emph{\small{}{}}\textcolor{black}{\emph{\small{}DessiLBI}}{\small{}{} }} & {\small{}{}Naive}  & {\small{}{}99.02 }  & {\small{}{}89.26 }  & {\small{}{}55.06/77.69 }  & {\small{}{}65.26/86.57 }\tabularnewline
 & {\small{}{}Mom }  & {\small{}{}{}99.19 }  & {\small{}{}{}89.72 }  & {\small{}{}56.23/78.48 }  & {\small{}{}68.55/87.85}\tabularnewline
 & {\small{}{}Mom-Wd }  & {\small{}{}{}99.20 }  & {\small{}{}{}89.95 }  & \textbf{\small{}{}57.09/79.86 }{\small{} } & \textbf{\small{}{}{}{}70.55/89.56}\tabularnewline
\cline{1-6} \cline{3-6} \cline{4-6} \cline{5-6} \cline{6-6} 
\end{tabular}\tabularnewline
\end{tabular}
\par\end{centering}
\begin{centering}
 
\par\end{centering}
{\small{}{}\caption{\label{table:supervised_imagenet_mnist_cifar} Top-1/Top-5 accuracy(\%) on ImageNet-2012
and test accuracy on MNIST/Cifar-10. $^{\star}$: results from the
official pytorch website. We use the official pytorch codes to run
the competitors. All models are trained by 100 epochs. In this table,
we run the experiment by ourselves except for SGD Mom-Wd on ImageNet
which is reported in https://pytorch.org/docs/stable/torchvision/models.html. }
} 
\end{table*}

\textbf{Experimental Design}. We compare different variants of SGD and Adam
in the experiments. By default, the learning rate of competitors is
set as $0.1$ for SGD and its variant and $0.001$ for Adam and its
variants, and gradually decreased by 1/10 every 30 epochs. In particular,
we have,

SGD: (1) Naive SGD: the standard SGD with batch input. (2) SGD with
$\mathit{l}_{1}$ penalty (Lasso). The $\mathit{l}_{1}$ norm is applied to
penalize the weights of SGD by encouraging the sparsity of learned
model, with the regularization parameter of the $\mathit{l}_{1}$ penalty term being set as 
$1e^{-3}$ %\textcolor{red}{This variant is a direct extension of using Lasso-like loss function to optimize deep model, though there is no theoretical guarantee about the convergence of this variant.}
(3) SGD with momentum (Mom): we utilize momentum 0.9 in SGD. (4) SGD
with momentum and weight decay (Mom-Wd): we set the momentum 0.9 and
the standard $\mathit{l}_{2}$ weight decay with the coefficient weight
$1e^{-4}$. (5) SGD with Nesterov (Nesterov): the SGD uses nesterov
momentum 0.9.

Adam: (1) Naive Adam: it refers to the standard version of Adam. We
report the results of several recent variants of Adam, including (2) Adabound,
(3) Adagrad, (4) Amsgrad, and (5) Radam.

The results of image classification are shown in Tab.~ \ref{table:supervised_imagenet_mnist_cifar} . It  shows the experimental results on ImageNet-2012, Cifar-10, and MNIST of some classical networks -{}- LeNet, AlexNet and ResNet. 
 Our DessiLBI  variants may achieve comparable or even better performance than SGD variants in 100 epochs, indicating the efficacy in learning dense, over-parameterized models. The visualization of learned ResNet-18 on ImageNet-2012 is given in Fig. \ref{fig:imagenet-vis}.

\begin{figure}[htb]
	\centering%
	\begin{tabular}{c}
		\hspace{-0.2in}\includegraphics[width=6in]{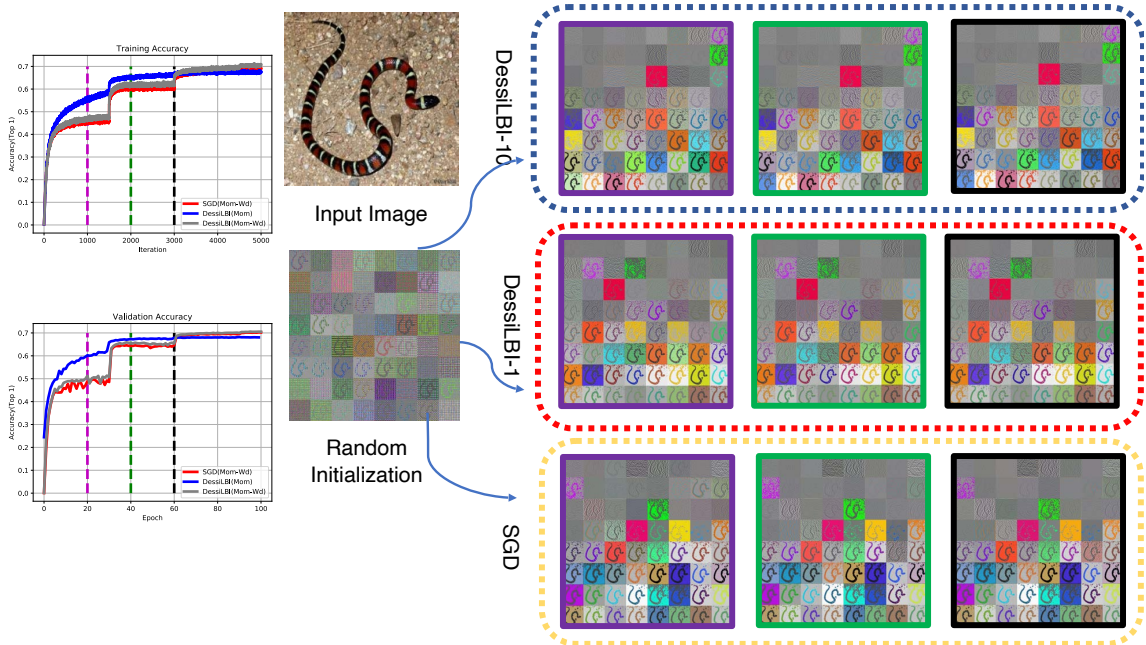}\tabularnewline
	\end{tabular}%\caption*{(c)Validation Accuracy}
	\caption{\label{fig:imagenet-vis} Visualization of the first convolutional layer filters of ResNet-18 trained on ImageNet-2012. Given the input
		image and initial weights visualized in the middle, filter response gradients
		at 20 (purple), 40 (green), and 60 (black) epochs are visualized by \cite{springenberg2014striving}. }  %The "SLBI-10" ("SLBI-1") in the right figure refers to DessiLBI  with $\kappa = 10$ and $\kappa = 1$, respectively.
\end{figure}

\subsection{Ablation Study of VGG16 and ResNet56 on Cifar10 \label{subsec:SplitLBI-in-training-2}}
To further study the influence of hyperparameters, we record performance
of $W_t$ for each epoch $t$ with different combinations of hyperparameters. The experiments is conducted
5 times each, we show the mean in the table, the standard error can
be found in the corresponding figure. We perform experiments on Cifar10
and two commonly used network VGG16 and ResNet56.

\begin{figure}
\centering
 \includegraphics[width=6.5in]{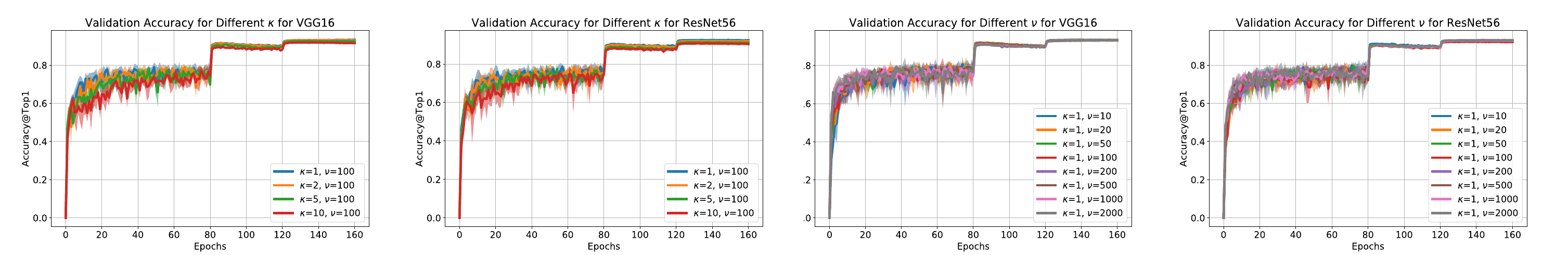}
%\caption*{(a) Training Loss}
  \caption{Validation curves of dense models $W_t$ for different $\kappa$ and $\nu$.   \label{figure:ablation} For DessiLBI we find that the model accuracy is robust to the hyperparameters both in terms of convergence rate and generalization ability. Here validation accuracy means the accuracy on test set of Cifar10. The first one is the result for VGG16 ablation study on $\kappa$, the second one is the result for ResNet56 ablation study on $\kappa$,  the third one is the result for VGG16 ablation study on $\nu$ and the forth one is the result for ResNet56 ablation study on $\nu$.  }
\end{figure}

On $\kappa$ , we keep $\nu=100$ and try $\kappa=1,2,5,10$, the
validation curves of models $W_t$ are shown in Fig. \ref{figure:ablation} and Table \ref{table:ablation_k} summarizes the mean accuracies. Table \ref{table:sparsek} summarizes best validation accuracies achieved at some epochs, together with their sparsity rates. These results show that larger kappa leads to slightly lower validation accuracies, where
the numerical results are shown in Table \ref{table:ablation_k} . We can find that $\kappa=1$
achieves the best test accuracy.

\begin{table}
\begin{centering}
\begin{tabular}{c|c|c|c|c|c|c}
\hline 
Type & Model & $\kappa=1$ & $\kappa=2$ & $\kappa=5$ & $\kappa=10$ & SGD\tabularnewline
\hline 
\multirow{2}{*}{Full} & Vgg16 & \textcolor{black}{93.46} & \textcolor{black}{93.27} & \textcolor{black}{92.77} & \textcolor{black}{92.03} & 93.57\tabularnewline
\cline{2-7} \cline{3-7} \cline{4-7} \cline{5-7} \cline{6-7} \cline{7-7} 
 & ResNet56 & \textcolor{black}{92.71} & \textcolor{black}{92.18} & \textcolor{black}{91.50} & \textcolor{black}{90.92} & 93.08\tabularnewline
\hline 
\multirow{2}{*}{Sparse} & Vgg16 & \textcolor{black}{93.31} & \textcolor{black}{93.00} & \textcolor{black}{92.36} & \textcolor{black}{76.25} & -\tabularnewline
\cline{2-7} \cline{3-7} \cline{4-7} \cline{5-7} \cline{6-7} \cline{7-7} 
 & ResNet56 & \textcolor{black}{92.37} & \textcolor{black}{91.85} & \textcolor{black}{89.48} & \textcolor{black}{87.02} & -\tabularnewline
\hline 
\end{tabular}
\par\end{centering}
\caption{This table shows results for different $\kappa$, the results are all the best test accuracy.  Here we test two widely-used models: VGG16 and ResNet56 on Cifar10.  For results in this table, we keep $\nu=100$. Full means that we use the trained model weights directly, Sparse means the model weights are combined with mask generated by $\Gamma$ support. Sparse result has no finetuning process, the result is comparable to its Full counterpart. For this experiment, we propose that $\kappa=1$ is a good choice. For all the model, we train for 160 epochs with initial learning rate (lr) of 0. 1 and decrease by 0.1 at  epoch 80 and 120.\label{table:ablation_k}}
\end{table}

\begin{table}
\begin{centering}
\begin{tabular}{c|c|c|c|c|c|c|c|c|c}
\hline 
\multicolumn{2}{c|}{{\small{}Model}} & \multicolumn{2}{c|}{{\small{}Ep20}} & \multicolumn{2}{c|}{{\small{}Ep40}} & \multicolumn{2}{c|}{{\small{}Ep80}} & \multicolumn{2}{c}{{\small{}Ep160}}\tabularnewline
\hline 
\multirow{5}{*}{{\small{}Vgg16}} & {\small{}Term} & {\small{}Sparsity} & {\small{}Acc} & {\small{}Spasity} & {\small{}Acc} & {\small{}Spasity} & {\small{}Acc} & {\small{}Spasity} & {\small{}Acc}\tabularnewline
\cline{2-10} \cline{3-10} \cline{4-10} \cline{5-10} \cline{6-10} \cline{7-10} \cline{8-10} \cline{9-10} \cline{10-10} 
 & {\small{}$\kappa=1$} & 96.62 & 71.51 & 96.62 & 76.92 & 96.63 & 77.48 & 96.63 & 93.31\tabularnewline
\cline{2-10} \cline{3-10} \cline{4-10} \cline{5-10} \cline{6-10} \cline{7-10} \cline{8-10} \cline{9-10} \cline{10-10} 
 & {\small{}$\kappa=2$} & 51.86 & 72.98 & 71.99 & 73.64 & 75.69 & 74.54 & 75.72 & 93.00\tabularnewline
\cline{2-10} \cline{3-10} \cline{4-10} \cline{5-10} \cline{6-10} \cline{7-10} \cline{8-10} \cline{9-10} \cline{10-10} 
 & {\small{}$\kappa=5$} & 8.19 & 10.00 & 17.64 & 34.25 & 29.76 & 69.92 & 30.03 & 92.36\tabularnewline
\cline{2-10} \cline{3-10} \cline{4-10} \cline{5-10} \cline{6-10} \cline{7-10} \cline{8-10} \cline{9-10} \cline{10-10} 
 & {\small{}$\kappa=10$} & 0.85 & 10.00 & 6.62 & 10.00 & 12.95 & 38.38 & 13.26 & 76.25\tabularnewline
\hline 
\multirow{5}{*}{{\small{}ResNet56}} & {\small{}Term} & {\small{}Sparsity} & {\small{}Acc} & {\small{}Spasity} & {\small{}Acc} & {\small{}Spasity} & {\small{}Acc} & {\small{}Spasity} & {\small{}Acc}\tabularnewline
\cline{2-10} \cline{3-10} \cline{4-10} \cline{5-10} \cline{6-10} \cline{7-10} \cline{8-10} \cline{9-10} \cline{10-10} 
 & {\small{}$\kappa=1$} & 96.79 & 73.50 & 96.87 & 75.27 & 96.69 & 77.47 & 99.68 & 92.37\tabularnewline
\cline{2-10} \cline{3-10} \cline{4-10} \cline{5-10} \cline{6-10} \cline{7-10} \cline{8-10} \cline{9-10} \cline{10-10} 
 & {\small{}$\kappa=2$} & 76.21 & 72.85 & 81.41 & 74.72 & 84.17 & 75.64 & 84.30 & 91.85\tabularnewline
\cline{2-10} \cline{3-10} \cline{4-10} \cline{5-10} \cline{6-10} \cline{7-10} \cline{8-10} \cline{9-10} \cline{10-10} 
 & {\small{}$\kappa=5$} & 36.58 & 60.43 & 53.07 & 76.00 & 57.48 & 75.67 & 57.74 & 89.48\tabularnewline
\cline{2-10} \cline{3-10} \cline{4-10} \cline{5-10} \cline{6-10} \cline{7-10} \cline{8-10} \cline{9-10} \cline{10-10} 
 & {\small{}$\kappa=10$} & 3.12 & 10.20 & 29.43 & 53.36 & 41.18 & 74.56 & 41.14 & 87.02\tabularnewline
\hline 
\end{tabular}
\par\end{centering}
\caption{Sparsity rate and validation accuracy for different $\kappa$ at different epochs. \label{table:sparsek} Here we pick the test accuracy for specific epoch. In this experiment, we keep $\nu=100$. We pick epoch 20, 40, 80 and 160 to show the growth of sparsity and sparse model accuracy. Here Sparsity is defined in Sec. \ref{sec:exp}, and Acc means the test accuracy for sparse model. A sparse model is a model at designated epoch $t$ combined with the mask as the support of $\Gamma_t$.}
\end{table}

On $\nu$ , we keep $\kappa=1$ and try $\nu=10,20,50,100,200,500,1000,2000$
the validation curve and mean accuracies are show in Fig. \ref{figure:ablation}  and Table \ref{table:ablation_n}. Table \ref{table:sparsen} summarizes best validation accuracies achieved at some epochs, together with their sparsity rates. By carefully tuning
$\nu$ we can achieve similar or even better results compared to SGD.
Different from $\kappa$, $\nu$ has less effect on the generalization
performance. By tuning it carefully, we can even get a sparse model
with slightly better performance than SGD trained model.

\begin{table}
\begin{centering}
\begin{tabular}{c|c|c|c|c|c|c|c|c|c|c}
\hline 
{\small{}Type} & {\small{}Model} & {\small{}$\nu=10$} & {\small{}$\nu=20$} & {\small{}$\nu=50$} & {\small{}$\nu=100$} & {\small{}$\nu=200$} & {\small{}$\nu=500$} & {\small{}$\nu=1000$} & {\small{}$\nu=2000$} & SGD\tabularnewline
\hline 
\multirow{2}{*}{{\small{}Full}} & {\small{}Vgg16} & 93.66 & 93.59 & 93.57 & 93.39 & 93.38 & 93.35 & 93.43 & 93.46 & 93.57\tabularnewline
\cline{2-11} \cline{3-11} \cline{4-11} \cline{5-11} \cline{6-11} \cline{7-11} \cline{8-11} \cline{9-11} \cline{10-11} \cline{11-11} 
 & {\small{}ResNet56} & 93.12 & 92.68 & 92.78 & 92.45 & 92.95 & 93.11 & 93.16 & 93.31 & 93.08\tabularnewline
\hline 
\multirow{2}{*}{{\small{}Sparse}} & {\small{}Vgg16} & 93.39 & 93.42 & 93.39 & 93.23 & 93.21 & 93.01 & 92.68 & 10 & -\tabularnewline
\cline{2-11} \cline{3-11} \cline{4-11} \cline{5-11} \cline{6-11} \cline{7-11} \cline{8-11} \cline{9-11} \cline{10-11} \cline{11-11} 
 & {\small{}ResNet56} & 92.81 & 92.19 & 92.40 & 92.10 & 92.68 & 92.81 & 92.84 & 88.96 & -\tabularnewline
\hline 
\end{tabular}
\par\end{centering}
\caption{Results for different $\nu$, the results are all the best test accuracy. Here we test two widely-used model : VGG16 and ResNet56 on Cifar10.  For results in this table, we keep $\kappa=1$. Full means that we use the trained model weights directly, Sparse means the model weights are combined with mask generated by $\Gamma$ support. Sparse result has no finetuning process, the result is comparable to its Full counterpart. For all the model, we train for 160 epochs with initial learning rate (lr) of 0.1 and decrease by 0.1 at  epoch 80 and 120.  \label{table:ablation_n}}
\end{table}

\begin{table}
\begin{centering}
\begin{tabular}{c|c|c|c|c|c|c|c|c|c}
\hline 
\multicolumn{2}{c|}{{\small{}Model}} & \multicolumn{2}{c|}{{\small{}Ep20}} & \multicolumn{2}{c|}{{\small{}Ep40}} & \multicolumn{2}{c|}{{\small{}Ep80}} & \multicolumn{2}{c}{{\small{}Ep160}}\tabularnewline
\hline 
\multirow{9}{*}{{\small{}Vgg16}} & {\small{}Term} & {\small{}Sparsity} & {\small{}Acc} & {\small{}Spasity} & {\small{}Acc} & {\small{}Spasity} & {\small{}Acc} & {\small{}Spasity} & {\small{}Acc}\tabularnewline
\cline{2-10} \cline{3-10} \cline{4-10} \cline{5-10} \cline{6-10} \cline{7-10} \cline{8-10} \cline{9-10} \cline{10-10} 
 & {\small{}$\nu=10$} & 96.64 & 71.07 & 96.64 & 77.70 & 96.65 & 79.46 & 96.65 & 93.34\tabularnewline
\cline{2-10} \cline{3-10} \cline{4-10} \cline{5-10} \cline{6-10} \cline{7-10} \cline{8-10} \cline{9-10} \cline{10-10} 
 & {\small{}$\nu=20$} & 96.64 & 69.11 & 96.64 & 77.63 & 96.65 & 77.08 & 96.65 & 93.42\tabularnewline
\cline{2-10} \cline{3-10} \cline{4-10} \cline{5-10} \cline{6-10} \cline{7-10} \cline{8-10} \cline{9-10} \cline{10-10} 
 & {\small{}$\nu=50$} & 96.64 & 74.91 & 96.65 & 74.21 & 96.65 & 79.15 & 96.65 & 93.38\tabularnewline
\cline{2-10} \cline{3-10} \cline{4-10} \cline{5-10} \cline{6-10} \cline{7-10} \cline{8-10} \cline{9-10} \cline{10-10} 
 & {\small{}$\nu=100$} & 96.64 & 74.82 & 96.64 & 73.22 & 96.64 & 78.09 & 96.64 & 93.23\tabularnewline
\cline{2-10} \cline{3-10} \cline{4-10} \cline{5-10} \cline{6-10} \cline{7-10} \cline{8-10} \cline{9-10} \cline{10-10} 
 & {\small{}$\nu=200$} & 91.69 & 73.67 & 94.06 & 74.67 & 94.15 & 75.20 & 94.15 & 93.21\tabularnewline
\cline{2-10} \cline{3-10} \cline{4-10} \cline{5-10} \cline{6-10} \cline{7-10} \cline{8-10} \cline{9-10} \cline{10-10} 
 & {\small{}$\nu=500$} & 18.20 & 10.00 & 59.94 & 67.88 & 82.03 & 78.69 & 82.32 & 93.01\tabularnewline
\cline{2-10} \cline{3-10} \cline{4-10} \cline{5-10} \cline{6-10} \cline{7-10} \cline{8-10} \cline{9-10} \cline{10-10} 
 & {\small{}$\nu=1000$} & 6.43 & 10.00 & 17.88 & 10.00 & 49.75 & 61.31 & 51.21 & 92.68\tabularnewline
\cline{2-10} \cline{3-10} \cline{4-10} \cline{5-10} \cline{6-10} \cline{7-10} \cline{8-10} \cline{9-10} \cline{10-10} 
 & {\small{}$\nu=2000$} & 0.22 & 10.00 & 6.89 & 10.00 & 18.15 & 10.00 & 19.00 & 10.00\tabularnewline
\hline 
\multirow{9}{*}{{\small{}ResNet56}} & {\small{}Term} & {\small{}Sparsity} & {\small{}Acc} & {\small{}Spasity} & {\small{}Acc} & {\small{}Spasity} & {\small{}Acc} & {\small{}Spasity} & {\small{}Acc}\tabularnewline
\cline{2-10} \cline{3-10} \cline{4-10} \cline{5-10} \cline{6-10} \cline{7-10} \cline{8-10} \cline{9-10} \cline{10-10} 
 & {\small{}$\nu=10$} & 99.97 & 73.37 & 99.95 & 71.64 & 99.74 & 76.46 & 99.74 & 92.81\tabularnewline
\cline{2-10} \cline{3-10} \cline{4-10} \cline{5-10} \cline{6-10} \cline{7-10} \cline{8-10} \cline{9-10} \cline{10-10} 
 & {\small{}$\nu=20$} & 99.97 & 72.58 & 99.84 & 74.16 & 99.69 & 72.37 & 99.72 & 92.19\tabularnewline
\cline{2-10} \cline{3-10} \cline{4-10} \cline{5-10} \cline{6-10} \cline{7-10} \cline{8-10} \cline{9-10} \cline{10-10} 
 & {\small{}$\nu=50$} & 99.96 & 70.72 & 99.89 & 73.96 & 99.79 & 74.93 & 99.77 & 92.40\tabularnewline
\cline{2-10} \cline{3-10} \cline{4-10} \cline{5-10} \cline{6-10} \cline{7-10} \cline{8-10} \cline{9-10} \cline{10-10} 
 & {\small{}$\nu=100$} & 96.31 & 73.63 & 96.63 & 75.79 & 96.55 & 72.94 & 96.57 & 92.10\tabularnewline
\cline{2-10} \cline{3-10} \cline{4-10} \cline{5-10} \cline{6-10} \cline{7-10} \cline{8-10} \cline{9-10} \cline{10-10} 
 & {\small{}$\nu=200$} & 91.98 & 75.30 & 94.38 & 72.13 & 94.87 & 73.75 & 94.88 & 92.68\tabularnewline
\cline{2-10} \cline{3-10} \cline{4-10} \cline{5-10} \cline{6-10} \cline{7-10} \cline{8-10} \cline{9-10} \cline{10-10} 
 & {\small{}$\nu=500$} & 74.44 & 65.58 & 90.00 & 74.12 & 92.96 & 71.91 & 92.99 & 92.81\tabularnewline
\cline{2-10} \cline{3-10} \cline{4-10} \cline{5-10} \cline{6-10} \cline{7-10} \cline{8-10} \cline{9-10} \cline{10-10} 
 & {\small{}$\nu=1000$} & 24.32 & 10.85 & 75.68 & 70.23 & 88.56 & 79.67 & 88.80 & 92.48\tabularnewline
\cline{2-10} \cline{3-10} \cline{4-10} \cline{5-10} \cline{6-10} \cline{7-10} \cline{8-10} \cline{9-10} \cline{10-10} 
 & {\small{}$\nu=2000$} & 0.65 & 10.00 & 26.66 & 13.30 & 74.98 & 70.38 & 75.92 & 88.95\tabularnewline
\hline 
\end{tabular}
\par\end{centering}
\caption{Sparsity rate and validation accuracy for different $\nu$ at different epochs. Here we pick the test accuracy for specific epoch. In this experiment, we keep $\kappa=1$. We pick epoch 20, 40, 80 and 160 to show the growth of sparsity and sparse model accuracy. Here Sparsity is defined in Sec. \ref{sec:exp} as the percentage of nonzero parameters, and Acc means the test accuracy for sparse model. A sparse model is a model at designated epoch $t$ combined with mask as the support of $\Gamma_t$. \label{table:sparsen}}
\end{table}

\section{Computational Cost of DessiLBI}

We further compare the computational cost of
different optimizers: SGD (Mom), DessiLBI (Mom) and Adam (Naive). We
test each optimizer on one GPU, and all the experiments are done on
one GTX2080. For computational cost, we judge them from two aspects : GPU memory usage and time needed for one batch. 
The batch size here is 64, experiment is performed on VGG-16 as shown in Table \ref{table:ccost}.

%For VGG-16 on Cifar-10 dataset, the mean batch time is $3.1e-3$, $8.7e-3$, $6.2e-3$
%(s) and GPU memory is $1161$, $1459$, $1267$ (MB) for SGD (Mom), SplitLBI (Mom) and Adam (Naive), respectively. 
%
%These results reflect the additional cost in computing and storing 
%the $\Gamma$ in Eq. (\ref{eq:slbi-iss-show-c}).
%
%For computational cost, we judge them from two aspects : GPU memory usage and time needed for one batch. For convenience, we test all the result on one GPU, the experiment is done on one  2080Ti, we will compare the computational cost of different models for three optimizers: SGD with momentum, SLBI with momentum and Adam. We train them just for one epoch. Just to compare the computation cost. The batch size is 64, experiment is performed on VGG-16 as shown in Table.\ref{table:ccost}

\begin{table}
\centering
\begin{tabular}{c| ccc}
\hline
optimizer & SGD &DessiLBI &Adam\\
\hline
Mean Batch Time & 0.0197 &0.0221 &0.0210 \\
\hline
GPU Memory & 1161MB &1459MB &1267MB\\
\hline
\end{tabular}
\caption{Computational and Memory Costs.\label{table:ccost}} 
\end{table}

\section{Fine-tuning of sparse subnetworks} 
\label{sec:sparse}

\begin{table}
\centering
\begin{tabular}{c|c|c|c}
\hline
Layer & FC1 & FC2 &FC3 \\
\hline
Sparsity &0.049 & 0.087 & 0.398 \\
Number of Weights & 235200 & 30000& 1000\\
\hline
\end{tabular}
\caption{This table shows the sparsity for every layer of Lenet-3. Here sparsity is defined in Sec. \ref{sec:exp}, number of weights denotes the total number of parameters in the designated layer. It is interesting that the $\Gamma$ tends to put lower sparsity on layer with more parameters. \label{table:layer sparsity_lenet}}
\end{table}

\begin{table}
\centering
\begin{tabular}{c|c|c|c|c|c}
\hline
Layer & Conv1 & Conv2 &FC1 & FC2&FC3 \\
\hline
Sparsity &0.9375& 1 &0.0067 & 0.0284 &0.1551 \\
Number of Weights & 576 & 36864& 3211264 & 65536 & 2560\\
\hline
\end{tabular}
\caption{This table shows the sparsity for every layer of Conv-2. Here sparsity is defined in Sec. \ref{sec:exp}, number of weights denotes the total number of parameters in the designated layer. The sparsity is more significant in fully connected (FC) layers than convolutional layers. \label{table:layer sparsity_conv_2}}
\end{table}

\begin{table}
\centering
\begin{tabular}{c|c|c|c|c|c|c|c}
\hline
Layer &  Conv1 & Conv2 &  Conv3 & Conv4 &FC1 & FC2 &FC3 \\
\hline
Sparsity &0.921875& 1 & 1 & 1 &  0.0040 & 0.0094 & 0.1004\\
Number of Weights & 576 & 36864& 73728 & 147456 & 1605632 &65536 & 2560 \\
\hline
\end{tabular}
\caption{This table shows the sparsity for every layer of Conv-4. Here sparsity is defined in Sec. \ref{sec:exp}, number of weights denotes the total number of parameters in the designated layer. Most of the convolutional layers are kept while the FC layers are very sparse. \label{table:layer sparsity_conv_4} }
\end{table}

%\textbf{Fine-tuning the sparse subnetwork $\Gamma_T$}. 
We design the experiment on MNIST, inspired by \cite{lottery-iclr19}. Here, we explore the subnet obtained by $\Gamma_T$ after $T=100$ epochs of training. As in \cite{frankle2019lottery}, we adopt the ``rewind'' trick: re-loading the subnet mask of $\Gamma_{100}$ at different epochs, followed by fine-tuning.
In particular, along the training paths, we reload the subnet models at Epoch 0, Epoch 30, 60, 90, and 100, and further fine-tune these models by DessiLBI (Mom-Wd). All the models use the same initialization and hence the subnet model at Epoch 0 gives the retraining with the same random initialization as proposed to find winning tickets of lottery in \cite{lottery-iclr19}. 
We will denote the rewinded fine-tuned model at epoch 0 as (Lottery), and those at epoch 30, 60, 90, and 100, as F-epoch30,
F-epoch60, F-epoch90, and F-epoch100, respectively. 
Three networks are studied here 
-- LeNet-3, Conv-2, and Conv-4. LeNet-3 removes one convolutional
layer of LeNet-5; and it is thus less over-parameterized than the
other two networks. Conv-2 and Conv-4, as the scaled-down variants
of VGG family as done in \cite{lottery-iclr19}, have two and
four fully-connected layers, respectively, followed by max-pooling after every two
convolutional layer.

The whole sparsity for Lenet-3 is 0.055, Conv-2 is 0.0185, and Conv-4 is 0.1378. Detailed sparsity for every layer of the model is shown in Table \ref{table:layer sparsity_lenet}, \ref{table:layer sparsity_conv_2}, \ref{table:layer sparsity_conv_4}. We find that fc-layers are sparser than conv-layers. 

We compare DessiLBI variants to the SGD (Mom-Wd) and SGD (Lottery) \citep{lottery-iclr19} in the same structural sparsity
and the results are shown in Fig. \ref{fig:finetunemnist}. In this exploratory experiment, one can see that for overparameterized networks -- Conv-2 and Conv-4, fine-tuned rewinding  subnets -- F-epoch30, F-epoch60, F-epoch90, and F-epoch100, can produce \emph{better}
results than the full models; while for the less over-parameterized model LeNet-3, fine-tuned subnets may achieve less yet still comparable performance to the dense models and remarkably better than the retrained sparse subnets from beginning (i.e. DessiLBI/SGD (Lottery)). These phenomena suggest that the subnet architecture disclosed by structural sparsity parameter $\Gamma_T$ is valuable, for fine-tuning sparse models with comparable or even better performance than the dense models of $W_T$. 

%\begin{wrapfigure}{r}{0.5\textwidth}
\begin{figure}
\centering %\begin{tabular}{cc}
\includegraphics[scale=0.65]{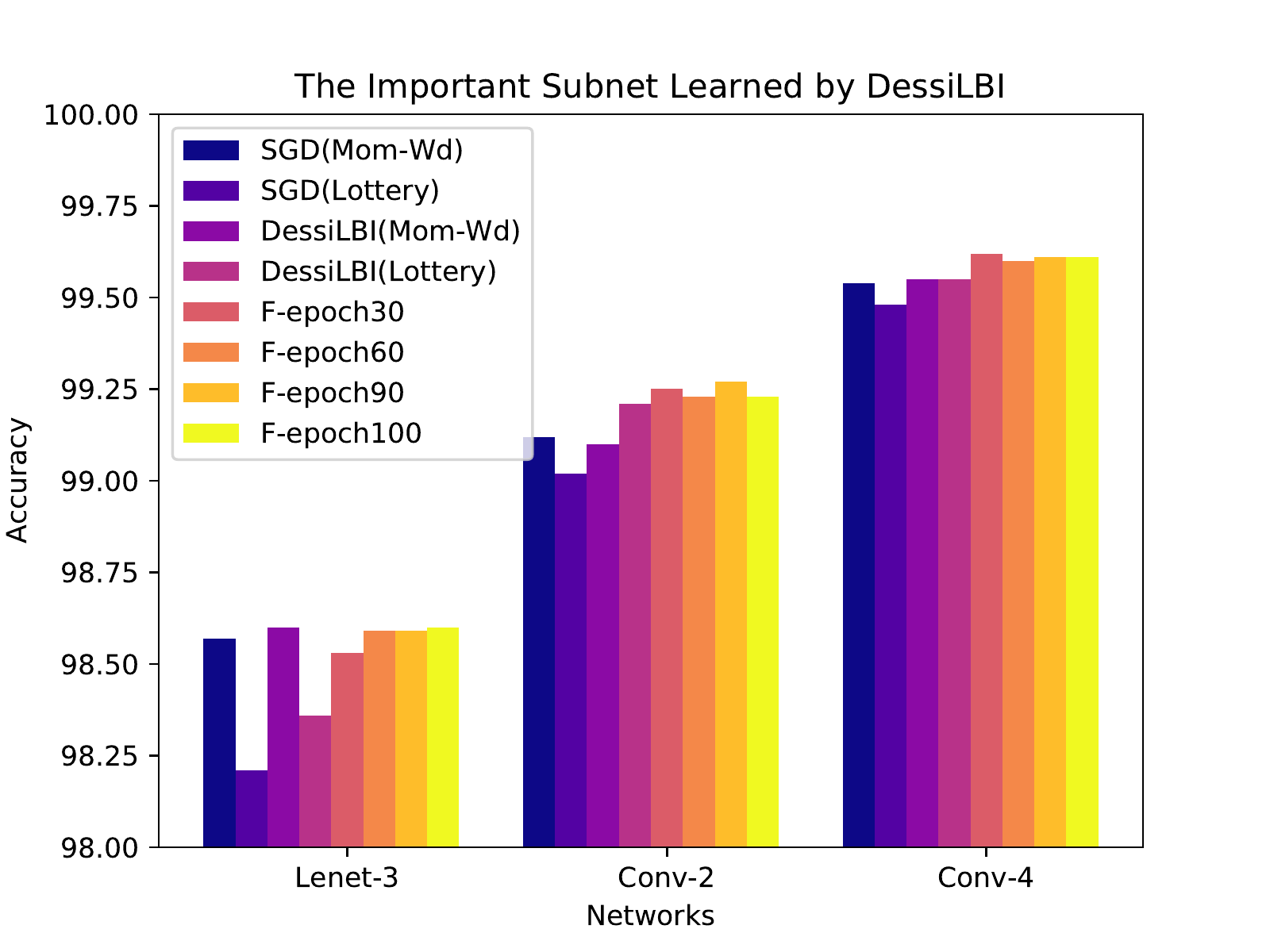} 
\caption{\label{fig:finetunemnist} Fine-tuning of sparse subnets learned by DessiLBI may achieve comparable or better performance than dense models. F-epoch$k$ indicates
the fine-tuned model comes from the Epoch $k$. DessiLBI (Lottery) and SGD (Lottery) use the same sparsity rate for each layer and the same initialization for 
retrain.}
\end{figure}
%\end{wrapfigure}
%In particular, we denote the network
%trained with the same initialization and trained with mask imported
%by $\Gamma$ support using SplitLBI as (Lottery).

\section{Retraining of sparse subnets found by DessiLBI (Lottery)}
Here we provide more details on the experiments in Fig. \ref{figure:lotteryticketacc}. Table \ref{table:lotterysetting} gives the details on hyper-parameter setting. Moreover, Figure \ref{figure:lotteryticketsparsity} provides the sparsity variations during DessiLBI training in Fig. \ref{figure:lotteryticketacc}. 

\begin{table}
\begin{centering}
{\small{}{}}%
\begin{tabular}{l|lllllllll}
\hline
Network   & Penalty   & Optimizer  & $\alpha$ & $\nu$ & $\kappa$ & $\lambda$ & Momentum & Nesterov \\ \hline
VGG-16    & Group Lasso & DessiLBI         & 0.1      & 100   & 1        & 0.1       & 0.9      & Yes      \\ \hline
ResNet-56 & Group Lasso & DessiLBI         & 0.1      & 100   & 1        & 0.05      & 0.9      & Yes      \\ \hline
VGG-16(Lasso)    & Lasso & DessiLBI         & 0.1      & 500   & 1        & 0.05      & 0.9      & Yes      \\ \hline
ResNet-50(Lasso) & Lasso & DessiLBI         & 0.1      & 200   & 1        & 0.03      & 0.9      & Yes      \\ \hline

\end{tabular}
\par\end{centering}
\begin{centering}
 
\par\end{centering}
\centering{}{\small{}{}\caption{\label{table:lotterysetting}Hyperparameter setting 
for the experiments in Figure \ref{figure:lotteryticketacc}.}
} 
\end{table}

\begin{figure}
\centering{}%
\begin{tabular}{cccc}
 \includegraphics[width=1.5in]{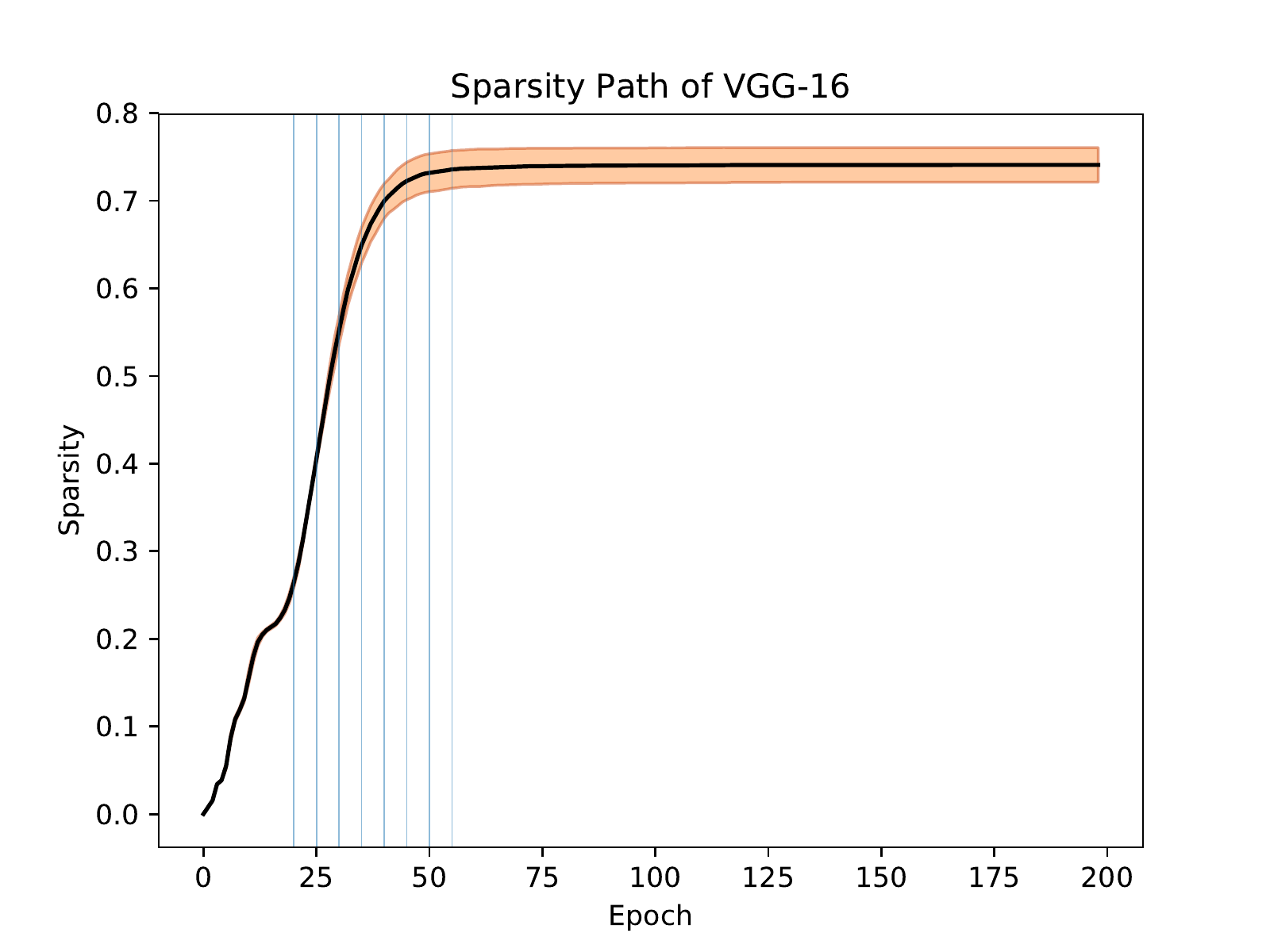} & 
 \includegraphics[width=1.5in]{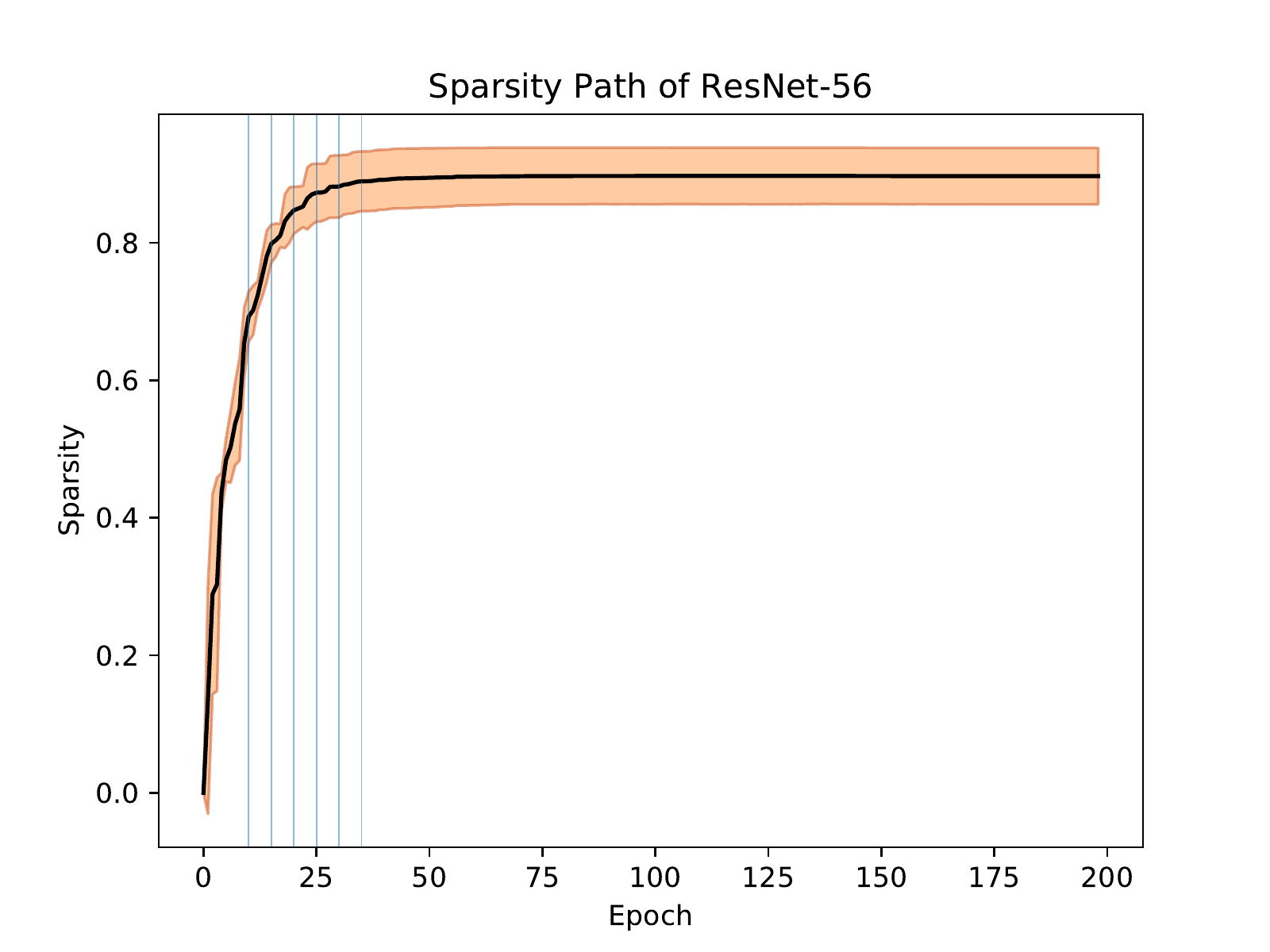} &
 \includegraphics[width=1.5in]{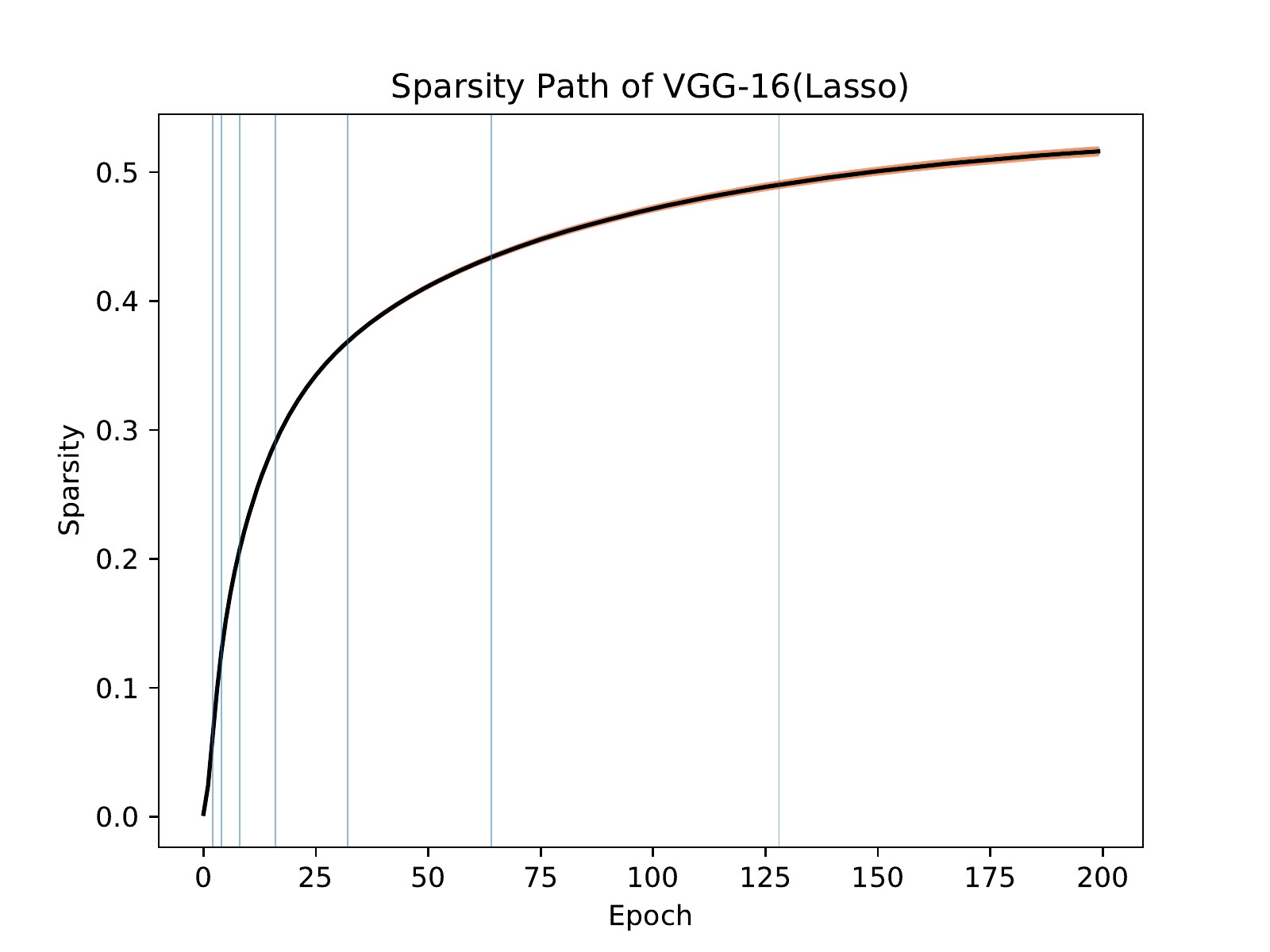} & 
 \includegraphics[width=1.5in]{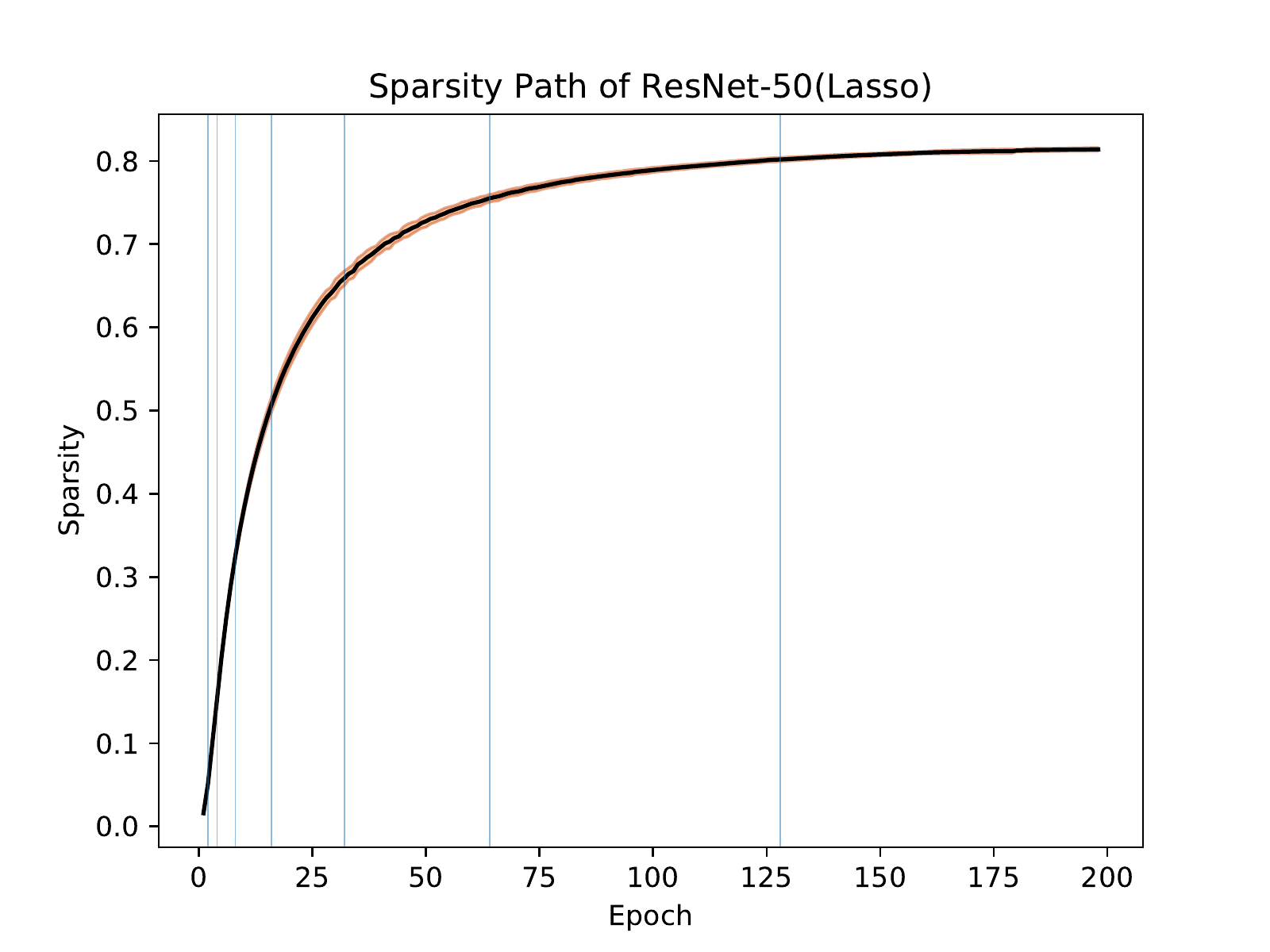} 
\tabularnewline
(a) VGG-16 & (b) ResNet-56 & (c) VGG-16 (Lasso) & (d) ResNet-50 (Lasso) \tabularnewline
\end{tabular} \caption{Sparsity changing during training process of DessiLBI (Lottery) for VGG and ResNets (corresponding to Fig. \ref{figure:lotteryticketacc}). We calculate the sparsity in every epoch and repeat five times. The black
curve represents the mean of the sparsity and shaded area shows the standard deviation of sparsity. The
vertical blue line shows the epochs that we choose to early stop. We choose the log-scale epochs for achieve larger range of sparsity. \label{figure:lotteryticketsparsity} }
\end{figure}

%\begin{figure}
%\centering{}%
%\begin{tabular}{cc}
%\includegraphics[width=2.5in]{figure/iclr/weight/epochVSsparsity_Sparsity_Path_of_VGG16} & \includegraphics[width=2.5in]{figure/iclr/weight/epochVSsparsity_Sparsity_Path_of_ResNet50}\tabularnewline
%(a) & (b) \tabularnewline
%\end{tabular}\caption{Sparsity changing during training process of finding SplitLBI(Lottery) of sub-networks
%from VGG-16 (Lasso) and ResNet-50 (Lasso)(corresponding to Fig. \ref{figure:lotteryticketfilteracc}). We calculate the sparsity in every epoch and repeat five times. The black
%curve represents the mean of the sparsity and colored area shows the standard deviation of sparsity. The
%vertical blue line shows the epochs that we choose to early stop. We choose the log-scale epochs for achieve larger range of sparsity. \label{figure:lotteryticketfilterpath} }
%\end{figure}
%

\end{document}